\documentclass{article} 
\usepackage{iclr2026_conference,times}

\usepackage{amsmath,amsfonts,bm}

\def\eqref#1{equation~\ref{#1}}

\def\1{\bm{1}}

\def\va{{\bm{a}}}

\def\vs{{\bm{s}}}

\def\vw{{\bm{w}}}
\def\vx{{\bm{x}}}

\DeclareMathAlphabet{\mathsfit}{\encodingdefault}{\sfdefault}{m}{sl}
\SetMathAlphabet{\mathsfit}{bold}{\encodingdefault}{\sfdefault}{bx}{n}

\DeclareMathOperator*{\argmax}{arg\,max}

\usepackage{hyperref}
\usepackage{url}

\usepackage{algorithm}
\usepackage{algorithmic}
\usepackage{amsmath}
\usepackage{amssymb}
\usepackage{amsthm}
\usepackage{bm}
\usepackage{booktabs}
\usepackage{multirow}
\usepackage{array}
\usepackage{ragged2e}
\usepackage{graphicx}
\usepackage{caption}
\usepackage{enumitem}
\usepackage{wrapfig}
\usepackage{subcaption}
\usepackage{makecell}
\usepackage[np]{numprint}
\npthousandsep{,}
\usepackage[labelformat=simple]{subcaption} 

\usepackage{hyperref}

\newcommand{\Reffig}[1]{Figure~\ref{#1}}

\newcommand{\Refalg}[1]{Algorithm~\ref{#1}}

\newcommand{\Reftab}[1]{Table~\ref{#1}}

\newtheorem{theorem}{Theorem}
\newtheorem{definition}{Definition}

\newtheorem{lemma}{Lemma}
\newtheorem{assumption}{Assumption}

\title{Beyond Penalization: Diffusion-based Out-of-Distribution Detection and Selective Regularization in Offline Reinforcement Learning}

\author{
Qingjun Wang$^{1}$, Hongtu Zhou$^{1}$, Hang Yu$^{1}$, Junqiao Zhao$^{1,2}$ \thanks{Corresponding Author} \\ 
\textbf{ Yanping Zhao$^{1}$, Chen Ye$^{1}$, Ziqiao Wang$^{1}$, Guang Chen$^{1,3}$} \\
$^{ 1}$School of Computer Science and Technology, Tongji University, Shanghai, China \\
$^{ 2}$MOE Key Lab of Embedded System and Service Computing, Tongji University, Shanghai, China \\
$^{ 3}$Shanghai Innovation Institute \\
\texttt{ \{2432069, zhouhongtu, 2432034, zhaojunqiao\}@tongji.edu.cn} \\
\texttt{ \{2534018, yechen, ziqiaowang, guangchen\}@tongji.edu.cn} \\
}
\iclrfinalcopy
\begin{document}

\maketitle

\begin{abstract}

Offline reinforcement learning (RL) faces a critical challenge of overestimating the value of out-of-distribution (OOD) actions. 
Existing methods mitigate this issue by penalizing unseen samples, yet they fail to accurately identify OOD actions and may suppress beneficial exploration beyond the behavioral support.
Although several methods have been proposed to differentiate OOD samples with distinct properties, they typically rely on restrictive assumptions about the data distribution and remain limited in discrimination ability.
To address this problem, we propose \textbf{DOSER} (\textbf{D}iffusion-based \textbf{O}OD Detection and \textbf{SE}lective \textbf{R}egularization), a novel framework that goes beyond uniform penalization. 
DOSER trains two diffusion models to capture the behavior policy and state distribution, using single-step denoising reconstruction error as a reliable OOD indicator. 
During policy optimization, it further distinguishes between beneficial and detrimental OOD actions by evaluating predicted transitions, selectively suppressing risky actions while encouraging exploration of high-potential ones.
Theoretically, we prove that DOSER is a $\gamma$-contraction and therefore admits a unique fixed point with bounded value estimates. 
We further provide an asymptotic performance guarantee relative to the optimal policy under model approximation and OOD detection errors.
Across extensive offline RL benchmarks, DOSER consistently attains superior performance to prior methods, especially on suboptimal datasets.
\end{abstract}

\section{Introduction}

Offline reinforcement learning (RL) has emerged as a powerful paradigm for learning policies exclusively from static datasets, eliminating the need for potentially costly or risky online interactions~\citep{levine2020offline}. 
This capability renders it particularly appealing for real-world domains where exploration is constrained, such as robotics, healthcare and autonomous systems. 
However, directly applying standard off-policy RL algorithms to offline dataset pose a fundamental challenge of \emph{distribution shift}.
When the learned policy generates actions that deviate substantially from the training data distribution, value functions tend to extrapolate erroneously, leading to severe value overestimation and ultimately catastrophic performance degradation~\citep{fujimoto2019off}. 

Existing approaches fall into two categories:
1) Policy constraint methods enforce the learned policy remain close to the behavior policy to avoid OOD actions~\citep{kumar2019stabilizing,wu2019behavior,fujimoto2021minimalist,kostrikov2021offline}, typically relying on variational auto-encoders (VAEs)~\citep{kingma2013auto} for behavior modeling. 
While effective in principle, these methods struggle to capture the multi-modal nature of real-world behaviors, often collapsing diverse action distributions into suboptimal averaged outputs within low-density regions~\citep{wang2022diffusion}.
2) Value regularization methods offer an alternative by learning conservative Q-functions that penalize OOD actions~\citep{kumar2020conservative,wu2021uncertainty,bai2022pessimistic,mao2023supported}.
Their effectiveness depends on the underlying OOD identification mechanism, which is a challenging task due to the limited representation capacity of the models used to characterize data distribution. 
Furthermore, they usually apply uniform penalties across the entire out-of-support region, without considering valuable explorations that could enhance policy performance (\Reffig{fig:teaser}, left). 

Recent efforts have sought to mitigate excessive pessimism by controlling the level of conservatism in a fine-grained manner. 
CCVL~\citep{hong2022confidence} conditions the Q-function on a confidence level to learn a spectrum of conservative value estimates, enabling adaptive policies that dynamically adjust pessimism during online evaluation.
ACL-QL~\citep{wu2024acl} models the behavior policy as a Gaussian distribution and introduces learnable weighting functions to adaptively modulate conservatism at the state-action level.
DoRL-VC~\citep{huang2024pessimism} employs a VAE-based detector to separate OOD from ID actions, and further distinguish OOD actions with different properties.
Nevertheless, such approaches either rely on Q-ensemble learning to achieve varying degrees of conservatism, incurring additional training overhead, or inherits strong Gaussian assumptions regarding the behavior policy, which fundamentally limit their ability to reliably identify OOD samples.

\begin{figure}[t]
    \centering
    \includegraphics[width=1\textwidth]{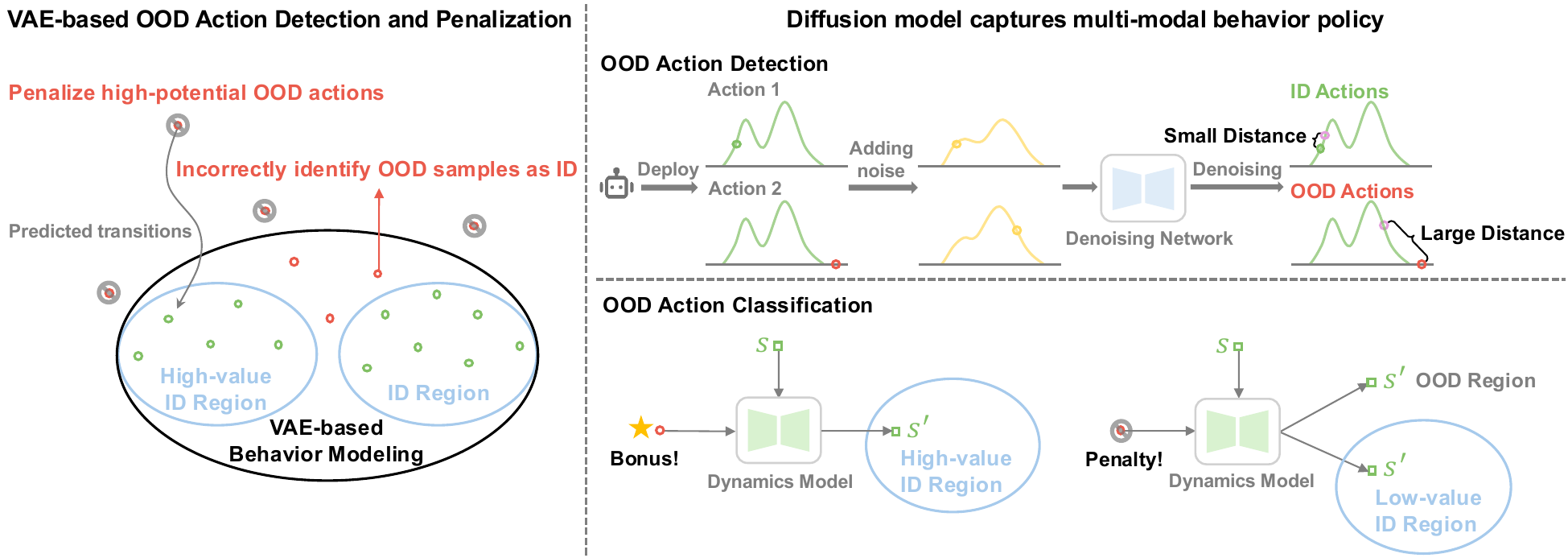} 
    \vspace{-18pt}
    \caption{VAE-based behavior modeling methods (left) misidentify OOD actions, whereas uniform penalties suppress high-potential OOD actions. 
    DOSER (right) models multi-modal behavior policy via diffusion model and uses reconstruction error as an OOD indicator, further distinguishing detrimental from beneficial actions for selective regularization.}
    \label{fig:teaser}
\end{figure}


To address these challenges, we present \textbf{DOSER} (\textbf{D}iffusion-based \textbf{O}OD Detection and \textbf{SE}lective \textbf{R}egularization), advancing OOD handling through two key innovations (\Reffig{fig:teaser}, right). 
First, we utilize diffusion models to achieve OOD detection. 
By deploying two diffusion models for behavior policy approximation and state distribution modeling, we establish reconstruction errors as theoretically rigorous metrics, avoiding strong Gaussian assumptions while maintaining well-calibrated detection performance.
Second, we introduce an adaptive discrimination mechanism that goes beyond binary classification of in-distribution (ID) and OOD. 
By integrating a learned dynamics model, we distinguish between beneficial OOD actions (those with potential to improve performance while staying within state distribution) and detrimental OOD actions (those likely to induce state distribution shift or value degradation). 
This fine-grained discrimination enables selective regularization, discouraging hazardous actions while encouraging promising explorations, which yields a robust framework that maintains necessary conservatism while facilitating policy improvement.

The key contributions of this paper are as follows: 1) We propose a diffusion-based approach for OOD detection in offline RL, using reconstruction error as a theoretically grounded metric. 2) We introduce a dual regularization strategy that adaptively adjusts its treatment of OOD actions based on predicted outcomes, suppressing detrimental actions while encouraging beneficial ones. 3) Extensive experiments on D4RL benchmarks demonstrate superior or competitive performance compared to prior methods, with detailed ablations verifying the effectiveness of each component.


\section{Preliminary}

\noindent\textbf{Offline RL}.
We consider the RL problem formulated by the Markov Decision Process (MDP), which is defined as a tuple $(\mathcal{S}, \mathcal{A}, \mathcal{P}, R, \gamma, d_0)$, with state space $\mathcal{S}$, action space $\mathcal{A}$, transition dynamics $P: \mathcal{S} \times \mathcal{A} \times \mathcal{S} \xrightarrow{} [0, 1]$, reward function $R: \mathcal{S} \times \mathcal{A} \xrightarrow{} [R_{\mathrm{min}}, R_{\mathrm{max}}]$, discount factor $\gamma \in [0,1)$, and initial state distribution $d_0: \mathcal{S} \xrightarrow{} [0, 1]$~\citep{sutton1998reinforcement}. 
The goal of RL is to learn a policy $\pi: \mathcal{S} \xrightarrow{} \Delta(\mathcal{A})$ that maximizes the expected discounted return $J(\pi) = \mathbb{E}[\sum_{t=0}^\infty \gamma^t R(\vs_t, \va_t)]$. 
For any policy $\pi$, we define the value function as $V^\pi(\vs) = \mathbb{E}[\sum_{t=0}^\infty \gamma^t R(\vs_t, \va_t)|\vs_0=\vs]$, and the Q-function as $Q^\pi(\vs) = \mathbb{E}[\sum_{t=0}^\infty \gamma^t R(\vs_t, \va_t)|\vs_0=\vs, \va_0=\va]$.
Given that rewards are bounded, the Q-function must lie between $Q_{\mathrm{min}} = R_{\mathrm{min}}/(1-\gamma)$ and $Q_{\mathrm{max}} = R_{\mathrm{max}}/(1-\gamma)$.
In offline RL, the agent is limited to learn from a static dataset $\mathcal{D} = \{(\vs, \va, r, \vs')\}$ collected by a behavior policy $\pi_\beta$, without any interaction with the environment~\citep{lange2012batch}. We denote the empirical behavior policy as $\hat\pi_\beta$, which depicts the conditional action distribution observed in $\mathcal{D}$.

\noindent\textbf{Diffusion Models}.
Diffusion models~\citep{sohl2015deep,ho2020denoising,song2020score} have emerged as a powerful class of generative models that excel in capturing complex data distributions. 
The core idea revolves around a forward diffusion process that gradually perturbs data into noise and a reverse process that learns to reconstruct the original data. 
Given a clean sample $\vx_0 \sim p_{\mathrm{data}}(\vx_0)$ with standard deviation $\sigma_{\mathrm{data}}$, the forward process constructs a sequence of increasingly noisy samples $\vx_t \sim p(\vx_t; \sigma_t)$ by adding i.i.d. Gaussian noise with standard deviation $\sigma_t$ that increases along the schedule $\sigma_{\mathrm{min}} = \sigma_0 < \sigma_1 < \cdots < \sigma_{\mathrm{N}} = \sigma_{\mathrm{max}}$.
Commonly, $\sigma_{\mathrm{min}}$ is chosen sufficiently small that $p_{\mathrm{min}}(\vx) \approx p_{\mathrm{data}}(\vx)$, while $\sigma_{\mathrm{max}}$ is large enough that the final distribution approximates isotropic Gaussian noise, i.e., $p_{\mathrm{max}}(\vx) \approx \mathcal{N}(\vx; 0,\sigma_{\mathrm{max}}^2 \bm{I})$.

In the original DDPM~\citep{ho2020denoising} formulation, this process is modeled as a discrete Markov chain. Subsequent works reinterpret it through the lens of stochastic differential equations (SDEs)~\citep{song2020score}, describing the evolution of $\vx_t$ over continuous time $t \in [0,T]$ as:
\begin{equation}
    d\vx_t = f(\vx_t, t)\,dt + g(t)\,d\vw_t
\end{equation}
where $f(\cdot, t)$ and $g(t)$ are the drift and diffusion coefficients, and $\vw_t$ is a standard Wiener process. 


The EDM framework~\citep{karras2022elucidating} refines this paradigm by reparameterizing the diffusion path with differentiable noise schedules $\sigma(t)$. 
The reverse process is governed by a corresponding probability-flow ODE derived from the forward SDE, which is formulated as:
\begin{equation}
    d\vx_t = -\dot{\sigma}(t)\sigma(t) \nabla_{\vx_t} \log p_t(\vx_t) dt
\end{equation}
where $\dot{\sigma}(t) = \frac{d\sigma}{dt}$ is the time derivative of noise schedule controlling the noise change rate, $\nabla_{\vx_t} \log p_t(\vx_t)$ is the score function of the marginal distribution $p_t(\vx_t)$. 
The score is approximated by a neural network $\bm{\epsilon}_\theta(\vx_t; \sigma_t)$ trained via denoising score matching~\citep{vincent2011connection}.
The denoising model $\bm{\epsilon}_\theta$ is trained to predict the true clean sample $\vx_0$ from its noisy version $\vx_t = \vx_0 + \sigma_t \bm{\epsilon}$ by minimizing the reweighted $L_2$ loss:
\begin{equation}
    \mathcal{L}(\theta) = \mathbb{E}_{\sigma_t,\vx_0\sim p(\vx_0),\bm{\epsilon} \sim \mathcal{N}(0,\bm{I})} \left[ \lambda(\sigma_t) || \vx_0 - \bm{\epsilon}_\theta(\vx_t,\sigma_t) ||_2^2 \right],
\end{equation}
where $\lambda(\sigma_t)$ is the loss weight. 
Compared to the original DDPM that requires thousands of denoising steps, EDM accelerates sampling by introducing optimized noise schedules and higher-order ODE solvers, achieving high-quality generation within only a few dozen steps.


\section{Diffusion-based OOD Detection and Selective Regularization}

In this section, we present the technical framework of DOSER. 
We begin by introducing three main components that enable precise detection and classification of OOD actions, then demonstrate the complete integration of these components into a unified algorithmic framework, detailing the practical implementation.
\Reffig{fig:overview} provides an overview of the proposed method. For comprehensive theoretical analysis, please refer to Appendix~\ref{app:theory}.

\begin{figure}[t]
    \centering
    \includegraphics[width=1\textwidth]{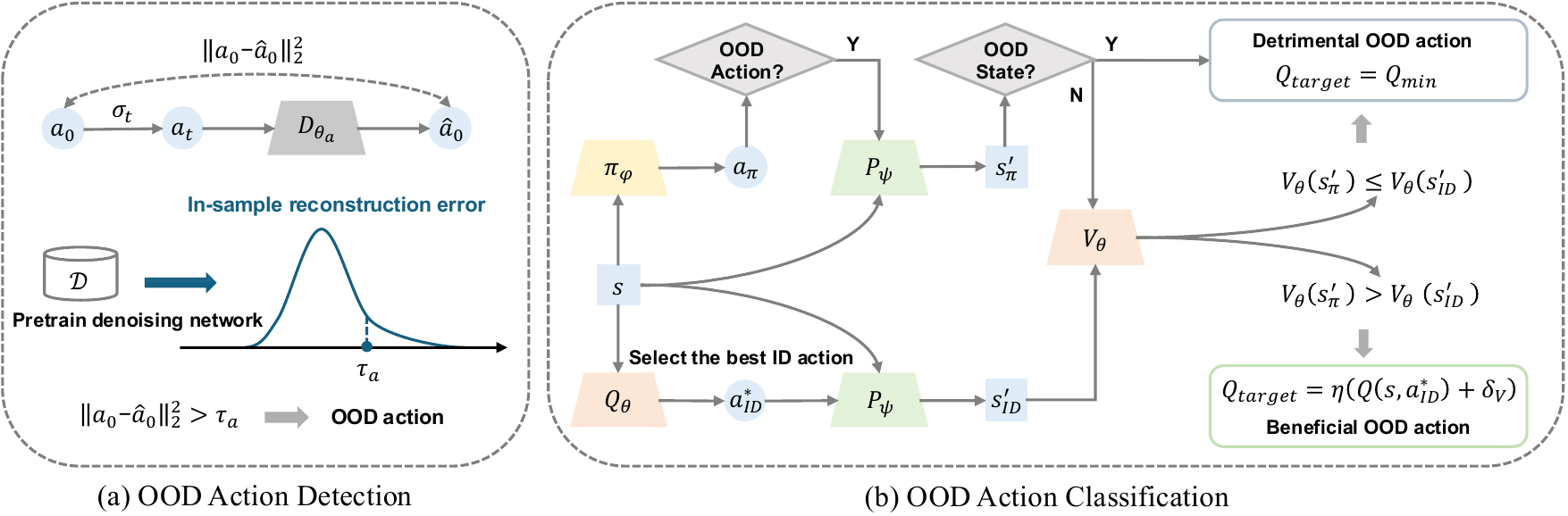} 
    \vspace{-15pt}
    \caption{Overview of the proposed method: (a) Diffusion-based OOD action detection, (b) Integrating the detector to achieve OOD action classification.}
    \label{fig:overview}
\end{figure}

\subsection{Diffusion-based Behavior And State Modeling}

The foundation of our approach is to establish two diffusion models that jointly capture the underlying distributions of the offline dataset. 
We first construct a conditional diffusion model that learns the empirical behavior policy distribution $\hat\pi_\beta(\va|\vs)$ by training a denoising network $\bm{\epsilon}_{\theta_a}(\va_t, \sigma_t, \vs)$ to reconstruct the original action $\va_0$ through the following optimization objective:
\begin{equation}\label{eq:diffusion_behavior_model}
    \mathcal{L}({\theta_a}) = \mathbb{E}_{\sigma_t, (\vs,\va_0) \sim \mathcal{D}, \bm{\epsilon} \sim \mathcal{N}(0, \bm{I})} \left[ \lambda(\sigma_t) || \va_0 - \bm{\epsilon}_{\theta_a}(\va_t, \sigma_t, \vs) ||_2^2 \right].
\end{equation}
where $\va_t = \va_0 + \sigma_t \bm{\epsilon}$ is the noisy action with noise scale $\sigma_t$, $\lambda(\sigma_t)$ balances loss scales across noise levels and $\bm{\epsilon} \sim \mathcal{N}(0,\bm{I})$. 

In parallel, we develop a diffusion model to capture the state distribution $d_0(\vs)$ of the dataset. 
The corresponding denoising network $\bm{\epsilon}_{\theta_s}(\vs_t, \sigma_t)$ is trained to recover the original states $\vs_0$ from its noisy version $\vs_t$, using the following reconstruction objective:
\begin{equation}\label{eq:diffusion_state_model}
    \mathcal{L}({\theta_s}) = \mathbb{E}_{\sigma_t, \vs \sim \mathcal{D}, \bm{\epsilon} \sim \mathcal{N}(0, \bm{I})} \left[ \lambda(\sigma_t) || \vs_0 - \bm{\epsilon}_{\theta_s}(\vs_t, \sigma_t) ||_2^2 \right].
\end{equation}

\subsection{OOD Detection via Reconstruction Error}

Our detection mechanism leverages the denoising capabilities of pretrained diffusion models to identify OOD samples based on reconstruction errors.  
Given a state-action pair $(\vs,\va_0)$ encountered during policy optimization, we compute its OOD score through a two-step procedure.  

First, we sample a noise scale $\sigma_t$ from the training noise schedule and perturb the action as $\va_t = \va_0 + \sigma_t \bm{\epsilon}$, where $\bm{\epsilon} \sim \mathcal{N}(0,\bm{I})$.  
The OOD score is then defined as the $L_2$ reconstruction error between the original action and its denoised counterpart:
\begin{equation}\label{eq:action_recon_error}
    \mathcal{E}_a(\vs,\va_0) = \| \va_0 - \bm{\epsilon}_{\theta_a}(\va_t, \sigma_t, \vs) \|_2.
\end{equation}
Analogously, for state inputs, we measure the reconstruction error between the original state $\vs_0$ and its denoised version:
\begin{equation}\label{eq:state_recon_error}
    \mathcal{E}_s(\vs_0) = \| \vs_0 - \bm{\epsilon}_{\theta_s}(\vs_t, \sigma_t) \|_2,
\end{equation}
where $\vs_t$ denotes the noise-corrupted state.  

Formally, the OOD indicator functions are given by:
\begin{equation}
    \begin{aligned}
    \mathbb{I}_{\text{ood}}(\va_0) &= \{\mathcal{E}_a(\vs,\va_0) > \tau_a\}, \quad
    \mathbb{I}_{\text{ood}}(\vs_0) &= \{\mathcal{E}_s(\vs_0) > \tau_s\},
    \end{aligned}
\end{equation}
where the thresholds $\tau_a$ and $\tau_s$ are set as the $p$-th percentiles of the reconstruction errors on the training dataset $\mathcal{D}$, with $p$ controlling the level of conservatism.

This reconstruction-based method offers three key advantages:  
1) Reconstruction error provides a likelihood-free surrogate for distributional alignment, directly measuring conformity to the data manifold without explicit density estimation.  
2) Diffusion models naturally capture multi-modal distributions, avoiding the restrictive unimodal Gaussian assumptions of conventional approaches.  
3) Detection is efficient, requiring only a single forward pass per sample.  
Moreover, evaluating errors across multiple randomly sampled diffusion timesteps rather than a fixed noise level improves robustness, since different noise scales correspond to varying levels of information bottleneck in the data distribution.

\subsection{Adaptive OOD Action Classification}

Building on the detection framework, we introduce an adaptive classification mechanism to handle OOD actions during policy optimization. 
Unlike conventional methods that indiscriminately penalize all deviations, our approach distinguishes between \emph{beneficial} and \emph{detrimental} OOD actions through a two-stage assessment process.

For each policy-generated OOD action $\va_{\text{ood}}$ in state $\vs$, we first predict the subsequent state $\vs'_\pi$ using the learned dynamics model $p_\psi(\vs'|\vs,\va)$, pretrained via supervised learning on the offline dataset $\mathcal{D}$.  
Since value estimation for OOD states is inherently unreliable, we then evaluate the outcome of $\va_{\text{ood}}$ along two dimensions: 
1) Whether $\vs'_\pi$ lies outside the training distribution, determined by the proposed OOD detection mechanism; 
2) If $\vs'_\pi$ is in-distribution, whether $V(\vs'_\pi)$ exceeds $V(\vs'_{\text{id}})$, where $\vs'_{\text{id}}$ denotes the predicted next state after executing the optimal in-distribution action.

\begin{algorithm}[t]
    \caption{Diffusion-Based OOD Detection with selective regularization (DOSER)}
    \label{alg:DOSER}
    \begin{algorithmic}[0]
    \STATE Initialize Q-network $Q_\theta$, V-network $V_\theta$, diffusion behavior model $\bm{\epsilon}_{\theta_a}$, diffusion state model $\bm{\epsilon}_{\theta_s}$, policy network $\pi_\phi$, dynamics model $p_\psi$, and target networks $Q_{\theta'}$, $V_{\theta'}$, $\pi_{\phi'}$
    \STATE \textbf{\# Model Pretraining}
    \STATE Pretraining dynamics model $p_\psi$ by minimizing (~\ref{eq:dynamics_model})
    \STATE Pretraining diffusion models $\bm{\epsilon}_{\theta_a}$ and $\bm{\epsilon}_{\theta_s}$ by minimizing (~\ref{eq:diffusion_behavior_model}) and (~\ref{eq:diffusion_state_model})
    \STATE Calculate OOD detection thresholds $\tau_a$ and $\tau_s$ based on in-sample reconstruction error
    \FOR{each iteration}
        \STATE Sample transition minibatch $\{(\vs, \va, r, \vs')\}$ from $\mathcal{D}$
        \STATE \textbf{\# Critic Learning}
        \STATE Generate action $\va_\pi \sim \pi_\phi(\vs)$ and predict the next state $\vs'_\pi = p_\psi(\vs, \va_\pi)$
        \STATE Select the best ID action $\va_{\mathrm{id}}^*$ and predict the next state $\vs'_{\mathrm{id}} = p_\psi(\vs, \va_{\mathrm{id}}^*)$
        \STATE Calculate the reconstruction errors of policy action and next state by(~\ref{eq:action_recon_error}) and(~\ref{eq:state_recon_error})
        \STATE Calculate the adaptive bonus $\delta_V = V_\theta(\vs'_\pi) - V_\theta(\vs'_{\mathrm{id}})$
        \STATE Update $Q_\theta$ and $V_\theta$ by minimizing (~\ref{eq:critic_loss}) and (~\ref{eq:v_network})
        \STATE \textbf{\# Actor Learning}
        \STATE Update $\pi_\phi$ by minimizing (~\ref{eq:actor_loss})
        \STATE \textbf{\# Target Network Update}
        \STATE $\theta' \gets \rho \theta + (1 - \rho) \theta'$, $\phi' \gets \rho \phi + (1 - \rho) \phi'$
    \ENDFOR
    \end{algorithmic}
\end{algorithm}

Formally, the classification rule for OOD actions is given in Definition~\ref{def:ood_action_set}.

\begin{definition}[Beneficial and detrimental OOD action sets]\label{def:ood_action_set}
    Let the beneficial OOD action set $\mathcal{A}_{\mathrm{ood}}^+$ and the detrimental OOD action set $\mathcal{A}_{\mathrm{ood}}^-$ be subsets of the action space $\mathcal{A}$. Then:
    \begin{equation}
        \begin{aligned}
        \mathcal{A}_{\mathrm{ood}}^+ &:= \left\{ \va \in \mathcal{A} \;\middle|\; \mathcal{E}_s(\vs'_\pi) \leq \tau_s \;\wedge\; V(\vs'_\pi) \geq V(\vs'_{\mathrm{id}}) \right\}, \\
        \mathcal{A}_{\mathrm{ood}}^- &:= \left\{ \va \in \mathcal{A} \;\middle|\; \mathcal{E}_s(\vs'_\pi) > \tau_s \;\vee\; V(\vs'_\pi) < V(\vs'_{\mathrm{id}}) \right\},
        \end{aligned}
    \end{equation}
    where $\vs'_\pi \sim p_\psi(\cdot|\vs,\va_{\mathrm{ood}})$, $\vs'_{\mathrm{id}} \sim p_\psi(\cdot|\vs,\va_{\mathrm{id}}^*)$, 
    $\va_{\mathrm{id}}^* = \argmax_{\va \sim \pi_\beta(\cdot|\vs)} Q(\vs,\va) $ is the optimal in-distribution action at state $\vs$, $\mathcal{E}_s(\cdot)$ is the state reconstruction error defined in (\ref{eq:state_recon_error}), and $\tau_s$ is the state OOD threshold.
\end{definition}

Accordingly, detrimental OOD actions are penalized to mitigate overestimation.  
Conversely, to encourage exploration beyond dataset support, beneficial OOD actions receive an adaptive bonus $\delta_V = V(\vs'_\pi) - V(\vs'_{\mathrm{id}})$.
This compensates for extrapolation errors in value estimation and guides the policy towards high-value regions, even when Q-value estimates for OOD actions remain imperfect. 

Therefore, we minimize the following loss for policy evaluation:
\vspace{-1pt}
\begin{equation}\label{eq:critic_loss}
    \begin{aligned}
    \mathcal{L}(\theta) =
    &\; \mathbb{E}_{(\vs,\va,\vs')\sim\mathcal{D}} 
    \big[ 
    \underbrace{\left( Q_\theta(\vs,\va) - \left( R(\vs,\va) + \gamma \mathbb{E}_{\va' \sim \pi_\beta(\cdot|\vs)} [Q_{\theta'}(\vs', \va')] \right) \right)^2}_{\text{Standard Bellman error}}
    \big] \\
    &+ \beta \, \mathbb{E}_{\vs\sim\mathcal{D}, \, \va \sim \pi_\phi(\cdot|\vs)} 
    \big[
    \underbrace{
    \mathbb{I}(\va \in \mathcal{A}_{\mathrm{ood}}^-) \cdot \left(Q_\theta(\vs,\va) - Q_{\mathrm{min}} \right)^2
    }_{\text{Penalty for detrimental OOD actions}}
    \big] \\
    &+ \lambda \, \mathbb{E}_{\vs\sim\mathcal{D}, \, \va \sim \pi_\phi(\cdot|s)} 
    \big[
    \underbrace{
    \mathbb{I}(\va \in \mathcal{A}_{\mathrm{ood}}^+) \cdot \left(Q_\theta(\vs,\va) - \eta \left( Q_{\theta'}(\vs, \va^*_{\mathrm{id}}) + \delta_V \right) \right)^2
    }_{\text{Bonus for beneficial OOD actions}}
    \big]
    \end{aligned}
\end{equation}
where $Q_{\theta'}$ is the target Q-network, $Q_{\mathrm{min}}={R_{\mathrm{min}}}/{(1-\gamma)}$ is the theoretical minimal Q-value of the MDP. In practical implementation, we approximate $\va^*_{\mathrm{id}}$ as:
\begin{equation}
\hat\va_{\mathrm{id}}^* = \mathop{\mathrm{arg\,max}}\limits_{\va_i \sim \hat\pi_\beta(\cdot|\vs)} Q(\vs, \va_i) \quad \text{for} \quad i=1,\dots,N
\end{equation}
with $N=10$ empirically balancing computational cost and performance across all tasks.

\subsection{Practical Implementation}

In this section, we provide the practical implementation of our algorithm. \footnote{Code is available at \url{https://github.com/7ingw24/DOSER}.}

\textbf{Value Learning}.
Similar to IQL, we perform expectile regression to train the value network.
\begin{equation}\label{eq:v_network}
    \mathcal{L}(\theta) = \mathbb{E}_{(\vs,r,\vs')\sim\mathcal{D}} \left[ L^\tau_2 (r + \gamma V_{\theta'}(\vs') - V_\theta(\vs) \right]
\end{equation}
where $L^\tau_2 (u) = |\tau - \mathbb{I}(u<0)| u^2$ denotes the asymmetric $L_2$ loss, and $V_{\theta'}$ is the target V-network. 

\textbf{Dynamics Model}.
With the quadruples $(\vs,\va,\vs')$ in offline dataset $\mathcal{D}$, we train the dynamics model via supervised regression:
\begin{equation}\label{eq:dynamics_model}
    \mathcal{L}(\psi) = \mathbb{E}_{(\vs,\va,\vs') \sim \mathcal{D}} || p_\psi(\cdot|\vs,\va) - \vs' ||_2^2
\end{equation}

\textbf{Policy Learning}.
To enhance exploration, we optimize the actor network with maximum entropy regularization:
\begin{equation}\label{eq:actor_loss}
    \mathcal{L}(\phi) = \mathbb{E}_{\vs\sim\mathcal{D}, \va\sim\pi_\phi(\vs)} \left[\alpha \log\pi_\phi(\cdot|\vs) - Q_\theta(\vs, \va)\right]
\end{equation}
where $\alpha$ is dynamically adjusted to maintain target entropy.

\textbf{Overall Algorithm}.
Putting everything together, we summarize our implementation in \Refalg{alg:DOSER}. 

\section{Experiments}

\begin{figure}[t]
    \centering
    \includegraphics[width=1\textwidth]{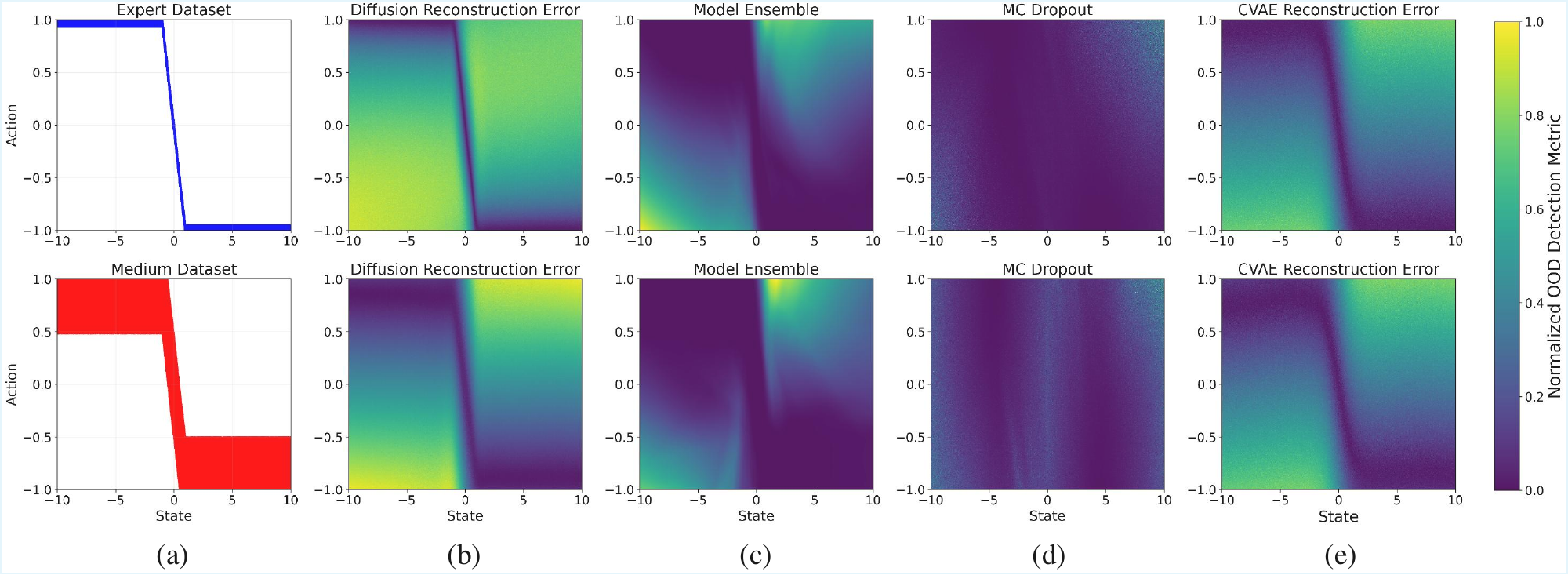}
    \vspace{-18pt} 
    \caption{OOD detection experiments on 1D navigation task, where a higher OOD detection metric (reconstruction error or uncertainty estimation) indicates a greater likelihood of being OOD. 
    (a) Two offline datasets with distinct data distributions: expert (top) and medium (bottom). 
    (b) OOD scores across the entire state-action space, evaluated using diffusion-based reconstruction error. (c) OOD scores based on model ensemble uncertainty. 
    (d) OOD scores based on MC dropout uncertainty. 
    (e) OOD scores derived from CVAE-based reconstruction error.}
    \label{fig:OOD_detection_comparison}
\end{figure}

In this section, we conduct a series of experiments to validate the effectiveness of our proposed method. We aim to answer the following key questions: 
1) Is diffusion-based reconstruction error better than existing approaches in detecting OOD samples?
2) How does DOSER perform on standard offline RL benchmarks compared to prior SOTA methods? 
3) Does each component in DOSER contribute meaningfully to the overall performance? 
4) How sensitive is DOSER to its key hyperparameter?
More experimental details and results are provided in Appendix~\ref{app:experimental_details} and~\ref{app:additional_results}.

\subsection{OOD Detection}

To evaluate the effectiveness of diffusion-based reconstruction error for OOD detection, we design a simple 1D navigation task, the discrete state-action space is defined over position $\vs\in[-10,10]$ and step size $\va\in[-1,1]$.
The reward function is defined as the negative distance to the target state $0$, such that rewards increase as the agent approaches the target. 
By perturbing optimal actions with noise of varying scales, we generate two offline datasets, \emph{expert} and \emph{medium}. 
We then compare our diffusion-based approach against three representative baselines:

\textbf{1) Model ensemble.} An ensemble of dynamics models is trained to capture epistemic uncertainty, with OOD samples identified based on high prediction variance across ensemble members. 

\textbf{2) MC dropout.} Monte Carlo dropout is applied during inference to approximate model uncertainty, where actions with high estimated uncertainty are flagged as OOD.

\textbf{3) CVAE-based reconstruction error.} A conditional VAE (CVAE) is trained to model the behavior distribution, and the reconstruction error is used as the OOD indicator.  

As shown in \Reffig{fig:OOD_detection_comparison}, our diffusion-based method effectively separates ID and OOD samples across the entire state-action space, whereas baseline methods fail to achieve reliable identification even in this simple setting. 
In particular, the model ensemble approach frequently misclassifies OOD samples as in-distribution due to its inability to disentangle epistemic and aleatoric uncertainty. 
Similarly, MC dropout tends to conflate these two sources of uncertainty, while also introducing undesirable stochasticity at inference. 
Although the CVAE-based reconstruction error baseline shows stronger discrimination than the other two methods, its performance primarily stems from reconstruction ability, while its limited capacity to model multi-modal distributions remains a fundamental limitation~\citep{wang2022diffusion}.
For more experimental details, please refer to Appendix~\ref{app:toy_experiment}.

\begin{figure}[t]
    \centering
    \includegraphics[width=0.9\textwidth]{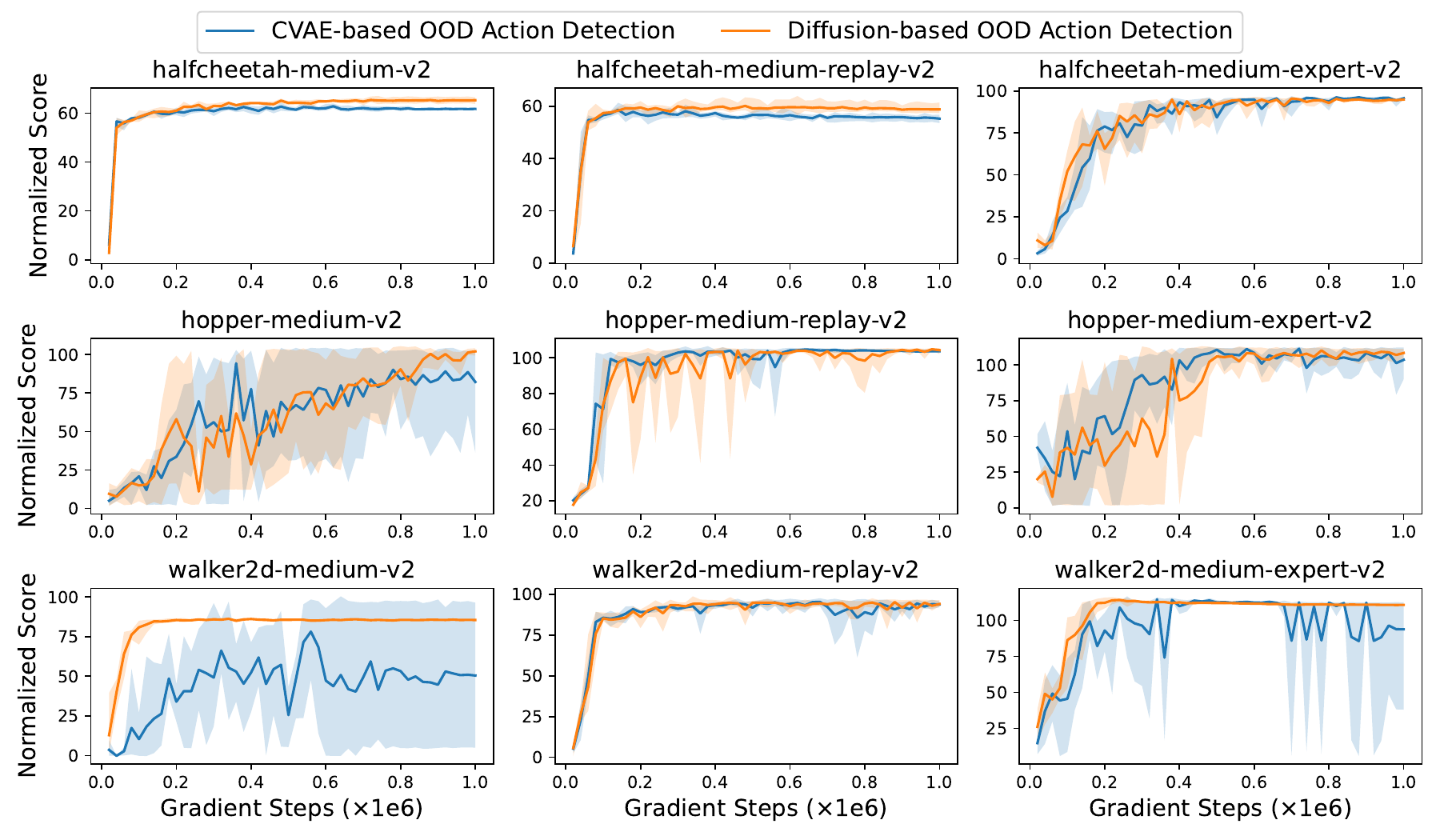} 
    \vspace{-10pt}
    \caption{Comparison of OOD action detection performance between CVAE-based reconstruction error and the proposed diffusion-based method in the D4RL MuJoCo domain.}
    \label{fig:cvae_ood_detection}
\end{figure}

We further compare the OOD action detection performance using CVAE-based reconstruction error with our proposed diffusion-based approach in the D4RL MuJoCo domain~\citep{fu2020d4rl}, with results presented in \Reffig{fig:cvae_ood_detection}. 
Both methods rely solely on reconstruction error as the detection metric, without incorporating any additional classification or compensation.
As illustrated, the CVAE-based method struggles to reliably identify OOD samples in high-dimensional continuous control tasks, which is attributed to its tendency to produce over-smoothed reconstructions, thus diminishing sensitivity to anomalous action inputs.
In contrast, our proposed diffusion-based OOD detection consistently delivers superior performance across all evaluated datasets.

\subsection{Comparisons on D4RL benchmarks}

\begin{table}[tb]
    \centering
    \scriptsize
    \setlength{\tabcolsep}{2pt}
    \caption{Evaluation results on D4RL benchmark. We report the average normalized scores at the last training iteration over 4 random seeds. Note that m=medium, m-r=medium-replay, m-e=medium-expert. \textbf{Bold} indicates the values within 95\% of the maximum value.}
    \vspace{-10pt}
    \label{tab:d4rl_results}
    \begin{tabular}{l*{14}{c}}
    \toprule
    \multirow{2}{*}{\textbf{Dataset}} & 
    \multicolumn{6}{c}{\textbf{Conventional methods}} & 
    \multicolumn{7}{c}{\textbf{Diffusion-based methods}} \\
    \cmidrule(lr){2-7} \cmidrule(lr){8-14}
    & \textbf{TD3+BC} & \textbf{IQL} & \textbf{A2PR} & \textbf{CQL} & \textbf{SVR} & \textbf{ACL-QL} & \textbf{DQL} & \textbf{SfBC} & \textbf{IDQL} & \textbf{QGPO} & \textbf{SRPO} & \textbf{DTQL} & \textbf{DOSER (Ours)} \\
    \midrule
    halfcheetah-m    & 48.3 & 47.4 & \textbf{68.6} & 44.0 & 60.5 & \textbf{69.8} & 51.5 & 45.9 & 51.0 & 54.1 & 60.4 & 57.9 & \textbf{67.5 $\pm$ 0.5} \\
    hopper-m         & 59.3 & 66.3 & \textbf{100.8} & 58.5 & \textbf{103.5} & 97.9 & 90.5 & 57.1 & 65.4 & 98.0 & 95.5 & \textbf{99.6} & \textbf{104.0 $\pm$ 0.5} \\
    walker2d-m       & 83.7 & 78.3 & \textbf{89.7} & 72.5 & \textbf{92.4}& 79.3 & 87.0 & 77.9 & 82.5 & 86.0 & 84.4 & \textbf{89.4} & 86.7 $\pm$ 1.2 \\
    halfcheetah-m-r  & 44.6 & 44.2 & 56.6 & 45.5 & 52.5 & 55.9 & 47.8 & 37.1 & 45.9 & 47.6 & 51.4 & 50.9 & \textbf{63.0 $\pm$ 1.1} \\
    hopper-m-r       & 60.9 & 94.7 & \textbf{101.5} & 95.0 & \textbf{103.7} & \textbf{99.3} & \textbf{101.3} & 86.2 & 92.1 & 96.9 & \textbf{101.2} & \textbf{100.0} & \textbf{104.4 $\pm$ 0.6} \\
    walker2d-m-r     & 81.8 & 73.9 & \textbf{94.4} & 77.2 & \textbf{95.6} & \textbf{96.5} & \textbf{95.5} & 65.1 & 85.1 & 84.4 & 84.6 & 88.5 & \textbf{94.4 $\pm$ 1.3}\\
    halfcheetah-m-e  & 90.7 & 86.7 & \textbf{98.3} & 91.6 & \textbf{94.2} & 87.4 & \textbf{96.8} & 92.6 & \textbf{95.9} & \textbf{93.5} & 92.2 & 92.7 & \textbf{96.2 $\pm$ 0.4}\\
    hopper-m-e       & 98.0 & 91.5 & \textbf{112.1} & 105.4 & \textbf{111.2} & \textbf{107.2} & \textbf{111.1} & \textbf{108.6} & \textbf{108.6} & \textbf{108.0} & 100.1 & \textbf{109.3} & \textbf{111.5 $\pm$ 1.6} \\
    walker2d-m-e     & \textbf{110.1} & \textbf{109.6} & \textbf{114.6} & 108.8 & \textbf{109.3} & \textbf{113.4} & \textbf{110.1} & \textbf{109.8} & \textbf{112.7} & \textbf{110.7} & \textbf{114.0} & \textbf{110.0} & \textbf{110.9 $\pm$ 0.2}\\
    \midrule
    \textbf{MuJoCo-v2 Average}     & 75.3 & 83.3 & \textbf{93.0} & 77.6 & \textbf{91.4} & \textbf{89.6} & 88.0 & 75.6 & 82.1 & 86.6 & 87.1 & \textbf{88.7} & \textbf{93.2} \\
    \midrule
    pen-human       & 54.9 & 71.5 & - & 35.2 & 73.1 & - & 72.8 & - & - & 73.9 & - & 64.1 & \textbf{87.8 $\pm$ 14.7} \\
    pen-cloned      & 63.8 & 37.3 & - & 27.2 & 70.2 & - & 57.3 & - & - & 54.2 & - & \textbf{81.3} & \textbf{79.3 $\pm$ 8.9} \\
    \midrule
    \textbf{Adroit-v1 Average} & 59.4 & 54.4 & - & 31.2 & 71.7 & - & 65.1 & - & - & 64.1 & - & 72.7 & \textbf{83.6} \\
    \bottomrule
    \end{tabular}
\end{table}

We evaluate the policy performance of DOSER on the standard D4RL benchmark, covering a diverse set of continuous control tasks with varying dataset qualities.

We compare DOSER against a broad range of baselines, including conventional algorithms and SOTA diffusion-based approaches.
For policy constraint methods, we include TD3+BC~\citep{fujimoto2021minimalist}, IQL~\citep{kostrikov2021offline} and A2PR~\citep{liu2024adaptive}.
For value regularization methods, we compare against CQL~\citep{kumar2020conservative}, SVR~\citep{mao2023supported} and ACL-QL~\citep{wu2024acl}.
For diffusion-based methods, we consider approaches that also leverage diffusion models for behavior cloning, such as DQL~\citep{wang2022diffusion}, SfBC~\citep{chen2022offline}, IDQL~\citep{hansen2023idql}, QGPO~\citep{lu2023contrastive}, SRPO~\citep{chen2023score} and DTQL~\citep{chen2024diffusion}.
Baseline performance is taken from original papers or recent literature.
Some baselines did not report results on the pen tasks, and key hyperparameters for reproduction are unavailable, so we mark these entries as “–”.

As shown in \Reftab{tab:d4rl_results}, DOSER consistently achieves strong performance, outperforming prior methods on both Gym-MuJoCo and Adroit tasks. 
Its advantage is particularly pronounced in the more challenging ``medium'' and ``medium-replay'' settings, where the datasets contain a significant proportion of suboptimal and heterogeneous behaviors.
This highlights the effectiveness of our proposed diffusion-based OOD detection mechanism and its ability of selective regularization. 
While existing diffusion-based baselines already exhibit improved performance over traditional approaches due to their expressive modeling capacity, DOSER further improves upon them by explicitly classifying OOD actions, which allows for more refined value estimation and better policy improvement.
Note that methods such as SVR and A2PR also incorporate behavior modeling into their frameworks, either for value regularization or policy constraint.
Specifically, SVR employs a CVAE to approximate the support of the behavior policy and imposes uniform penalties to actions that fall outside this estimated support.
Similar to the motivation of DOSER, A2PR introduces an action discrimination mechanism to guide policy optimization.
However, A2PR's discriminator is solely applied to in-distribution actions identified by an enhanced CVAE, thereby restricting policy learning to a potentially inaccurate approximation of the dataset support.
In contrast to these CVAE-based approaches, DOSER leverages the expressive power of diffusion models for more accurate OOD detection and employs a selective regularization strategy targeted at OOD actions.
This enables the learned policy to extrapolate to high-value regions beyond the offline dataset, ultimately contributing to superior empirical performance.

\subsection{Ablation Study on Components in DOSER} \label{sec:components_ablation}


To systematically validate the effectiveness of each component in the DOSER framework, we conduct ablation studies on two variants. 

\textbf{1) DOSER w/o AC and VC}. This variant removes both OOD action classification (AC) and value compensation (VC). It relies solely on diffusion-based reconstruction error to detect OOD actions, applying a uniform penalty without distinguishing between beneficial and detrimental cases. This serves as a direct test of the core capability of diffusion models in OOD detection. 

\textbf{2) DOSER w/o VC}. Building on the baseline above, this variant further differentiates OOD actions by incorporating both next-state distribution modeling and value estimation. Specifically, it identifies OOD actions that either (i) lead to OOD states or (ii) yield lower value outcomes than optimal ID actions as detrimental, penalizing only those. All other OOD actions are retained without regularization, enabling a more nuanced treatment of OOD behavior.  

We keep all hyperparameters fixed across these variants and evaluate performance on MuJoCo locomotion tasks. 
\Reftab{tab:ablation_on_components} reports the average normalized scores of DOSER and its two ablated variants across nine datasets. 
The complete learning curves are provided in Appendix~\ref{app:learning_curves}.

The results show that even the baseline variant (\textbf{DOSER w/o AC and VC}) already performs competitively with existing SOTA methods, confirming the strong effectiveness of diffusion models for OOD action detection. 
However, its uniform penalization strategy excessively suppresses potentially beneficial OOD actions, leading to noticeable performance degradation. 
In contrast, the classification-based variant (\textbf{DOSER w/o VC}) alleviates this issue by selectively regularizing only detrimental OOD actions, resulting in smaller performance drops.  
Overall, these findings strongly validate the effectiveness of DOSER's fine-grained classification and compensation mechanism in better balancing conservatism and exploration during policy optimization.



\begin{table}[t]
    \centering
    \scriptsize
    \caption{Components ablation across MuJoCo-v2 tasks.}
    \label{tab:ablation_on_components}
    \vspace{-10pt}
    \resizebox{\textwidth}{!}{
    \begin{tabular}{l*{9}{c}}
        \toprule
        \multirow{2}{*}{\textbf{Method}} & 
        \multicolumn{3}{c}{\textbf{halfcheetah}} & 
        \multicolumn{3}{c}{\textbf{hopper}} & 
        \multicolumn{3}{c}{\textbf{walker2d}} \\
        \cmidrule(lr){2-4} \cmidrule(lr){5-7} \cmidrule(lr){8-10} 
        & \textbf{m} & \textbf{m-r} & \textbf{m-e} & \textbf{m} & \textbf{m-r} & \textbf{m-e} & \textbf{m} & \textbf{m-r} & \textbf{m-e} \\
        \midrule
        DOSER w/o AC and VC & 65.4 $\pm$ 1.1 & 58.8 $\pm$ 1.6 & 94.9 $\pm$ 0.2 & 102.1 $\pm$ 1.7 & 104.2 $\pm$ 1.3 & 108.3 $\pm$ 2.5 & 85.4 $\pm$ 0.4 & 94.1 $\pm$ 1.5 & 110.8 $\pm$ 0.4 \\
        DOSER w/o VC & 67.2 $\pm$ 0.9 & 61.9 $\pm$ 1.5 & 96.0 $\pm$ 0.2 & 99.4 $\pm$ 4. & 103.2 $\pm$ 1.8 & 111.2 $\pm$ 3.2 & 85.8 $\pm$ 0.6 & 93.0 $\pm$ 1.0 & \textbf{111.1 $\pm$ 0.5} \\
        DOSER & \textbf{67.5 $\pm$ 0.5} & \textbf{63.0 $\pm$ 1.1} & \textbf{96.2 $\pm$ 0.4} & \textbf{104.0 $\pm$ 0.5} & \textbf{104.4 $\pm$ 0.6} & \textbf{111.5 $\pm$ 1.6} & \textbf{86.7 $\pm$ 1.2} & \textbf{94.4 $\pm$ 1.3} & 110.9 $\pm$ 0.2 \\
        \bottomrule
    \end{tabular}
    }
\end{table}

\subsection{Sensitivity analysis}

\begin{wrapfigure}{r}{0.5\textwidth}
    \vspace{-10pt}
    \centering
    \begin{minipage}{\linewidth}
        \centering
        \begin{subfigure}{\linewidth}
            \includegraphics[width=\linewidth]{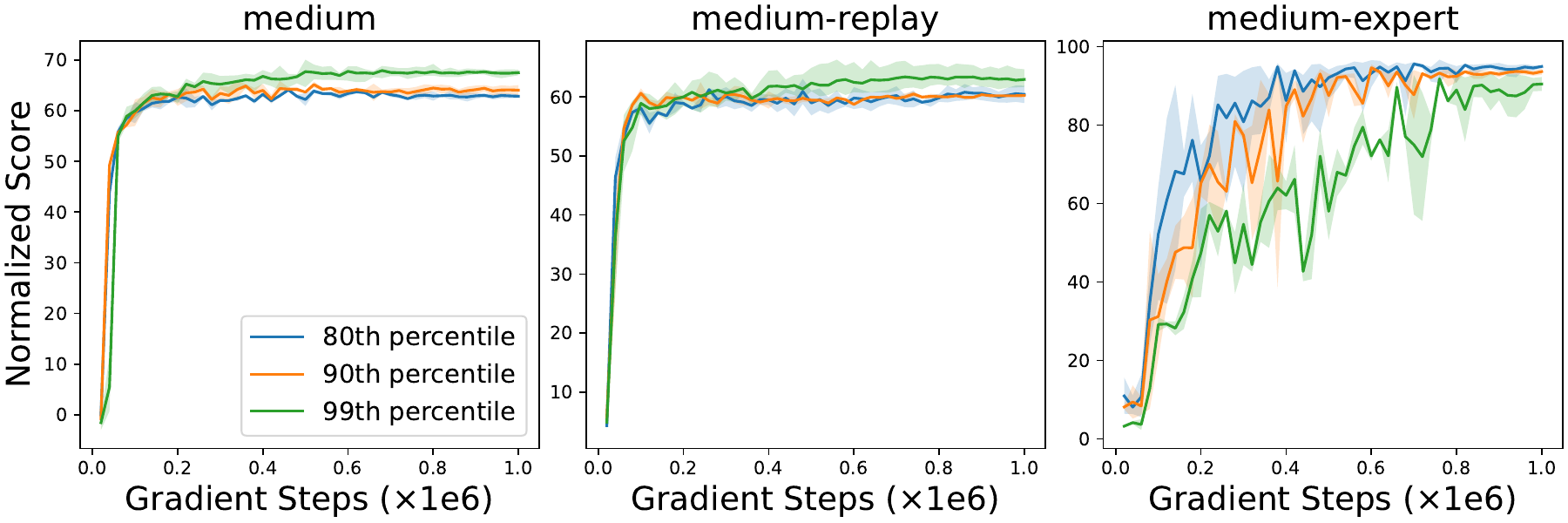}
            \caption{OOD detection thresholds $\tau_a$ and $\tau_s$.}
            \label{fig:ablation_on_threshold}
        \end{subfigure}

        \vspace{5pt}
        
        \begin{subfigure}{\linewidth}
            \includegraphics[width=\linewidth]{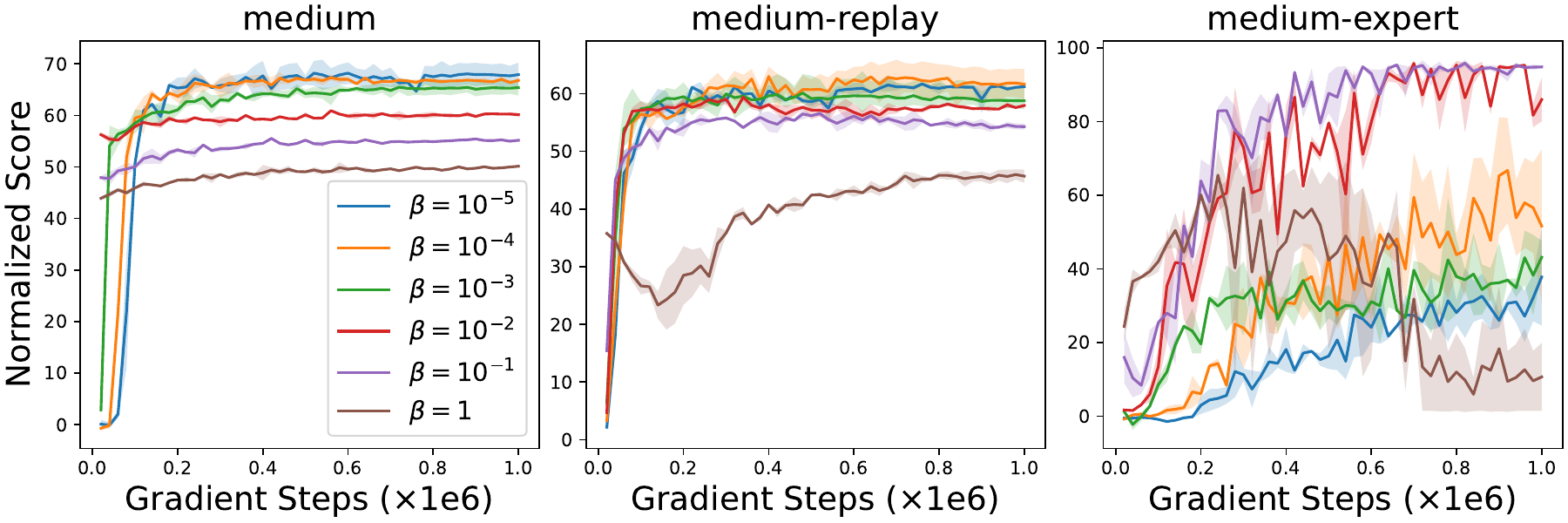}
            \caption{Penalty coefficient $\beta$.}
            \label{fig:ablation_on_beta}
        \end{subfigure}

        \vspace{5pt}
        
        \begin{subfigure}{\linewidth}
            \includegraphics[width=\linewidth]{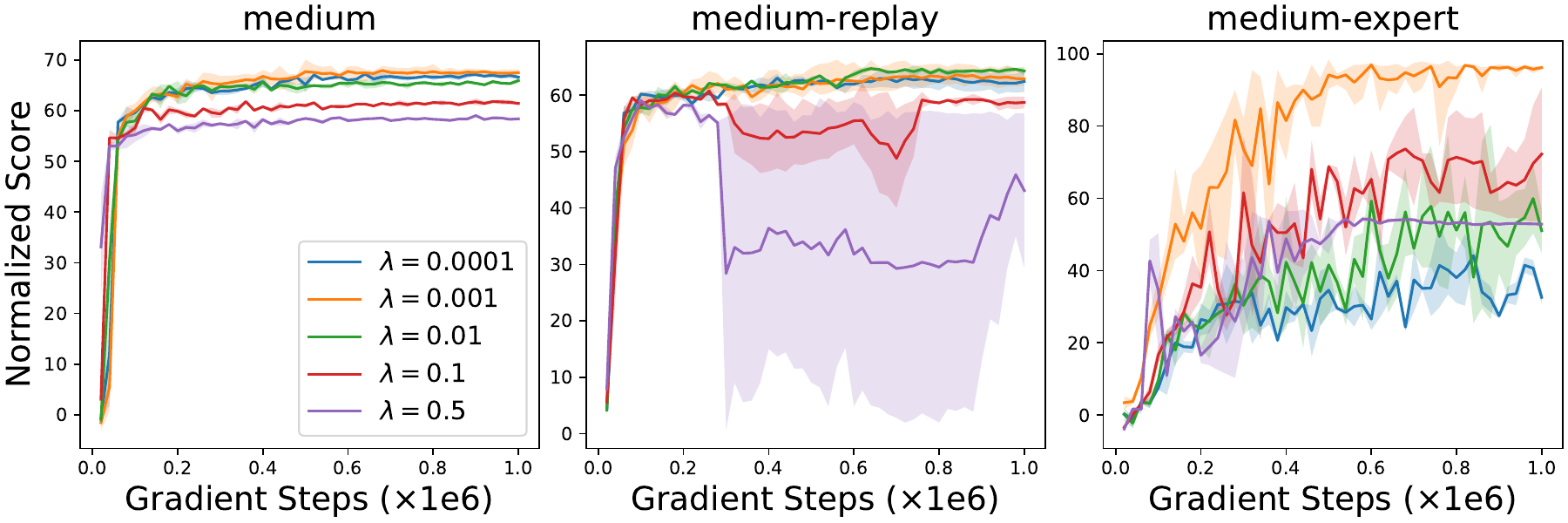}
            \caption{Compensation coefficient $\lambda$.}
            \label{fig:ablation_on_lambda}
        \end{subfigure}
        
        \caption{Ablation study on hyperparameters for halfcheetah tasks.}
        \label{fig:ablation_all}
    \end{minipage}
    \vspace{-10pt}
\end{wrapfigure}

We compare different OOD detection thresholds in \Reffig{fig:ablation_on_threshold}, set as the $p$-th percentile of in-distribution reconstruction errors. A smaller threshold implies more samples will be identified as OOD, which is beneficial for narrow behavior distributions like in the ``medium-expert'' dataset, where a larger threshold might overlook OOD samples. For more diverse datasets like ``medium'' and ``medium-replay'', larger thresholds are preferred to prevent ID samples from being misclassified as OOD.

We also investigate the impact of the penalty coefficient $\beta$, varying it from $10^{-5}$ to $1$, as shown in \Reffig{fig:ablation_on_beta}. Datasets with narrow distributions require a larger $\beta$ to prevent value overestimation, while more diverse datasets benefit from a smaller $\beta$ to avoid suppressing beneficial OOD actions.

An ablation study on the compensation coefficient $\lambda$ in \Reffig{fig:ablation_on_lambda} shows that DOSER performs well across a wide range of $\lambda$ values on the more diverse datasets. Setting $\lambda=0.001$ yields stable performance across datasets, while excessively large values can amplify the compensation effect, leading to value overestimation and disrupting the learning process.

\section{Related Works}

\noindent\textbf{OOD Detection}.  
Reliable identification of OOD samples is critical for the robustness of machine learning systems. 
Existing methods primarily fall into two categories: generative-based and reconstruction-based. 
Generative-based methods leverage probabilistic models to estimate the likelihood of test samples under the learned distribution~\citep{ren2019likelihood}, but models such as Glow~\citep{kingma2018glow} and VAEs~\citep{kingma2013auto} often assign higher likelihoods to OOD samples than to ID data~\citep{hendrycks2018deep, nalisnick2018deep}.
Although improvements like likelihood ratios~\citep{ren2019likelihood} and typicality tests~\citep{nalisnick2019detecting} have been proposed, their reliance on likelihood estimation remains a fundamental limitation.
In contrast, reconstruction-based methods~\citep{denouden2018improving, zong2018deep} directly measure reconstruction quality, based on the premise that models trained on ID data reconstruct familiar patterns well, while exhibiting significant errors on anomalous inputs.
Traditional autoencoders~\citep{lyudchik2016outlier} and more recent diffusion-based models~\citep{graham2023denoising} have shown promising results in this regard, with diffusion models leveraging iterative refinement to further enhance ID reconstruction.
Consequently, reconstruction error provides a more reliable signal of distribution shift than likelihood-based metrics, offering improved discriminability between ID and OOD samples.

\noindent\textbf{OOD Detection in Offline RL}.  
Offline RL presents additional challenges for OOD detection due to the lack of online interaction.  
To mitigate the risk of extrapolation error, BCQ~\citep{fujimoto2019off} and SVR~\citep{mao2023supported} employ VAEs to approximate the behavior policy, constraining the learned policy to remain within behavior support.  
However, VAEs often fail to capture multi-modal distributions accurately~\citep{wang2024learning}, resulting in oversimplified generations.
Another line of work quantifies uncertainty to identify OOD samples.
Model ensemble methods~\citep{lakshminarayanan2017simple} identify OOD state-action pairs via predictive variance, with algorithms such as MOPO~\citep{yu2020mopo} incorporating this uncertainty as a penalty into the reward function.
Similarly, Monte Carlo (MC) dropout offers a computationally efficient approximation to Bayesian inference~\citep{gal2016dropout}, and has been applied in offline RL for uncertainty-aware OOD detection~\citep{wu2021uncertainty}.
While effective to some extent, both approaches often conflate epistemic and aleatoric uncertainty, which may lead to erroneous identification of OOD actions~\citep{zhang2023darl}.
Alternatively, CQL~\citep{kumar2020conservative} avoids explicit density estimation by regularizing the Q-function to assign lower values to all unseen actions.
This implicit OOD detection eliminates the need for behavior modeling but risks being overly conservative, potentially suppressing valuable actions that lie outside the behavior support but could lead to improved performance.

\noindent\textbf{Diffusion Models in Offline RL}.  
Diffusion models have recently emerged as powerful paradigms in RL for modeling multi-modal distributions.
This capability is particularly valuable in offline RL settings, where capturing the diversity of behaviors is essential for deriving robust policies.
Methods such as Diffusion-QL~\citep{wang2022diffusion} and DAC~\citep{fang2024diffusion} incorporate Q-function guidance into the reverse diffusion process, shaping action generation toward higher-value regions.  
In contrast, IDQL~\citep{hansen2023idql} and SfBC~\citep{chen2022offline} first pretrain a conditional  diffusion model to generate multiple action candidates for a given state, and subsequently resample according to Q-values to select the best action for execution.
Notably, while these approaches effectively integrate diffusion models with value functions for policy improvement, their use of diffusion remains largely limited to guiding or selecting actions, none of them fully exploit the inherent properties of diffusion models, such as reconstruction fidelity or noise sensitivity, to directly assess whether state-action pairs lie within the support of the training distribution.

\section{Conclusion}

In this work, we proposed DOSER, a framework that mitigates distribution shift through diffusion-based reconstruction error.  
Unlike prior methods that rely on heuristic uncertainty measures or unreliable likelihood estimates, DOSER leverages the expressive power of diffusion models to compute theoretically grounded reconstruction errors for both behavior policy and state distributions.  
This provides robust detection metrics that overcome the multi-modality limitations of Gaussian-based approximators.  
Crucially, DOSER introduces a selective regularization mechanism that classifies OOD samples into beneficial and detrimental actions, enabling suppression of detrimental extrapolations while compensating promising explorations via value-difference bonuses.  
Extensive experiments demonstrate that DOSER achieves superior or competitive performance compared to state-of-the-art methods, particularly on suboptimal datasets.  

Nonetheless, DOSER has two key limitations: 1) its reliance on the accuracy of the diffusion-based reconstruction and the learned dynamics model, and 2) the computational overhead of the iterative diffusion sampling. 
Future work could focus on enhancing the robustness of dynamics model and improving efficiency via model distillation and accelerated sampling techniques.

\section*{Acknowledgments}
This work is supported by the Shanghai Municipal Science and Shanghai Automotive Industry Science and Technology Development Foundation (No. 2407).

\bibliography{iclr2026_conference}
\bibliographystyle{iclr2026_conference}

\newpage
\appendix
\section*{Appendix}

\appendix
\section{Theoretical Analysis}\label{app:theory}

In this section, we provide the formal definitions and theoretical analysis in the paper.

\subsection{Definitions}

\begin{definition}[In-sample Bellman operator]\label{def:in_sample_operator}
    The in-sample Bellman operator is defined as:
    \begin{equation}
        \mathcal{T}_{\mathrm{In}}Q(\vs, \va) := R(\vs, \va) + \gamma \mathbb{E}_{\vs' \sim P(\cdot|\vs, \va), \va' \sim \hat\pi_\beta(\cdot|\vs')} \left[ Q(\vs', \va') \right],
    \end{equation}
    where $\hat\pi_\beta$ is the empirical behavior policy in the dataset.
\end{definition}

Based on Definition~\ref{def:in_sample_operator}, DOSER operator is defined as follows.

\begin{definition}[DOSER operator]\label{def:DOSER_operator}
    From an optimization perspective, (\ref{eq:critic_loss}) lead to the DOSER policy evaluation operator:
    \begin{equation}
        \mathcal{T}_{\mathrm{DOSER}} Q(\vs,\va) = \begin{cases} 
        \mathcal{T}_{\mathrm{In}}Q(\vs, \va) & \text{if } \mathcal{E}_a(\vs,\va) \leq \tau_a\\
        Q_{\mathrm{adj}}(\vs,\va) & \text{otherwise}
        \end{cases}
    \end{equation}
    where $Q_{\mathrm{adj}}(\vs,\va)$ is the adjusted Q-target for OOD actions:
    \begin{equation}
        Q_{\mathrm{adj}}(\vs,\va) = 
        \begin{cases} 
        Q_{\min} & \text{if } \va \in \mathcal{A}_{\mathrm{ood}}^- \\
        \eta \left( Q(\vs, \va_{\mathrm{id}}^*) + \delta_V \right) & \text{if } \va \in \mathcal{A}_{\mathrm{ood}}^+
        \end{cases}
    \end{equation}
\end{definition}

Therefore, DOSER guarantees that the Q-values of ID actions remain unbiased, while underestimate those detrimental OOD actions.
By applying value compensation $\delta_V$ to beneficial OOD actions, it incentivizes exploration toward high-potential state-action pairs.

\subsection{Theorems}

\begin{theorem}[Contraction mapping property]\label{theo:convergence}
    For arbitrary Q-functions $Q_1$ and $Q_2$ defined on the whole state-action space $\mathcal{S} \times \mathcal{A}$, the DOSER operator $\mathcal{T}_{\mathrm{\mathrm{DOSER}}}$ constitutes a $\gamma$-contraction mapping in the $\mathcal{L}_{\infty}$ norm:
    \begin{equation}
        || \mathcal{T}_{\mathrm{DOSER}} Q_1 - \mathcal{T}_{\mathrm{DOSER}} Q_2 ||_\infty \leq \gamma || Q_1 - Q_2 ||_\infty.
    \end{equation}
\end{theorem}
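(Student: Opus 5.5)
The plan is a pointwise argument: fix an arbitrary $(\vs,\va)\in\mathcal{S}\times\mathcal{A}$, bound $|\mathcal{T}_{\mathrm{DOSER}}Q_1(\vs,\va)-\mathcal{T}_{\mathrm{DOSER}}Q_2(\vs,\va)|$ by $\gamma\|Q_1-Q_2\|_\infty$, and then take the supremum. The case split follows Definition~\ref{def:DOSER_operator}. The partition of $\mathcal{A}$ into ID actions ($\mathcal{E}_a\le\tau_a$), $\mathcal{A}^-_{\mathrm{ood}}$ and $\mathcal{A}^+_{\mathrm{ood}}$ is determined by the pretrained diffusion models $\bm{\epsilon}_{\theta_a},\bm{\epsilon}_{\theta_s}$, the dynamics model $p_\psi$, and a fixed value function $V$ (the separately learned V-network). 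I would state explicitly that these are held fixed when the operator is applied. With that convention, $Q_1$ and $Q_2$ are routed into the same branch at every $(\vs,\va)$.

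\textbf{ID case} ($\mathcal{E}_a(\vs,\va)\le\tau_a$). Here the rewards cancel, leaving
\[
\bigl|\gamma\,\mathbb{E}_{\vs'\sim P(\cdot|\vs,\va),\,\va'\sim\hat\pi_\beta(\cdot|\vs')}[Q_1(\vs',\va')-Q_2(\vs',\va')]\bigr|\le\gamma\|Q_1-Q_2\|_\infty ,
\]
by Jensen's inequality, i.e.\ the standard contraction of $\mathcal{T}_{\mathrm{In}}$ from Definition~\ref{def:in_sample_operator}.

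\textbf{Detrimental OOD case} ($\va\in\mathcal{A}^-_{\mathrm{ood}}$). Both targets equal $Q_{\min}$, so the difference is $0$.

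\textbf{Beneficial OOD case} ($\va\in\mathcal{A}^+_{\mathrm{ood}}$). The difference is
\[
\eta\,\bigl|Q_1(\vs,\va^*_{\mathrm{id}})-Q_2(\vs,\va^*_{\mathrm{id}})\bigr|,
\]
since $\delta_V$ depends only on $V$, $p_\psi$ and $\vs$, and cancels. This is at most $\eta\|Q_1-Q_2\|_\infty$. To obtain a $\gamma$-contraction I need $\eta\le\gamma$, so I would treat $\eta\in[0,\gamma]$ as a standing assumption, which is the natural reading of $\eta$ as a discount-like shrinkage coefficient.

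One subtlety arises if $\va^*_{\mathrm{id}}$ is taken as $\argmax_{\va\sim\pi_\beta}Q(\vs,\va)$ for the current $Q$, so that $Q_1$ and $Q_2$ select different maximizers. In that case I would rewrite the term as $\eta\max_{\va\in\mathrm{supp}\,\pi_\beta(\cdot|\vs)}Q(\vs,\va)$ and use the inequality $|\max f-\max g|\le\max|f-g|$, which gives the same bound. The same issue affects the partition itself if $\va^*_{\mathrm{id}}$ feeds into $\vs'_{\mathrm{id}}$, which is why I fix the classification using the target network, consistent with Algorithm~\ref{alg:DOSER}.

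\textbf{Main obstacle.} I expect the main obstacle to be this last point: the beneficial branch is not a Bellman backup. It is a shrunken copy of $Q$ at the same state, so the contraction factor there comes from $\eta$, not from $\gamma$. The argument therefore hinges on assuming $\eta\le\gamma$ together with the fixed-classifier convention. Combining the three cases gives the $\gamma$-contraction.
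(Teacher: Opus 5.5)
Your proposal is correct and follows essentially the same route as the paper's proof. Both use the same three-way case split: in-sample Bellman contraction on ID actions, zero difference on the constant $Q_{\min}$ branch, and $\eta\,|\max Q_1-\max Q_2|\le\eta\|Q_1-Q_2\|_\infty\le\gamma\|Q_1-Q_2\|_\infty$ on the beneficial branch. Both also rest on the same standing assumptions, $0\le\eta\le\gamma$ and a $Q$-independent $\delta_V$; your explicit convention of freezing the partition is what the paper's fixed-$\delta_V$ assumption supplies only implicitly.
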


By the Banach fixed-point theorem~\citep{banach1922operations}, repeatedly applying $\mathcal{T}_{\mathrm{DOSER}}$ converges to a unique fixed point from any initial Q-function. 

\begin{theorem}[Bounded value estimation]\label{theo:bound_value}
For any policy $\pi$, let $Q^\pi_{\mathrm{DOSER}}$ denote the unique fixed point of the DOSER operator $\mathcal{T}_{\mathrm{DOSER}}$. Then, $Q^\pi_{\mathrm{DOSER}}$ satisfies the following boundedness property for all $(\vs, \va)$:
\begin{equation}
Q_{\mathrm{min}} \leq Q^\pi_{\mathrm{DOSER}}(\vs, \va) \leq Q^\pi_{\mathrm{In}}(\vs, \va^*_{\mathrm{id}}) + \eta\delta_V,
\end{equation}
where $Q^\pi_{\mathrm{In}}(\vs, \va^*_{\mathrm{id}}) = \max_{{\va \sim \pi_\beta(\cdot|\vs)}} Q^\pi_{\mathrm{In}}(\vs, \va)$ is the optimal Q-value by iterating the in-sample Bellman operator $\mathcal{T}_{\mathrm{In}}$.
\end{theorem}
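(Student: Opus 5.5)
I would prove Theorem~\ref{theo:bound_value} by evaluating both sides of the fixed-point identity $Q^\pi_{\mathrm{DOSER}} = \mathcal{T}_{\mathrm{DOSER}} Q^\pi_{\mathrm{DOSER}}$, which holds by Theorem~\ref{theo:convergence} and the Banach fixed-point theorem. I would split the state-action space into three regions: ID pairs with $\mathcal{E}_a(\vs,\va)\le\tau_a$; detrimental OOD pairs with $\va\in\mathcal{A}_{\mathrm{ood}}^-$; and beneficial OOD pairs with $\va\in\mathcal{A}_{\mathrm{ood}}^+$. On each region I would verify the two inequalities separately. Throughout I would assume, as the statement implicitly does, that $\eta\in[0,1]$ and $\delta_V\ge 0$ on $\mathcal{A}_{\mathrm{ood}}^+$ (the latter holds by the definition of the beneficial set when $V$ is the critic's value). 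I would also use that $R\ge R_{\min}$.

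\emph{Lower bound.} On detrimental OOD pairs the fixed point equals $Q_{\min}$ exactly. On ID pairs I would use the fact that the operator is monotone and maps the set $\{Q\ge Q_{\min}\}$ into itself. Indeed, if $Q\ge Q_{\min}$, then $R+\gamma\,\mathbb{E}[Q]\ge R_{\min}+\gamma Q_{\min}=Q_{\min}$. The OOD targets are also at least $Q_{\min}$, provided $\eta(Q(\vs,\va^*_{\mathrm{id}})+\delta_V)\ge Q_{\min}$; this follows from $\delta_V\ge0$ and $\eta$ close to $1$, or trivially if $Q_{\min}\le 0$. Since $\{Q\ge Q_{\min}\}$ is closed and invariant, iterating from $Q\equiv Q_{\min}$ converges to a limit inside this set, and that limit is the fixed point.

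\emph{Upper bound.} On ID pairs, the ID backup only bootstraps through $\va'\sim\hat\pi_\beta(\cdot|\vs')$, i.e.\ through ID actions. So the restriction of $Q^\pi_{\mathrm{DOSER}}$ to ID pairs solves the in-sample Bellman equation, and it coincides with $Q^\pi_{\mathrm{In}}$ there by uniqueness of that contraction's fixed point. Hence for ID $\va$ we get $Q^\pi_{\mathrm{DOSER}}(\vs,\va)=Q^\pi_{\mathrm{In}}(\vs,\va)\le \max_{\va\sim\pi_\beta}Q^\pi_{\mathrm{In}}(\vs,\va)=Q^\pi_{\mathrm{In}}(\vs,\va^*_{\mathrm{id}})\le Q^\pi_{\mathrm{In}}(\vs,\va^*_{\mathrm{id}})+\eta\delta_V$. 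For detrimental OOD pairs, $Q_{\min}$ is at most this same quantity. For beneficial OOD pairs, the fixed point equals $\eta Q^\pi_{\mathrm{DOSER}}(\vs,\va^*_{\mathrm{id}})+\eta\delta_V$. Because $\va^*_{\mathrm{id}}$ is in-distribution, this equals $\eta Q^\pi_{\mathrm{In}}(\vs,\va^*_{\mathrm{id}})+\eta\delta_V$, which is at most $Q^\pi_{\mathrm{In}}(\vs,\va^*_{\mathrm{id}})+\eta\delta_V$ whenever $\eta\le1$ and $Q^\pi_{\mathrm{In}}\ge0$.

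\emph{Main obstacle.} The hardest step is decoupling the ID block from the OOD block. I would need that the ID backup never queries OOD actions, meaning the support of $\hat\pi_\beta$ lies inside $\{\mathcal{E}_a\le\tau_a\}$, and that $\va^*_{\mathrm{id}}$ is itself classified as ID. I would state these as standing assumptions of the idealized operator: the threshold $\tau_a$ is set so that dataset actions pass detection, and the $\argmax$ is over $\pi_\beta$-support. The sign conditions on $\eta$, $Q^\pi_{\mathrm{In}}$ and $\delta_V$ in the final comparison are the other delicate point. If nonnegative rewards cannot be assumed, I would handle the case $Q^\pi_{\mathrm{In}}<0$ by requiring $\eta=1$, or by shifting rewards so that $R_{\min}\ge0$.
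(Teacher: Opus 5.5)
Your proposal is essentially the paper's proof. It uses the same ID / detrimental / beneficial split of the fixed-point identity, with $Q^\pi_{\mathrm{DOSER}}=Q^\pi_{\mathrm{In}}$ on ID pairs and $\va^*_{\mathrm{id}}$ treated as ID. It is also more rigorous in three places: your decoupling-plus-uniqueness and invariant-set arguments justify what the paper only asserts (the ID identification \emph{by construction}, and $Q^\pi_{\mathrm{DOSER}}(\vs',\va')\ge Q_{\min}$ on successor pairs); your side conditions expose the paper's silent use of $Q^\pi_{\mathrm{In}}(\vs,\va^*_{\mathrm{id}})\ge 0$ in the beneficial upper bound; and they expose its use of $\delta_V\ge 0$ as a global scalar. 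You rely on that last one too, since you use $\eta\delta_V\ge 0$ on ID and detrimental pairs, not only on $\mathcal{A}_{\mathrm{ood}}^+$.

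The sign condition is genuinely necessary when $\eta<1$, because otherwise the beneficial value exceeds the claimed bound by $(\eta-1)\,Q^\pi_{\mathrm{In}}(\vs,\va^*_{\mathrm{id}})>0$. Your reward-shift fallback does not remove it: the $\eta$-scaled target is not shift-equivariant, and $R_{\min}>0$ would forfeit the $Q_{\min}\le 0$ that your beneficial-case lower bound relies on.
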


Since $Q^\pi_{\mathrm{In}}$ corresponds to the fixed point of the in-sample Bellman operator, it yields reliable value estimates within the data distribution.
Therefore, Theorem~\ref{theo:bound_value} implies that DOSER incurs controlled value overestimation while enabling exploration to high-value regions via the compensation mechanism. 
The upper bound tightens as the compensation target weight $\eta$ and value difference $\delta_V$ decrease, since smaller values of these parameters directly constrain the magnitude of the value adjustment for OOD actions, aligning the value estimate closer to the in-distribution baseline $Q^\pi_{\mathrm{In}}$.

By dynamically adjusting how OOD actions are treated based on their predicted outcomes, the proposed selective regularization mechanism balances safety and performance, avoiding the pitfalls of binary classification. 
Crucially, our method preserves standard Q-learning convergence guarantees while enabling safer exploration beyond the behavior policy support.

\begin{theorem}[Bounded critic deviation]\label{theo:bounded_deviation}
    Let $\pi_{\mathrm{ref}}$ denote the reference policy obtained with the true environment dynamics $P$ and without OOD detection error, with $Q^{\pi_{\mathrm{ref}}}$ being its corresponding action-value function. 
    Let $\widehat{\pi}$ denote the learned policy of DOSER under a dynamics model approximation error $\varepsilon_{\mathrm{dyn}}$ and an OOD detection misclassification probability $\varepsilon_{\mathrm{det}}$.
    Then, the deviation of the learned critic $\widehat{Q}$ from $Q^{\pi_{\mathrm{ref}}}$ is bounded as follows:
    
    \begin{equation}
        \|\widehat{Q} - Q^{\pi_{\mathrm{ref}}}\|_\infty \le \frac{\gamma}{1 - \gamma} \left( Q_{\max}\left( C_1 \varepsilon_{\mathrm{dyn}} + C_2 \varepsilon_{\mathrm{det}} \right) + \eta \delta_V \right),
    \end{equation}
    
    where $Q_{\max} = \dfrac{R_{\max}}{1 - \gamma}$, and $C_1$, $C_2$ are constants that capture the sensitivity of the policy optimization process to dynamics and detection errors respectively.
\end{theorem}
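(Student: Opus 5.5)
The plan is a standard simulation-lemma argument. We compare two DOSER-type operators: the idealized operator $\mathcal{T}_{\mathrm{ref}}$, which uses the true dynamics $P$ and exact OOD classification, and the practical operator $\widehat{\mathcal{T}}$, which uses the learned model $p_\psi$ and the diffusion detector. By Theorem~\ref{theo:convergence}, both are $\gamma$-contractions in $\|\cdot\|_\infty$; the proof of that theorem does not depend on which dynamics or classifier is plugged in. Their fixed points are $Q^{\pi_{\mathrm{ref}}}$ and $\widehat{Q}$. The usual fixed-point perturbation argument then gives
\begin{equation*}
\|\widehat{Q} - Q^{\pi_{\mathrm{ref}}}\|_\infty = \|\widehat{\mathcal{T}}\widehat{Q} - \mathcal{T}_{\mathrm{ref}} Q^{\pi_{\mathrm{ref}}}\|_\infty \le \|\widehat{\mathcal{T}}\widehat{Q} - \widehat{\mathcal{T}} Q^{\pi_{\mathrm{ref}}}\|_\infty + \|(\widehat{\mathcal{T}} - \mathcal{T}_{\mathrm{ref}}) Q^{\pi_{\mathrm{ref}}}\|_\infty,
\end{equation*}
and hence $\|\widehat{Q} - Q^{\pi_{\mathrm{ref}}}\|_\infty \le \frac{1}{1-\gamma}\,\|(\widehat{\mathcal{T}} - \mathcal{T}_{\mathrm{ref}}) Q^{\pi_{\mathrm{ref}}}\|_\infty$. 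All the work is in bounding the one-step operator discrepancy at the fixed function $Q^{\pi_{\mathrm{ref}}}$ by $\gamma\big(Q_{\max}(C_1\varepsilon_{\mathrm{dyn}} + C_2\varepsilon_{\mathrm{det}}) + \eta\delta_V\big)$.

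We bound this discrepancy pointwise by splitting $(\vs,\va)$ into cases according to which branch of Definition~\ref{def:DOSER_operator} each operator selects.

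\textbf{(i) Both operators agree on the branch and it is the in-sample branch.} The rewards cancel. The difference is $\gamma$ times the gap between expectations of $\mathbb{E}_{\va'\sim\hat\pi_\beta}Q^{\pi_{\mathrm{ref}}}(\vs',\va')$ under $P$ and under $p_\psi$. We bound this by $\gamma\,\|Q^{\pi_{\mathrm{ref}}}\|_\infty\, D_{\mathrm{TV}}(P,p_\psi)\le \gamma Q_{\max} C_1\varepsilon_{\mathrm{dyn}}$, taking $\varepsilon_{\mathrm{dyn}}$ to be a total-variation (or Lipschitz-transported Wasserstein) bound; the constant $C_1$ absorbs the factor $2$ and any Lipschitz constants. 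Here we use $|Q^{\pi_{\mathrm{ref}}}|\le Q_{\max}$, which follows from Theorem~\ref{theo:bound_value} under the convention that the value range is normalized so that $|Q_{\min}|\le Q_{\max}$.

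\textbf{(ii) The branches differ because of misclassification.} This happens either between ID and OOD, or between $\mathcal{A}^+_{\mathrm{ood}}$ and $\mathcal{A}^-_{\mathrm{ood}}$. Such errors arise from detector error on $\va$, from detector error on the predicted $\vs'_\pi$, or from the model predicting the wrong $\vs'_\pi$. The targets on both branches are bounded by $\mathcal{O}(Q_{\max})$, so the discrepancy is at most $\mathcal{O}(Q_{\max})$. Taking the expectation over the randomness of the detector (noise level and $\bm{\epsilon}$) weights this by the misclassification probability $\varepsilon_{\mathrm{det}}$, yielding $Q_{\max}C_2\varepsilon_{\mathrm{det}}$. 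The portion caused by dynamics error in $\vs'_\pi$ is folded into $C_1$.

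\textbf{(iii) Both operators select the beneficial-OOD branch.} The targets $\eta(Q(\vs,\va^*_{\mathrm{id}})+\delta_V)$ differ through $\delta_V$, which is evaluated on model-predicted next states, and through the reference treating that bonus differently. We bound this residual crudely by $\eta\delta_V$. One factor of $\gamma$ must still be extracted here. The plan is to interpret the bonus as the discounted value gap $\gamma(V(\vs'_\pi)-V(\vs'_{\mathrm{id}}))$ propagated one step ahead, or else simply to weaken to a $\gamma$-free term and then note that the stated form follows after rescaling constants.

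The main obstacle is precisely this case bookkeeping. The misclassification cases are not obviously multiplied by $\gamma$, since $Q_{\min}$ and the bonus target are immediate, not bootstrapped. I would first try to handle this by defining $\varepsilon_{\mathrm{det}}$ and $C_2$ to absorb the factor $1/\gamma$; as stated, $C_1$ and $C_2$ are loose sensitivity constants. I would also state explicitly the mild assumption that the learned policy's induced operator coincides with the fixed-policy operator analyzed in Theorem~\ref{theo:convergence}. Combining (i)–(iii) with the $\frac{1}{1-\gamma}$ perturbation factor gives the bound.
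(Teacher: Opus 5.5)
\textbf{As written, your plan does not reach the stated inequality. It breaks exactly at the obstacle you flag.}

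In a branch-wise comparison of two operators of the form in Definition~\ref{def:DOSER_operator}, the OOD targets $Q_{\min}$ and $\eta(Q(\vs,\va^*_{\mathrm{id}})+\delta_V)$ are current-step quantities. So cases (ii) and (iii) enter $\|(\widehat{\mathcal{T}}-\mathcal{T}_{\mathrm{ref}})Q^{\pi_{\mathrm{ref}}}\|_\infty$ with no factor $\gamma$.

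For the $\varepsilon_{\mathrm{det}}$ term you may hide $1/\gamma$ in the unspecified $C_2$. The term $\eta\delta_V$, however, carries no free constant. Bounding case (iii) by $\eta\delta_V$ and passing it through the factor $\frac{1}{1-\gamma}$ gives $\frac{1}{1-\gamma}\eta\delta_V$. This is strictly weaker than the claimed $\frac{\gamma}{1-\gamma}\eta\delta_V$, and no rescaling of $C_1,C_2$ changes that. Reading the bonus as $\gamma(V(\vs'_\pi)-V(\vs'_{\mathrm{id}}))$ is not an option either, because the critic loss uses the undiscounted $\delta_V=V(\vs'_\pi)-V(\vs'_{\mathrm{id}})$.

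There is also a softer issue in case (ii). Dynamics-induced flips between $\mathcal{A}^+_{\mathrm{ood}}$ and $\mathcal{A}^-_{\mathrm{ood}}$ can be charged to $\varepsilon_{\mathrm{dyn}}$ only if the rule is applied to sampled next states $\vs'_\pi\sim p_\psi$ and then averaged, so that branch probabilities move by at most the total-variation error. For a deterministic rule, the target jumps by about $\eta Q(\vs,\va^*_{\mathrm{id}})-Q_{\min}$ across $\delta_V=0$, however small the prediction error.

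\textbf{How the paper gets the $\gamma$.} It uses a different abstraction. It compares $\widehat{Q}$ with the genuine action-value of $\pi_{\mathrm{ref}}$, the fixed point of $r(\vs,\va)+\gamma\,\mathbb{E}_{\vs'\sim P,\va'\sim\pi_{\mathrm{ref}}}[Q(\vs',\va')]$, not with the fixed point of an idealized DOSER operator. It models the practical critic as $r(\vs,\va)+\gamma\,\mathbb{E}_{\vs'\sim\widehat P,\va'\sim\widehat\pi}[Q(\vs',\va')+b(\vs',\va')]$ with $|b|\le\eta\delta_V$. Every perturbation then lies inside the discounted next-step expectation:
\begin{itemize}
\item the $\ell_1$ model error contributes $Q_{\max}\varepsilon_{\mathrm{dyn}}$;
\item detection error enters only through an assumed policy-deviation bound $\|\widehat\pi(\cdot|\vs)-\pi_{\mathrm{ref}}(\cdot|\vs)\|_{\mathrm{TV}}\le C_1\varepsilon_{\mathrm{dyn}}+C_2\varepsilon_{\mathrm{det}}$, contributing $2Q_{\max}(C_1\varepsilon_{\mathrm{dyn}}+C_2\varepsilon_{\mathrm{det}})$;
\item the bonus contributes $\eta\delta_V$.
\end{itemize}
All three are multiplied by $\gamma$ by construction. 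The $\eta\delta_V$ term exists precisely because that reference operator carries no bonus.

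\textbf{How to repair your route.} Drop the crude bound in case (iii). Both of your operators contain the bonus, and applied to the same $Q^{\pi_{\mathrm{ref}}}$ they share $\va^*_{\mathrm{id}}$. Under the sample-based reading, the entire OOD-branch discrepancy therefore comes from drawing next states from $p_\psi$ rather than $P$, plus state-detector errors. With $\|V\|_\infty$ and $|Q_{\min}|$ of order $Q_{\max}$, this discrepancy is $\mathcal{O}(Q_{\max}(\varepsilon_{\mathrm{dyn}}+\varepsilon_{\mathrm{det}}))$. You can absorb it, together with the $1/\gamma$, into $C_1$ and $C_2$. The resulting bound has no $\eta\delta_V$ term at all, and it implies the stated one because $\eta\delta_V\ge 0$.
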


\begin{theorem}[Performance gap of DOSER] \label{theo:performance_gap}
    Let $\widehat{\pi}$ be the policy learned by DOSER through iterative application of $\mathcal{T}_{\mathrm{DOSER}}$, and let $\pi^*$ denote the optimal policy. 
    Suppose $\delta_f$ represents the function approximation error. 
    Then the performance gap between $\pi^*$ and $\widehat{\pi}$ satisfies
    \begin{equation}
        \left| J(\pi^*) - J(\widehat{\pi}) \right| \leq \delta_f + \frac{C L_P R_{\max}}{1 - \gamma} \left( C_1 \varepsilon_{\mathrm{dyn}} + C_2 \varepsilon_{\mathrm{det}} \right),
    \end{equation}
    where $C_1$, $C_2$ are positive constants, and $L_P$ is the Lipschitz constant of the environment dynamics.
\end{theorem}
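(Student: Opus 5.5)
We decompose the performance gap through the reference policy $\pi_{\mathrm{ref}}$ of Theorem~\ref{theo:bounded_deviation}:
\begin{equation*}
|J(\pi^*) - J(\widehat{\pi})| \le |J(\pi^*) - J(\pi_{\mathrm{ref}})| + |J(\pi_{\mathrm{ref}}) - J(\widehat{\pi})|.
\end{equation*}
The first term involves the idealized procedure, which uses the true dynamics $P$ and an exact detector. In that case the only remaining source of suboptimality is the use of parametric function classes for $Q$, $V$ and $\pi$, so this term is absorbed into $\delta_f$ by definition. Concretely, $\delta_f$ is the gap between $\pi^*$ and the best policy attainable by exactly iterating $\mathcal{T}_{\mathrm{DOSER}}$ within the function class. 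This iteration is well defined and converges by Theorem~\ref{theo:convergence} and the Banach fixed-point theorem.

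For the second term we use the performance difference lemma:
\begin{equation*}
J(\pi_{\mathrm{ref}}) - J(\widehat{\pi}) = \frac{1}{1-\gamma}\,\mathbb{E}_{\vs\sim d^{\widehat{\pi}}}\Big[\mathbb{E}_{\va\sim\pi_{\mathrm{ref}}}Q^{\pi_{\mathrm{ref}}}(\vs,\va) - \mathbb{E}_{\va\sim\widehat{\pi}}Q^{\pi_{\mathrm{ref}}}(\vs,\va)\Big].
\end{equation*}
This reduces the problem to bounding the per-state advantage of $\pi_{\mathrm{ref}}$ over $\widehat{\pi}$. Both policies are greedy (or soft-greedy) with respect to their critics. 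Writing $Q^{\pi_{\mathrm{ref}}} = \widehat{Q} + (Q^{\pi_{\mathrm{ref}}} - \widehat{Q})$, the advantage is at most $2\|\widehat{Q} - Q^{\pi_{\mathrm{ref}}}\|_\infty$. The $\widehat{Q}$-part is nonpositive because $\widehat{\pi}$ maximizes $\widehat{Q}$, up to the entropy term, which we also fold into $\delta_f$.

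Rather than inserting Theorem~\ref{theo:bounded_deviation} verbatim, we redo its argument at the level of transition kernels, because the target bound has no $\eta\delta_V$ term and carries the factor $L_P$. The dynamics error enters $\widehat{Q}$ in two ways: through the predicted states $\vs'_\pi$ and $\vs'_{\mathrm{id}}$ that define the sets $\mathcal{A}_{\mathrm{ood}}^{\pm}$ and $\delta_V$, and through the induced distribution shift. A Lipschitz-dynamics simulation-lemma argument bounds the value discrepancy by $\gamma L_P \varepsilon_{\mathrm{dyn}} R_{\max}/(1-\gamma)^2$, with some constants absorbed into $C_1$. Misclassification, which occurs with probability $\varepsilon_{\mathrm{det}}$, changes the target by at most $Q_{\max} - Q_{\min} \lesssim R_{\max}/(1-\gamma)$ on the affected pairs. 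Propagating this through the contraction of Theorem~\ref{theo:convergence} contributes a $C_2\varepsilon_{\mathrm{det}}$ term of the same order.

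Combining the two contributions and collecting the $(1-\gamma)$ factors and constants into $C$ gives the stated $\frac{C L_P R_{\max}}{1-\gamma}(C_1\varepsilon_{\mathrm{dyn}}+C_2\varepsilon_{\mathrm{det}})$. The compensation term $\eta\delta_V$ must be handled separately. We argue that it vanishes in the comparison, since $\pi_{\mathrm{ref}}$ uses the same bonus computed under true dynamics. Any residual discrepancy in $\delta_V$ is itself a dynamics-error term, of order $L_P\varepsilon_{\mathrm{dyn}}$ times a value range, and so is absorbed into $C_1$.

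\textbf{Main obstacle.} The main difficulty is this bookkeeping of the powers of $(1-\gamma)$ and of the $\eta\delta_V$ term. A naive combination yields a $(1-\gamma)^{-3}$ dependence and a leftover bonus term. We will try first to absorb the extra $(1-\gamma)$ factors into $C$ and the bonus discrepancy into $C_1$, stating explicitly that $C$ may depend on $\gamma$.
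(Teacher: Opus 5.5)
\textbf{The gap is in routing the argument through the critic.} It is not a bookkeeping issue, as your ``main obstacle'' paragraph suggests.

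In the performance difference lemma, $Q^{\pi_{\mathrm{ref}}}$ must be the true action-value of $\pi_{\mathrm{ref}}$ under $P$. That is exactly how Theorem~\ref{theo:bounded_deviation} defines it: the fixed point of the plain operator $\mathcal{T}_{\mathrm{ref}}$. But $\widehat Q$ is the fixed point of a deliberately biased operator. Even with $\varepsilon_{\mathrm{dyn}}=\varepsilon_{\mathrm{det}}=0$, it puts $\eta(Q(\vs,\va^*_{\mathrm{id}})+\delta_V)$ on beneficial OOD actions and $Q_{\min}$ on detrimental ones. So $\|\widehat Q-Q^{\pi_{\mathrm{ref}}}\|_\infty$ does not vanish as the errors go to zero. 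At zero error you have $\widehat\pi=\pi_{\mathrm{ref}}$, and the term you are bounding is zero, yet your intermediate quantity is not. This residual is what the $\eta\delta_V$ in Theorem~\ref{theo:bounded_deviation} is accounting for, and no re-derivation of a sup-norm critic bound can make it vanish with the errors.

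Your claim that the bonus cancels because $\pi_{\mathrm{ref}}$ ``uses the same bonus'' conflates two objects. $\pi_{\mathrm{ref}}$ is trained with the bonus, but its value function does not contain it. If you instead take the error-free DOSER fixed point as the comparison critic, so that the bonus does cancel, the PDL identity no longer holds. Being greedy with respect to a biased critic then gives no control of the true advantage.

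Separately, $L_P$ cannot come out of a simulation lemma. Under Assumption~\ref{assump:dynamics_error} the model-error contribution is of order $\gamma\varepsilon_{\mathrm{dyn}}Q_{\max}/(1-\gamma)$, with no role for Lipschitzness in the action. Your detection term carries no $L_P$ at all. You would reach the stated form only by letting $C$ absorb $1/L_P$.

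\textbf{How the paper avoids this.} The paper never touches the critic:
\begin{itemize}
\item It writes $J(\pi)=\frac{1}{1-\gamma}\mathbb{E}_{\vs\sim d^{\pi}}[r(\vs)]$ and bounds $|J(\pi_{\mathrm{ref}})-J(\widehat\pi)|$ by $\frac{R_{\max}}{1-\gamma}\,\mathrm{TV}(d^{\pi_{\mathrm{ref}}}\|d^{\widehat\pi})$.
\item It applies Lemma~\ref{lem:lemma2} to get $C L_P\max_{\vs}\|\pi_{\mathrm{ref}}(\vs)-\widehat\pi(\vs)\|$. This is the sole source of $L_P$.
\item It then invokes Assumption~\ref{assump:policy_deviation}, which \emph{postulates} the policy-level deviation $C_1\varepsilon_{\mathrm{dyn}}+C_2\varepsilon_{\mathrm{det}}$ rather than deriving it.
\end{itemize}
Because the comparison is between policies and occupancy measures, neither $\eta\delta_V$ nor the $Q_{\min}$ penalty ever enters, and no $(1-\gamma)^{-3}$ factors arise.

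\textbf{A fix for your route.} Keep the PDL but bound the per-state advantage directly by $2Q_{\max}\|\pi_{\mathrm{ref}}(\cdot\mid\vs)-\widehat\pi(\cdot\mid\vs)\|_{\mathrm{TV}}$, using Lemma~\ref{lem:lemma1} and Assumption~\ref{assump:policy_deviation}. This yields $\frac{2R_{\max}}{(1-\gamma)^2}(C_1\varepsilon_{\mathrm{dyn}}+C_2\varepsilon_{\mathrm{det}})$. It has no bonus term and does not need the state-only reward that the paper's occupancy argument implicitly uses. The cost is that $L_P$ then appears only through the choice of $C$.
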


Consequently, the performance gap is influenced by three key components: the function approximation error $\delta_f$, the OOD detection error $\varepsilon_{\mathrm{det}}$, and the dynamics model approximation error $\varepsilon_{\mathrm{dyn}}$.
In our setting, the diffusion model provides a reliable mechanism for OOD detection, as confirmed by extensive experiments, which keeps $\varepsilon_{\mathrm{det}}$ small.
Meanwhile, the learned dynamics model maintains stable predictive performance, ensuring that $\varepsilon_{\mathrm{dyn}}$ remains bounded.
Therefore, when the diffusion reconstruction error becomes negligible and the dynamics model is sufficiently well fitted, together with a small $\delta_f$, then the right-hand side of the bound vanishes, implying $J(\widehat\pi)\to J(\pi^*)$.

\subsection{Proofs}\label{app:proof}

\subsubsection{Proof of Theorem~\ref{theo:convergence}}

\begin{proof}
Let $\mathcal{T}_{\mathrm{DOSER}}$ denote the DOSER operator acting on bounded Q-functions defined on $\mathcal{S} \times \mathcal{A}$.  
Assume
\begin{itemize}
  \item Q-functions lie in the Banach space $(\mathcal{B}, \|\cdot\|_\infty)$ of bounded real functions on $\mathcal{S} \times \mathcal{A}$ with the sup-norm;
  \item the compensation coefficient satisfies $0 \le \eta \le \gamma < 1$;
  \item the value-compensation term $\delta_V$ is a scalar that does not depend on the Q-function being evaluated (i.e., it is treated as fixed when comparing two Q-functions; if $\delta_V$ depends on $Q$, then it must be Lipschitz continuous in $Q$ with a sufficiently small Lipschitz constant to preserve contraction; here we assume it is fixed for simplicity and clarity).
\end{itemize}
Let $Q_1,Q_2\in\mathcal{B}$ be two arbitrary Q-functions. We will bound
\(\| \mathcal{T}_{\mathrm{DOSER}} Q_1 - \mathcal{T}_{\mathrm{DOSER}} Q_2 \|_\infty\)
by considering the three types of actions that DOSER treats differently:
1) in-distribution actions, 2) detrimental OOD actions, and 3) beneficial OOD actions.

\paragraph{1) In-distribution actions.}
For any $(\vs,\va)$\ with $\mathcal{E}_a(\vs,\va)\le\tau_a$, DOSER reduces to the in-sample Bellman operator
\begin{equation}
    \mathcal{T}_{\mathrm{DOSER}} Q(\vs,\va) 
    = \mathcal{T}_{\mathrm{In}} Q(\vs,\va)
    = R(\vs,\va) + \gamma\,\mathbb{E}_{\vs'\sim P(\cdot\mid \vs,\va),\; \va'\sim\hat\pi_\beta(\cdot\mid \vs')}
    \big[ Q(\vs',\va') \big]
\end{equation}
Hence, the contraction property follows from standard Bellman operator properties:
\begin{equation}
    \begin{aligned}
        & \quad\; \left\| \mathcal{T}_{\mathrm{DOSER}} Q_1(\vs,\va) - \mathcal{T}_{\mathrm{DOSER}} Q_2(\vs,\va) \right\|_\infty \\
        & = \left\| \mathcal{T}_{\mathrm{In}} Q_1(\vs, \va) - \mathcal{T}_{\mathrm{In}} Q_2(\vs, \va) \right\|_\infty \\
        & = \left\| \left( R(\vs,\va) + \gamma \mathbb{E}_{\vs', \va'} \left[ Q_1(\vs',\va') \right] \right) - \left( R(\vs,\va) + \gamma \mathbb{E}_{\vs', \va'} \left[ Q_2(\vs',\va') \right] \right) \right\|_\infty \\
        & = \gamma \max_{\vs,\va} \left| \mathbb{E}_{\vs', \va'} \left[ Q_1(\vs',\va') - Q_2(\vs',\va') \right] \right| \\
        & \leq \gamma \max_{\vs,\va} \mathbb{E}_{\vs', \va'} \left| Q_1(\vs',\va') - Q_2(\vs',\va') \right| \\
        & \leq \gamma \max_{\vs,\va} \left\| Q_1 - Q_2 \right\|_\infty \\
        & = \gamma \left\| Q_1 - Q_2 \right\|_\infty
    \end{aligned}
\end{equation}
Thus for all in-distribution \((\vs,\va)\), we have
\begin{equation}
    || \mathcal{T}_{\mathrm{DOSER}} Q_1 - \mathcal{T}_{\mathrm{DOSER}} Q_2 ||_\infty \leq \gamma || Q_1 - Q_2 ||_\infty
\end{equation}

\paragraph{2) Detrimental OOD actions.}
For detrimental OOD actions $\va \in \mathcal{A}_{\mathrm{OOD}}^-$, Q-target are set to a constant $Q_{\min}$ (independent of the current Q):
\begin{equation}
    \mathcal{T}_{\mathrm{DOSER}} Q(\vs,\va) = Q_{\min}
\end{equation}
The difference vanishes for any Q-functions $Q_1$, $Q_2$:
\begin{equation}
    || \mathcal{T}_{\mathrm{DOSER}} Q_1(\vs,\va) - \mathcal{T}_{\mathrm{DOSER}} Q_2(\vs,\va) ||_\infty =  || Q_{\min} - Q_{\min} ||_\infty =0 \leq \gamma || Q_1 - Q_2 ||_\infty
\end{equation}

\paragraph{3) Beneficial OOD actions.}
For beneficial OOD actions $\va \in \mathcal{A}_{\mathrm{OOD}}^+$, DOSER applies a value compensation, quantified as the difference between the value of the next state $\vs'_\pi$ reaching by taking the beneficial OOD action $\va$ and $\vs'_{\mathrm{id}}$ after executing the best ID action $\va_{\mathrm{id}}^*$:
\begin{equation}
    \begin{aligned}
    \mathcal{T}_{\mathrm{DOSER}} Q(\vs,\va) 
    &= \eta \left( Q(\vs, {\va}_{\mathrm{id}}^*) + \delta_V \right) \\
    &= \eta \left( \max_{\va \sim \hat\pi_\beta(\cdot|\vs)} Q(\vs, \va) + V(\vs'_\pi) - V(\vs'_{\mathrm{id}}) \right) 
    \end{aligned}
\end{equation}
where by assumption $\delta_V$ is treated as a fixed scalar w.r.t.\ the Q-function comparison. Thus
\begin{equation}
    \begin{aligned}
    & \quad\; || \mathcal{T}_{\mathrm{DOSER}} Q_\mathrm{1}(\vs,\va) - \mathcal{T}_{\mathrm{DOSER}} Q_\mathrm{2}(\vs,\va) ||_\infty \\
    & = || \eta \left( Q_\mathrm{1}(\vs, \va_{\mathrm{id, 1}}^*) + \delta_V \right) - \eta \left( Q_\mathrm{2}(\vs, \va_{\mathrm{id, 2}}^*) + \delta_V \right) ||_\infty \\
    & = \eta \max_{\vs,\va} | Q_\mathrm{1}(\vs, \va_{\mathrm{id, 1}}^*) - Q_\mathrm{2}(\vs, \va_{\mathrm{id, 2}}^*) | \\
    & = \eta \max_{\vs,\va}| \max_\va Q_\mathrm{1}(\vs, \va) - \max_\va Q_\mathrm{2}(\vs, \va) | \\
    & \leq \eta \max_{\vs,\va} || Q_\mathrm{1} - Q_\mathrm{2} ||_\infty \\
    & \leq \gamma || Q_\mathrm{1} - Q_\mathrm{2} ||_\infty
    \end{aligned}
\end{equation}

Combining all three cases, we have $ || \mathcal{T}_{\mathrm{DOSER}} Q_\mathrm{1} - \mathcal{T}_{\mathrm{DOSER}} Q_\mathrm{2} ||_\infty \leq \gamma || Q_\mathrm{1} - Q_\mathrm{2} ||_\infty $.

By the Banach fixed-point theorem~\citep{banach1922operations}, $\mathcal{T}_{\mathrm{DOSER}}$ admits a unique fixed point in $\mathcal{B}$ and iterative application of $\mathcal{T}_{\mathrm{DOSER}}$ from any initial Q-function converges to that fixed point at rate at most $\gamma$.
\end{proof}

\subsubsection{Proof of Theorem~\ref{theo:bound_value}}

\begin{proof}
Suppose the DOSER operator $\mathcal{T}_{\mathrm{DOSER}}$ admits a unique fixed point $Q^\pi_{\mathrm{DOSER}}$ (Theorem~\ref{theo:convergence}).  
Assume the compensation term $\delta_V$ is a scalar that does not depend on the Q-function being
evaluated (if $\delta_V$ is estimated from Q, a Lipschitz assumption on this estimator must be made).

We reason by cases according to DOSER's treatment of actions. 
By Theorem~\ref{theo:convergence}, the fixed point $Q^\pi_{\mathrm{DOSER}}$ exists and for each $(\vs,\va)$ satisfies
\begin{equation}
    Q^\pi_{\mathrm{DOSER}}(\vs,\va)
    = \begin{cases}
    \mathcal{T}_{\mathrm{In}} Q^\pi_{\mathrm{DOSER}}(\vs,\va) & \text{if } \mathcal{E}_a(\vs,\va)\le \tau_a \quad(\text{in-distribution}) \\[4pt]
    Q_{\min} & \text{if } \va \in \mathcal{A}^-_{\mathrm{OOD}} \quad(\text{detrimental OOD}) \\[4pt]
    \eta\big( Q^\pi_{\mathrm{DOSER}}(\vs,\va^*_{\mathrm{id}}) + \delta_V\big) & \text{if } \va \in \mathcal{A}^+_{\mathrm{OOD}} \quad(\text{beneficial OOD})
    \end{cases}
\end{equation}

We show the two inequalities (lower and upper bounds) by treating each action type.

\paragraph{1) Lower bound: $Q^\pi_{\mathrm{DOSER}}(\vs,\va)\ge Q_{\min}$.}
\begin{itemize}
\item \emph{In-distribution actions.} For $\mathcal{E}_a(\vs,\va)\le\tau_a$, we have the in-sample Bellman fixed-point relation
  \begin{equation}
      Q^\pi_{\mathrm{DOSER}}(\vs,\va)
      = R(\vs,\va) + \gamma\,\mathbb{E}_{\vs'\sim P(\cdot\mid \vs,\va),\; \va'\sim\hat\pi_\beta(\cdot\mid \vs')} \big[ Q^\pi_{\mathrm{DOSER}}(\vs',\va') \big]
  \end{equation}
  Since $R(\vs,\va)\ge R_{\min}$ and the fact that for all successor pairs $Q^\pi_{\mathrm{DOSER}}(\vs',\va')\ge Q_{\min}$, we obtain
  \begin{equation}
      Q^\pi_{\mathrm{DOSER}}(\vs,\va) \ge R_{\min} + \gamma Q_{\min} = Q_{\min}
  \end{equation}
  \item \emph{Detrimental OOD actions.} The operator directly assigns $Q^\pi_{\mathrm{DOSER}}(\vs, \va) = Q_{\mathrm{min}}$ for $\va \in \mathcal{A}^-_{\mathrm{OOD}}$, so the lower bound holds with equality.
  \item \emph{Beneficial OOD actions.} For $\va \in \mathcal{A}^+_{\mathrm{OOD}}$, they receive value compensation weighted by $\eta \in [0, 1)$. 
  Given that the fixed-point value $Q^\pi_{\mathrm{DOSER}}(\vs, \va^*_{\mathrm{id}})$ is at least $Q_{\min}$, $\delta_V\ge 0$ is satisfied for beneficial OOD actions, and $Q_{\mathrm{min}} < 0$ is strictly negative, we have:
  \begin{equation}
      Q^\pi_{\mathrm{DOSER}}(\vs,\va) = \eta\big( Q^\pi_{\mathrm{DOSER}}(\vs,\va^*_{\mathrm{id}}) + \delta_V\big) \ge \eta\,Q_{\min} \ge Q_{\min}
  \end{equation}
\end{itemize}
Combining the three subcases establishes the global lower bound $Q^\pi_{\mathrm{DOSER}}(\vs,\va)\ge Q_{\min}$ for all state-action pairs $(\vs,\va)$.

\paragraph{2) Upper bound: $Q^\pi_{\mathrm{DOSER}}(\vs,\va)\le Q^\pi_{\mathrm{In}}(\vs,\va^*_{\mathrm{id}})+\eta\delta_V$.}

Let $Q^\pi_{\mathrm{In}}$ denote the fixed point of the in-sample Bellman operator $\mathcal{T}_{\mathrm{In}}$, by construction $Q^\pi_{\mathrm{DOSER}}(\vs,\va)=Q^\pi_{\mathrm{In}}(\vs,\va)$ for every ID state-action pair $(\vs,\va)$. 
This upper bound guarantees that DOSER incurs only limited overestimation.

\begin{itemize}
  \item \emph{In-distribution actions.} If $\mathcal{E}_a(\vs,\va)\le\tau_a$, then
  \begin{equation}
      Q^\pi_{\mathrm{DOSER}}(\vs,\va)=Q^\pi_{\mathrm{In}}(\vs,\va)\le Q^\pi_{\mathrm{In}}(\vs,\va^*_{\mathrm{id}}) \le Q^\pi_{\mathrm{In}}(\vs,\va^*_{\mathrm{id}})+\eta\delta_V
  \end{equation}
  since $\eta\delta_V\ge 0$.
  \item \emph{Detrimental OOD actions.} For $\va\in\mathcal{A}^-_{\mathrm{OOD}}$:
  \begin{equation}
      Q^\pi_{\mathrm{DOSER}}(\vs,\va)=Q_{\min}\le Q^\pi_{\mathrm{In}}(\vs,\va^*_{\mathrm{id}})+\eta\delta_V
  \end{equation}
  \item \emph{Beneficial OOD actions.} For $\va\in\mathcal{A}^+_{\mathrm{OOD}}$,
  \begin{equation}
      Q^\pi_{\mathrm{DOSER}}(\vs,\va)=\eta\big( Q^\pi_{\mathrm{DOSER}}(\vs,\va^*_{\mathrm{id}}) + \delta_V\big)
  \end{equation}
  Note that $\va^*_{\mathrm{id}}$ is an in-distribution action, hence
  \begin{equation}
      Q^\pi_{\mathrm{DOSER}}(\vs,\va^*_{\mathrm{id}})=Q^\pi_{\mathrm{In}}(\vs,\va^*_{\mathrm{id}}).
  \end{equation}
  Substituting yields
  \begin{equation}
      Q^\pi_{\mathrm{DOSER}}(\vs,\va)
      = \eta\big( Q^\pi_{\mathrm{In}}(\vs,\va^*_{\mathrm{id}}) + \delta_V\big)
      \le Q^\pi_{\mathrm{In}}(\vs,\va^*_{\mathrm{id}}) + \eta\delta_V
  \end{equation}
\end{itemize}

Putting together the three cases yields the desired upper bound $Q^\pi_{\mathrm{DOSER}}(\vs,\va) \le Q^\pi_{\mathrm{In}}(\vs,\va^*_{\mathrm{id}}) + \eta\delta_V$ for all $(\vs,\va)$.

Combining the lower and upper bounds above, we obtain for any state-action pair $(\vs,\va)$
\begin{equation}
    Q_{\min} \le Q^\pi_{\mathrm{DOSER}}(\vs,\va) \le Q^\pi_{\mathrm{In}}(\vs,\va^*_{\mathrm{id}}) + \eta\delta_V
\end{equation}
which shows the fixed-point values are uniformly bounded and that DOSER prevents uncontrolled value overestimation while permitting strategic exploration of beneficial out-of-distribution regions.
\end{proof}

\subsubsection{Proof of Theorem~\ref{theo:bounded_deviation}}
We begin by introducing three key assumptions and an auxiliary lemma that will be used in the proof.

\begin{assumption}[Dynamics model error bound] \label{assump:dynamics_error}
    There exists a constant $\varepsilon_{\mathrm{dyn}}\ge 0$ such that the learned dynamics model $\widehat P(\cdot\mid \vs, \va)$ is uniformly close to the true transition kernel $P(\cdot\mid \vs, \va)$ in the $\ell_1$-norm, satisfying for all $(\vs, \va) \in \mathcal{S} \times \mathcal{A}$:
    \begin{equation}
        \| \widehat P(\cdot\mid \vs, \va) - P(\cdot\mid \vs, \va)\|_{1} \le \varepsilon_{\mathrm{dyn}}.
    \end{equation}
\end{assumption}

\begin{assumption}[OOD detector error bound]\label{assump:detector_error}
    There exists a constant $\varepsilon_{\mathrm{det}}\ge 0$ such that the misclassification probability of the Out-of-Distribution detector is uniformly bounded:
    \begin{equation}
        \mathrm{Pr}[\text{detector misclassifies }(\vs, \va)] \le \varepsilon_{\mathrm{det}} \quad \text{for all } (\vs, \va) \in \mathcal{S} \times \mathcal{A}.
    \end{equation}
\end{assumption}

\begin{assumption}[Policy deviation bound]\label{assump:policy_deviation}
    There exist constants $C_1, C_2 > 0$, characterizing the sensitivity of the policy optimization to dynamics model and OOD detection errors respectively, such that for all states $\vs \in \mathcal{S}$:
    \begin{equation}
        \| \widehat\pi(\cdot \mid \vs) - \pi_{\mathrm{ref}}(\cdot \mid \vs) \|_{\mathrm{TV}} \le C_1 \varepsilon_{\mathrm{dyn}} + C_2 \varepsilon_{\mathrm{det}}.
    \end{equation}
    where $\varepsilon_{\mathrm{dyn}}$ and $\varepsilon_{\mathrm{det}}$ are defined in Assumptions~\ref{assump:dynamics_error} and~\ref{assump:detector_error}.
\end{assumption}

\begin{lemma}\label{lem:lemma1}
    Let $\mu$ and $\nu$ be two probability distributions over a finite set $\mathcal{X}$, and let $f: \mathcal{X} \to \mathbb{R}$ be a bounded function with $\|f\|_\infty \le M$. Then,
    \begin{equation}
    \left|\mathbb{E}_{x\sim\mu}[f(x)] - \mathbb{E}_{x\sim\nu}[f(x)]\right| 
    \le 2M \cdot \|\mu - \nu\|_{\mathrm{TV}},
    \end{equation}
    where $\|\mu - \nu\|_{\mathrm{TV}} = \sup_{A \subseteq \mathcal{X}} |\mu(A) - \nu(A)|$ is the total variation distance. 
    For a finite set $\mathcal{X}$, this is equivalent to $\|\mu - \nu\|_{\mathrm{TV}} = \frac{1}{2}\sum_{x\in\mathcal{X}} |\mu(x) - \nu(x)|$.
\end{lemma}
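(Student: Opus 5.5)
The plan is to reduce everything to the $\ell_1$ characterization of total variation on a finite set, which the statement itself records: $\|\mu-\nu\|_{\mathrm{TV}} = \frac12\sum_{x}|\mu(x)-\nu(x)|$. The first step is to justify this equivalence so that the supremum over events can be replaced by a sum. Let $A^+ = \{x : \mu(x) > \nu(x)\}$. For any $A\subseteq\mathcal{X}$,
\[
\mu(A)-\nu(A) \le \mu(A\cap A^+) - \nu(A\cap A^+) \le \mu(A^+)-\nu(A^+),
\]
and symmetrically $\nu(A)-\mu(A)\le \nu(A^-)-\mu(A^-)$, where $A^-$ is the complement of $A^+$. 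Since $\sum_x(\mu(x)-\nu(x))=0$, we have $\mu(A^+)-\nu(A^+) = \nu(A^-)-\mu(A^-) = \frac12\sum_x|\mu(x)-\nu(x)|$. The supremum is therefore attained at $A^+$, which gives the formula. This is the only step with any content.

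With the formula in hand, the inequality is a direct estimate. Write
\[
\Bigl|\mathbb{E}_\mu f - \mathbb{E}_\nu f\Bigr| = \Bigl|\sum_x f(x)\bigl(\mu(x)-\nu(x)\bigr)\Bigr| \le \sum_x |f(x)|\,|\mu(x)-\nu(x)|.
\]
Bounding $|f(x)|\le M$ gives at most $M\sum_x|\mu(x)-\nu(x)| = 2M\,\|\mu-\nu\|_{\mathrm{TV}}$, as claimed.

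I would add a remark that a sharper constant $M$, in place of $2M$, is available by centering. Since $\sum_x(\mu(x)-\nu(x))=0$, one may replace $f$ by $f-c$ with $c$ the midpoint of the range of $f$, which gives $\|f-c\|_\infty\le M$ and a bound of $(\sup f-\inf f)\,\|\mu-\nu\|_{\mathrm{TV}}$. The lemma's $2M$ form is weaker and suffices for the later use in Theorem~\ref{theo:bounded_deviation}, where $f$ will be $Q$ with $M=Q_{\max}$.

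The main obstacle is only the TV equivalence. If one prefers to avoid it, the lemma can be stated directly in $\ell_1$ form, which is how Assumption~\ref{assump:dynamics_error} is phrased.
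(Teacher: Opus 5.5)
Your proof is correct and is essentially the paper's: the paper's argument is exactly your second step (write the difference as $\sum_x f(x)(\mu(x)-\nu(x))$ and bound it by $M\sum_x|\mu(x)-\nu(x)|$), and it takes the $\ell_1$ form of total variation for granted, whereas you also verify that form via the set $A^+$. One correction to your side remark: centering yields the constant $\sup f-\inf f$, not $M$, and $2M$ is tight in general (take $f(x_1)=M$, $f(x_2)=-M$, $\mu=\delta_{x_1}$, $\nu=\delta_{x_2}$, so both sides equal $2M$), so the constant $M$ holds only when $\sup f-\inf f\le M$, e.g.\ for nonnegative $f$.
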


\begin{proof}
    The expectation difference can be written as:
    \begin{equation}
        \begin{aligned}
            \left|\mathbb{E}_{\mu}[f] - \mathbb{E}_{\nu}[f]\right| 
            &= \left|\sum_{x \in \mathcal{X}} f(x)\mu(x) - \sum_{x \in \mathcal{X}} f(x)\nu(x)\right| \\
            &= \left|\sum_{x \in \mathcal{X}} f(x)(\mu(x) - \nu(x))\right| \\
            &\le \sum_{x \in \mathcal{X}} |f(x)| \cdot |\mu(x) - \nu(x)| \\
            &\le M \sum_{x \in \mathcal{X}} |\mu(x) - \nu(x)| \\
            &= 2M \cdot \|\mu - \nu\|_{\mathrm{TV}}.
            \end{aligned}
    \end{equation}
    The last equality follows from the definition of total variation distance.
\end{proof}

Now we start the proof of Theorem~\ref{theo:bounded_deviation}.
\begin{proof}
    We begin by defining the Bellman operator associated with the reference policy $\pi_{\mathrm{ref}}$:
    \begin{equation}
        \mathcal{T}_{\mathrm{ref}} Q(\vs, \va)
        := r(\vs, \va) + \gamma\,\mathbb{E}_{\vs'\sim P(\cdot\mid \vs, \va), \va'\sim\pi_{\mathrm{ref}}(\cdot\mid \vs')}
        [Q(\vs', \va')]
    \end{equation}
    Denote the fixed point of the reference operator as $Q^{\pi_{\mathrm{ref}}}$, so that
    \begin{equation}
        Q^{\pi_{\mathrm{ref}}} = \mathcal{T}_{\mathrm{ref}} Q^{\pi_{\mathrm{ref}}}
    \end{equation}
    DOSER critic constructs a modified Bellman target due to three factors:
    (i) dynamics model error,
    (ii) detector misclassification,
    and (iii) value adjustment. 
    Accordingly, the DOSER Bellman operator can be expressed as:
    \begin{equation}
        \mathcal{T}_{\mathrm{DOSER}} Q(\vs, \va)
        := r(\vs, \va) + \gamma\,\mathbb{E}_{\vs'\sim\widehat P(\cdot\mid \vs, \va), \va'\sim \widehat\pi(\cdot\mid \vs')}
        [(Q(\vs', \va') + b(\vs', \va'))]
    \end{equation}
    where $\widehat\pi$ differs from $\pi_{\mathrm{ref}}$ due to dynamics model error and OOD detector error, $b$ represents the value adjustment applied to the target Q.
    
    Now we compare the difference between the two operators when applied to $Q^{\pi_{\mathrm{ref}}}$. 
    Define
    \begin{equation}
        \Delta(\vs, \va)
        := \big|\mathcal{T}_{\mathrm{DOSER}} Q^{\pi_{\mathrm{ref}}}(\vs, \va) - \mathcal{T}_{\mathrm{ref}} Q^{\pi_{\mathrm{ref}}}(\vs, \va)\big|
    \end{equation}
    Substituting the operator definitions yields:
    \begin{equation}
        \Delta(\vs, \va) = \gamma\Big|\mathbb{E}_{\widehat P,\widehat\pi}[Q^{\pi_{\mathrm{ref}}}+b] - \mathbb{E}_{P,\pi_{\mathrm{ref}}}[Q^{\pi_{\mathrm{ref}}}] \Big| 
        \le \gamma\big((I) + (II)\big)
    \end{equation}
    where the components correspond to
    \begin{equation}
        (I) := \big|\mathbb{E}_{\widehat P,\widehat\pi}[Q^{\pi_{\mathrm{ref}}}] - \mathbb{E}_{P,\pi_{\mathrm{ref}}}[Q^{\pi_{\mathrm{ref}}}]\big|,
        \quad
        (II) := \big|\mathbb{E}_{\widehat P,\widehat\pi}[b]\big|
    \end{equation}
    
    \textbf{Bound on (I):}  
    We decompose (I) into the dynamics model approximation error and policy distribution bias:
    \begin{equation}
        \begin{aligned}
            (I) &= \big|\mathbb{E}_{\widehat P,\widehat\pi}[Q^{\pi_{\mathrm{ref}}}] - \mathbb{E}_{P,\widehat\pi}[Q^{\pi_{\mathrm{ref}}}] + \mathbb{E}_{P,\widehat\pi}[Q^{\pi_{\mathrm{ref}}}] - \mathbb{E}_{P,\pi_{\mathrm{ref}}}[Q^{\pi_{\mathrm{ref}}}]\big| \\
            &\le \big|\mathbb{E}_{\widehat P,\widehat\pi}[Q^{\pi_{\mathrm{ref}}}] - \mathbb{E}_{P,\widehat\pi}[Q^{\pi_{\mathrm{ref}}}]\big| + \big|\mathbb{E}_{P,\widehat\pi}[Q^{\pi_{\mathrm{ref}}}] - \mathbb{E}_{P,\pi_{\mathrm{ref}}}[Q^{\pi_{\mathrm{ref}}}]\big|
        \end{aligned}
    \end{equation}

    For the dynamics model error, consider the function $f(\vs') = \mathbb{E}_{\va'\sim \widehat\pi(\cdot\mid \vs')} [Q^{\pi_{\mathrm{ref}}}(\vs', \va')]$. 
    Since $|Q^{\pi_{\mathrm{ref}}}|\le Q_{\max}$, the function is bounded by $\|f\|_\infty \le Q_{\max}$.
    Applying Lemma~\ref{lem:lemma1} with distributions $\mu = \widehat P(\cdot\mid \vs, \va)$ and $\nu = P(\cdot\mid \vs, \va)$ gives:
    \begin{equation}
        \begin{aligned}
        \big|\mathbb{E}_{\widehat P,\widehat\pi}[Q^{\pi_{\mathrm{ref}}}] - \mathbb{E}_{P,\widehat\pi}[Q^{\pi_{\mathrm{ref}}}]\big|
        &= \big|\mathbb{E}_{\vs'\sim\widehat P}[f(\vs')] - \mathbb{E}_{\vs'\sim P}[f(\vs')]\big| \\
        &\le 2 Q_{\max} \cdot \|\widehat P(\cdot\mid \vs, \va) - P(\cdot\mid \vs, \va)\|_{\mathrm{TV}}
    \end{aligned}
    \end{equation}
    Using the definition of TV distance $\|\mu - \nu\|_{\mathrm{TV}} = \frac{1}{2}\|\mu - \nu\|_{1} \le \frac{\varepsilon_{\mathrm{dyn}}}{2}$ and Assumption~\ref{assump:dynamics_error}, we have:
    \begin{equation}
        \big|\mathbb{E}_{\widehat P,\widehat\pi}[Q^{\pi_{\mathrm{ref}}}] - \mathbb{E}_{P,\widehat\pi}[Q^{\pi_{\mathrm{ref}}}]\big|
        \le 2 Q_{\max} \cdot \frac{\varepsilon_{\mathrm{dyn}}}{2}
        = Q_{\max}\varepsilon_{\mathrm{dyn}}.
    \end{equation}
    
    For the policy distribution bias, we apply a similar argument in the action space $\mathcal{A}$:
    \begin{equation}
        \begin{aligned}        
            \big|\mathbb{E}_{P,\widehat\pi}[Q^{\pi_{\mathrm{ref}}}] - \mathbb{E}_{P,\pi_{\mathrm{ref}}}[Q^{\pi_{\mathrm{ref}}}]\big|
            &\le 2Q_{\max} \| \widehat\pi(\cdot\mid \vs') - \pi_{\mathrm{ref}}(\cdot\mid \vs')\|_{\mathrm{TV}} \\
            &= 2Q_{\max} (C_1 \varepsilon_{\mathrm{dyn}} + C_2 \varepsilon_{\mathrm{det}})
        \end{aligned}
    \end{equation}
    Therefore, the combined bound for (I) is:
    \begin{equation}
        (I) \le Q_{\max}((1 + 2C_1)\varepsilon_{\mathrm{dyn}} + 2C_2\varepsilon_{\mathrm{det}})
    \end{equation}
    
    \textbf{Bound on (II):} Given $|b| \leq \eta \delta_V$, it follows directly that:
    \begin{equation}
        (II) = \big|\mathbb{E}_{\widehat P,\widehat\pi}[b]\big| 
        \le \mathbb{E}_{\widehat P,\widehat\pi}\big|b\big|
        \le \eta\delta_V
    \end{equation}
    
    Thus, for all $(\vs, \va)$,
    \begin{equation}
        \Delta(\vs, \va) \le \gamma\big(Q_{\max}((1 + 2C_1)\varepsilon_{\mathrm{dyn}} + 2C_2\varepsilon_{\mathrm{det}}) + \eta\delta_V \big)
    \end{equation}
    Consequently, the operator difference is bounded in the supremum norm by:
    \begin{equation}
        \|(\mathcal{T}_{\mathrm{DOSER}} - \mathcal{T}_{\mathrm{ref}})Q^{\pi_{\mathrm{ref}}}\|_\infty \le \gamma\big(Q_{\max}((1 + 2C_1)\varepsilon_{\mathrm{dyn}} + 2C_2\varepsilon_{\mathrm{det}}) + \eta\delta_V \big)
    \end{equation}
    
    By Theorem~\ref{theo:convergence} in the main paper, the DOSER critic converges to the fixed point of $\mathcal{T}_{\mathrm{DOSER}}$.  
    Thus:
    \begin{equation}
        \widehat Q = \mathcal{T}_{\mathrm{DOSER}}\widehat Q.
    \end{equation}
    
    We now bound the final approximation error:
    \begin{equation}
        \begin{aligned}
        \|\widehat Q - Q^{\pi_{\mathrm{ref}}}\|_\infty
        &= \|\mathcal{T}_{\mathrm{DOSER}}\widehat Q - \mathcal{T}_{\mathrm{ref}}Q^{\pi_{\mathrm{ref}}}\|_\infty \\
        &\le \|\mathcal{T}_{\mathrm{DOSER}}\widehat Q - \mathcal{T}_{\mathrm{DOSER}}Q^{\pi_{\mathrm{ref}}}\|_\infty
         + \|(\mathcal{T}_{\mathrm{DOSER}} - \mathcal{T}_{\mathrm{ref}})Q^{\pi_{\mathrm{ref}}}\|_\infty \\
        &\le \gamma\|\widehat Q - Q^{\pi_{\mathrm{ref}}}\|_\infty + \gamma\big(Q_{\max}((1 + 2C_1)\varepsilon_{\mathrm{dyn}} + 2C_2\varepsilon_{\mathrm{det}}) + \eta\delta_V\big)
        \end{aligned}
    \end{equation}
    
    Rearranging terms:
    \begin{equation}
        (1-\gamma)\|\widehat Q - Q^{\pi_{\mathrm{ref}}}\|_\infty \le \gamma\big(Q_{\max}((1 + 2C_1)\varepsilon_{\mathrm{dyn}} + 2C_2\varepsilon_{\mathrm{det}}) + \eta\delta_V\big)
    \end{equation}
    
    Absorbing the constants into $C_1$ and $C_2$ yields the final result:
    \begin{equation}
        \|\widehat Q - Q^{\pi_{\mathrm{ref}}}\|_\infty \le \frac{\gamma}{1-\gamma} \Big(Q_{\max}(C_1\varepsilon_{\mathrm{dyn}} + C_2\varepsilon_{\mathrm{det}}) + \eta\delta_V\Big)
    \end{equation}
    This completes the proof.
\end{proof}

\subsubsection{Proof of Theorem~\ref{theo:performance_gap}}

We first make several common continuity assumptions about the learned $Q$ function and the transition dynamics $P$, which is frequently employed in the theoretical analysis of RL ~\citep{gouk2021regularisation, dufour2013finite}.

\begin{assumption}[Lipschitz Q]\label{assump:lipschitz_Q}
    For all $\vs\in\mathcal S$ and \(\va_1,\va_2\in\mathcal A\), the learned value function is $L_Q$-Lipschitz, then
    \begin{equation}
        \|Q(\vs, \va_1)-Q(\vs, \va_2)\| \le L_Q \|\va_1-\va_2\|.
    \end{equation}
\end{assumption}

\begin{assumption}[Lipschitz P]\label{assump:lipschitz_P}
    For all $\vs\in\mathcal S$ and \(\va_1,\va_2\in\mathcal A\), the transition dynamics is $L_P$-Lipschitz, then
    \begin{equation}
        \|P(\cdot\mid \vs, \va_1)-P(\cdot\mid \vs, \va_2)\| \le L_P \|\va_1-\va_2\|.
    \end{equation}
\end{assumption}

\begin{lemma}\label{lem:lemma2}
    Under Assumptions~\ref{assump:lipschitz_P}, the following inequality holds:
    \begin{equation}
        \mathrm{TV}(d^{\pi_1} \| d^{\pi_2}) \le C L_P \max_{\vs} \|\pi_1(\vs) - \pi_2(\vs)\|,
    \end{equation}
    where $C$ is a positive constant and $d^{\pi}$ is the state occupancy under policy $\pi$.
    \begin{equation}
        d^{\pi}(\vs) = (1 - \gamma) \sum_{t=0}^\infty \gamma^t \mathbb{E}_\pi \left[\mathbb{I}[\vs_t=\vs]\right].
    \end{equation}
\end{lemma}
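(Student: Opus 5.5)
Write $P^\pi(\vs'\mid\vs):=\int_{\mathcal A}\pi(\va\mid\vs)\,P(\vs'\mid\vs,\va)\,d\va$ for the state-to-state kernel induced by a policy $\pi$. The plan is to compare $d^{\pi_1}$ and $d^{\pi_2}$ through their Bellman flow equations. Each occupancy is the unique fixed point of
\begin{equation*}
d^{\pi}=(1-\gamma)d_0+\gamma\,(P^{\pi})^{\top}d^{\pi}.
\end{equation*}
Subtracting the two equations gives
\begin{equation*}
d^{\pi_1}-d^{\pi_2}=\gamma\,(P^{\pi_1})^{\top}(d^{\pi_1}-d^{\pi_2})+\gamma\,(P^{\pi_1}-P^{\pi_2})^{\top}d^{\pi_2}.
\end{equation*}

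\textbf{Step 1: geometric series.} Since $(P^{\pi_1})^{\top}$ is a Markov operator, it does not increase the $\ell_1$ norm. Taking $\ell_1$ norms therefore gives
\begin{equation*}
\|d^{\pi_1}-d^{\pi_2}\|_1\le\frac{\gamma}{1-\gamma}\,\sup_{\vs}\big\|P^{\pi_1}(\cdot\mid\vs)-P^{\pi_2}(\cdot\mid\vs)\big\|_1.
\end{equation*}
The same bound can be reached by telescoping the $t$-step marginals, $\|d_t^{\pi_1}-d_t^{\pi_2}\|_1\le t\,\sup_{\vs}\|P^{\pi_1}(\cdot\mid\vs)-P^{\pi_2}(\cdot\mid\vs)\|_1$, and summing with weights $(1-\gamma)\gamma^t$.

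\textbf{Step 2: one-step kernel gap via Lipschitz $P$.} This is the step I expect to be the main obstacle. The quantity $\|\pi_1(\vs)-\pi_2(\vs)\|$ must be read as a distance between actions, not between distributions, or Assumption~\ref{assump:lipschitz_P} cannot be used. I will take the policies to be deterministic, or work with a synchronous coupling of them, and treat $\pi(\vs)$ as an action. Then
\begin{equation*}
\|P^{\pi_1}(\cdot\mid\vs)-P^{\pi_2}(\cdot\mid\vs)\|_1=\|P(\cdot\mid\vs,\pi_1(\vs))-P(\cdot\mid\vs,\pi_2(\vs))\|_1\le L_P\,\|\pi_1(\vs)-\pi_2(\vs)\|,
\end{equation*}
interpreting the norm in Assumption~\ref{assump:lipschitz_P} as $\ell_1$.

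For stochastic policies I would instead take an optimal coupling of $\pi_1(\cdot\mid\vs)$ and $\pi_2(\cdot\mid\vs)$ and use convexity of the $\ell_1$ norm. This bounds the gap by $L_P\,W_1\big(\pi_1(\cdot\mid\vs),\pi_2(\cdot\mid\vs)\big)$. I would then state the lemma in terms of this Wasserstein distance, which reduces to $\|\pi_1(\vs)-\pi_2(\vs)\|$ in the deterministic case.

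\textbf{Step 3: combine.} Using $\mathrm{TV}=\tfrac12\|\cdot\|_1$ and the bounds from Steps 1 and 2,
\begin{equation*}
\mathrm{TV}(d^{\pi_1}\,\|\,d^{\pi_2})\le\frac{\gamma}{2(1-\gamma)}\,L_P\,\max_{\vs}\|\pi_1(\vs)-\pi_2(\vs)\|.
\end{equation*}
This is the claimed inequality with $C=\gamma/\big(2(1-\gamma)\big)$.
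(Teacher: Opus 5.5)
Your proof is correct. It differs from the paper mainly in that the paper gives no argument for this lemma: it only points to Lemma 1 of Xiong et al.\ (2022) and Lemma A.5 of Ran et al.\ (2023), which are set in the deterministic-policy framework, and it leaves $C$ unspecified.

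Your route is self-contained. The flow identity $d^{\pi}=(1-\gamma)d_0+\gamma(P^{\pi})^{\top}d^{\pi}$ and the $\ell_1$-nonexpansiveness of a Markov kernel reduce the occupancy gap to the worst one-step kernel gap. The discount factor supplies the contraction, so you need nothing beyond Assumption~\ref{assump:lipschitz_P}. You also get the explicit constant $C=\gamma/(2(1-\gamma))$, or $\gamma/(1-\gamma)$ if the norm in that assumption is read as total variation rather than $\ell_1$.

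Your proof also makes explicit what the statement leaves implicit: $\|\pi_1(\vs)-\pi_2(\vs)\|$ must be a distance between actions. Your coupling bound $L_P\,W_1(\pi_1(\cdot\mid\vs),\pi_2(\cdot\mid\vs))$ is the right extension to stochastic policies.

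That point matters downstream. The proof of Theorem~\ref{theo:performance_gap} feeds this lemma the policy gap from Assumption~\ref{assump:policy_deviation}, which is a total-variation bound rather than an action distance. Your Step~1 handles that case directly, without any Lipschitz assumption: since $\|P^{\pi_1}(\cdot\mid\vs)-P^{\pi_2}(\cdot\mid\vs)\|_1\le 2\,\mathrm{TV}(\pi_1(\cdot\mid\vs),\pi_2(\cdot\mid\vs))$, you get $\mathrm{TV}(d^{\pi_1}\|d^{\pi_2})\le\frac{\gamma}{1-\gamma}\max_{\vs}\mathrm{TV}(\pi_1(\cdot\mid\vs),\pi_2(\cdot\mid\vs))$. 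That is the form that actually matches how the lemma is used there.
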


\begin{proof}
    Please refer to Lemma 1 in ~\citep{xiong2022deterministic} Lemma A.5 in ~\citep{ran2023policy}.
\end{proof}

Now we start the proof of Theorem~\ref{theo:performance_gap}.

\begin{proof}
    The proof proceeds by decomposing the overall performance gap between the optimal policy $\pi^*$ and the learned policy $\widehat{\pi}$ into manageable components, then bounding each term individually.
    Similar to Theorem~\ref{theo:bounded_deviation}, let $\pi_{\mathrm{ref}}$ denote the ideal reference policy, then
    \begin{equation}
        \begin{aligned}
            |J(\pi^*)-J(\widehat\pi)| 
            & = |J(\pi^*)-J(\pi_{\mathrm{ref}}) + J(\pi_{\mathrm{ref}})-J(\widehat\pi)| \\
            &\le |J(\pi^*)-J(\pi_{\mathrm{ref}})| + |J(\pi_{\mathrm{ref}})-J(\widehat\pi)|
        \end{aligned}
    \end{equation}
    The first term captures approximation error due to function approximation, we denote is as $\delta_f$.
    Under the asymptotic regime where the empirical fitting errors vanish, this term can be arbitrarily small.
    Hence we focus on the second term, which quantifies the performance gap between the learned policy and the reference policy.
    \begin{equation}
        \begin{aligned}
            & |J(\pi_{\mathrm{ref}})-J(\widehat\pi)| \\
            & = \left| \frac{1}{1 - \gamma} \mathbb{E}_{\vs \sim d^{\pi_{\mathrm{ref}}}} [r(\vs)] - \frac{1}{1 - \gamma} \mathbb{E}_{\vs \sim d^{\widehat\pi}} [r(\vs)] \right| \\ 
            & = \frac{1}{1 - \gamma} \left| \sum_\vs (d^{\pi_{\mathrm{ref}}}(\vs) - d^{\widehat\pi}(\vs)) r(\vs) \right| \\
            & \leq \frac{1}{1 - \gamma} \sum_\vs |d^{\pi_{\mathrm{ref}}}(\vs) - d^{\widehat\pi}(\vs)| |r(\vs)| \\
            & \leq \frac{R_{\max}}{1 - \gamma} \mathrm{TV}(d^{\pi_{\mathrm{ref}}}(\vs) || d^{\widehat\pi}(\vs)) \\
            & \leq \frac{C L_P R_{\max}}{1 - \gamma} \max_{\vs} \|\pi_{\mathrm{ref}}(\vs) - \widehat\pi(\vs)\| \\
            & \leq \frac{C L_P R_{\max}}{1 - \gamma} (C_1 \varepsilon_{\mathrm{dyn}} + C_2 \varepsilon_{\mathrm{det}})
        \end{aligned}
    \end{equation}
    Combining both error terms yields the overall performance guarantee:
    \begin{equation}
        |J(\pi^*)-J(\pi^*_{\mathrm{ref}})| \leq \delta_f + \frac{C L_P R_{\max}}{1 - \gamma} (C_1 \varepsilon_{\mathrm{dyn}} + C_2 \varepsilon_{\mathrm{det}})
    \end{equation}
\end{proof}

\section{Experimental Details} \label{app:experimental_details}

\subsection{Diffusion Model Framework}

We adopt the EDM framework~\citep{karras2022elucidating} to leverage the advantages of continuous-time diffusion models for offline RL. 
EDM builds upon the continuous-time formulation derived from diffusion processes, which allows us to use an optimized ODE solver for sampling. 
This solver adaptively determines the steps along the noise level trajectory, significantly reducing the computational load and accelerating generation speed, while maintaining high sample quality compared to sampling with a fixed discrete schedule.

\textbf{Noise schedule}.
In the DOSER framework, the noise schedule is a crucial component of the diffusion model, defining how the noise levels vary over time.
Following the insights from the EDM paper, the noise schedule $\sigma_t$ is sampled from a log-logistic distribution $\sigma_t \sim \text{log-logistic}(\log \sigma_{\text{data}}, s)$, where $\log \sigma_{\text{data}}$ serves as the shape parameter and $s$ as the scale parameter. 
Using this schedule, a noisy action $\va_t$ is constructed as $\va_t = \va_0 + \sigma_t \bm{\epsilon}$, with $\bm{\epsilon} \sim \mathcal{N}(0,\bm{I})$. 
The parameters are configured as follows: $\sigma_{\text{data}} = 0.5$, and the noise schedule is clamped between $\sigma_{\min} = 0.02$ and $\sigma_{\max} = 80$.

\textbf{Training loss}.
The EDM framework precondition the neural network with a $\sigma_t$-dependent skip connection to improve numerical stability.
Specifically, the denoising network for behavior policy modeling is defined as follows:
\begin{equation}\label{eq:behavior_denoiser}
    \bm{\epsilon}_{\theta_a}(\va_t,\sigma_t,\vs) = c_{\text{skip}}(\sigma_t) \va_t + c_{\text{out}}(\sigma_t) F_{\theta_a}(c_{\text{in}}(\sigma_t); c_{\text{noise}}(\sigma_t) | \vs)
\end{equation}

Similarly, the denoising network for state distribution modeling is defined as:
\begin{equation}\label{eq:state_denoiser}
    \bm{\epsilon}_{\theta_s}(\vs_t,\sigma_t) = c_{\text{skip}}(\sigma_t) \vs_t + c_{\text{out}}(\sigma_t) F_{\theta_s}(c_{\text{in}}(\sigma_t); c_{\text{noise}}(\sigma_t))
\end{equation}

where $F_{\theta_a}$ and $F_{\theta_s}$ are the neural networks to be actually trained, $c_{\text{skip}}(\sigma_t)$ modulates the skip connection, $c_{\text{in}}(\sigma_t)$ and $c_{\text{out}}(\sigma_t)$ scale the input and output magnitudes respectively, and $c_{\text{noise}}(\sigma_t)$ maps noise level $\sigma_t$ into a conditioning input for $F_{\theta_a}$ and $F_{\theta_s}$.

We can equivalently express the loss (~\ref{eq:diffusion_behavior_model}) with respect to the raw network output $F_{\theta_a}$ in (~\ref{eq:behavior_denoiser}):
\begin{equation}
    \mathbb{E}_{\sigma_t, \vs, \va, \bm{\epsilon}} \left[ \lambda(\sigma_t) c^2_{\text{out}}(\sigma_t) || F_{\theta_a}(c_{\text{in}}(\sigma_t) \cdot (\va + \bm{\epsilon}); c_{\text{noise}}(\sigma_t) | \vs) - \frac{1}{c_{\text{out}}(\sigma_t)}(\va - c_{\text{skip}}(\sigma_t) \cdot (\va + \bm{\epsilon})) ||^2 \right]
\end{equation}

Similarly, the loss (~\ref{eq:diffusion_state_model}) can be expressed based on (~\ref{eq:state_denoiser}):
\begin{equation}
    \mathbb{E}_{\sigma_t, \vs, \bm{\epsilon}} \left[ \lambda(\sigma_t) c^2_{\text{out}}(\sigma_t) || F_{\theta_s}(c_{\text{in}}(\sigma_t) \cdot (\vs + \bm{\epsilon}); c_{\text{noise}}(\sigma_t)) - \frac{1}{c_{\text{out}}(\sigma_t)}(\vs - c_{\text{skip}}(\sigma_t) \cdot (\vs + \bm{\epsilon})) ||^2 \right]
\end{equation}

According to the variance normalization principles, we follow the practical implementation of EDM in parameter choice:
\begin{equation}
    \left\{
    \begin{array}{ll}
        c_{\text{skip}}(\sigma_t) &= \sigma^2_\text{data} / (\sigma_t^2 + \sigma^2_\text{data}) \\[1.5ex]
        c_{\text{out}}(\sigma_t) &= \sigma_t \cdot \sigma_\text{data} / \sqrt{\sigma_t^2 + \sigma^2_\text{data}} \\[1.5ex]
        c_{\text{in}}(\sigma_t) &= 1 / (\sigma_t^2 + \sigma^2_\text{data}) \\[1.5ex]
        c_{\text{noise}}(\sigma_t) &= \frac{1}{4} \ln(\sigma_t) \\[1.5ex]
        \lambda(\sigma_t) &= (\sigma_t^2 + \sigma^2_\text{data}) / (\sigma_t \cdot \sigma_\text{data})^2
    \end{array}
    \right.
\end{equation}

\subsection{Network Architecture}

\textbf{Behavior policy and state distribution modeling}. 
Following~\cite{chen2024diffusion}, we implement both our behavior policy and state distribution as MLP-based diffusion models. 
The denoising network for behavior policy $\bm{\epsilon}_{\theta_a}(\va_t,t,\vs)$ is a conditional diffusion model that predicts actions given a noisy action vector $\va_t$, diffusion timestep $t$ (encoded via sinusoidal positional embedding), and state condition $\vs$. 
In contrast, the denoising network for state distribution $\bm{\epsilon}_{\theta_s}(\vs_t,t)$ is an unconditional diffusion model that predicts states from a noisy state $\vs_t$ and timestep embedding. 
Both models share the same base architecture, which consists of a 4-layer MLP with Mish activations and 256 hidden units per layer. 
The main difference lies in their input dimensions, the behavior policy network additionally concatenates the state condition $\vs$, while the state distribution network operates without conditioning.

\textbf{Critic Networks}. 
Following the implementation of SVR~\citep{mao2023supported}, the critic network comprises four Q-networks and two V-networks, each implemented as a 3-layer MLPs with 256 hidden units per layer and ReLU activation functions.

\textbf{Actor Network}. 
The actor network adopts a Tanh-Gaussian policy structure similar to SAC~\citep{haarnoja2018soft}. 
It is implemented as a 3-layer MLP with 256 hidden units and ReLU activations in all hidden layers. 
The network supports both deterministic and stochastic action sampling, while preserving entropy regularization for effective exploration.

\textbf{Dynamics Model}. 
The dynamics model is implemented as a 3-layer MLPs with 256 hidden units and ReLU activations, which takes concatenated state-action pairs as input and predicts both the next state and reward.

\subsection{Hyperparameters}
Diffusion models and networks share the same hyperparameter settings across all tasks. 
The detailed configurations are provided in \Reftab{tab:hyperparameters_all}.

\begin{table}[h]
    \centering
    \caption{Hyperparameters for all tasks.}
    \vspace{-10pt}
    \label{tab:hyperparameters_all}
    \setlength{\tabcolsep}{12pt}
    \begin{tabular}{ll}
    \toprule
    \textbf{Hyperparameter} & \textbf{Value} \\
    \midrule
    Optimizer & Adam~\citep{adam2014method} \\
    Learning rate & 3e-4 \\
    Learning rate decay & Cosine~\citep{loshchilov2016sgdr} \\
    Batch size & 256 \\
    Discounted factor & 0.99 \\
    Target update rate & 0.005 \\
    Policy update frequency & 2 \\
    Target network update frequency & 2 \\
    Number of sampled actions & 10 \\
    Compensation coefficient $\lambda$ & 0.001 \\
    Compensation target weight $\eta$ & 0.9 \\
    \bottomrule
    \end{tabular}
\end{table}

To accommodate varying data distributions across different tasks, we employ task-specific hyperparameters including the penalty coefficient $\beta$ for detrimental OOD action penalty, the OOD detection thresholds $\tau_a$ and $\tau_s$ for actions and states, the expectile regression factor $\tau$, and the lower bound of Q-value $Q_{\min}$, with their specific values for each task configuration detailed in \Reftab{tab:hyperparameters_task}.

\begin{table}[h]
    \centering
    \caption{Task-specific hyperparameter settings.}
    \vspace{-10pt}
    \label{tab:hyperparameters_task}
    \setlength{\tabcolsep}{12pt}
    \begin{tabular}{lccccc}
    \toprule
    \textbf{Task} & $\bm{\beta}$ & $\bm{\tau_a}$ & $\bm\tau_s$ & $\bm\tau$ & $\bm{Q_{\min}}$ \\
    \midrule
    halfcheetah-medium-v2 & 0.001 & 99th & 99th & 0.9 & -366 \\
    halfcheetah-medium-replay-v2 & 0.001 & 99th & 99th & 0.9 & -366 \\
    halfcheetah-medium-expert-v2 & 0.05 & 80th & 80th & 0.7 & -366 \\
    halfcheetah-expert-v2 & 0.05 & 80th & 80th & 0.7 & -366 \\
    halfcheetah-random-v2 & 0.001 & 99th & 99th & 0.9 & -366 \\
    hopper-medium-v2 & 0.001 & 99th & 99th & 0.9 & -125 \\
    hopper-medium-replay-v2 & 0.001 & 99th & 99th & 0.9 & -125 \\
    hopper-medium-expert-v2 & 0.05 & 80th & 80th & 0.7 & -125 \\
    hopper-expert-v2 & 0.05 & 80th & 80th & 0.7 & -125 \\
    hopper-random-v2 & 0.001 & 99th & 99th & 0.9 & -125 \\
    walker2d-medium-v2 & 0.001 & 99th & 99th & 0.9 & -471 \\
    walker2d-medium-replay-v2 & 0.001 & 99th & 99th & 0.9 & -471 \\
    walker2d-medium-expert-v2 & 0.05 & 99th & 99th & 0.7 & -471 \\
    walker2d-expert-v2 & 0.05 & 99th & 99th & 0.7 & -471 \\
    walker2d-random-v2 & 0.001 & 99th & 99th & 0.9 & -471 \\
    pen-cloned-v1 & 1 & 60th & 60th & 0.7 & -715 \\
    pen-human-v1 & 20 & 80th & 80th & 0.7 & -715 \\
    \bottomrule
    \end{tabular}
\end{table}

\subsection{Experimental details on toy example} \label{app:toy_experiment}
For the 1D navigation task illustrated in \Reffig{fig:toy_task}(a), the state space $[-10, 10]$ represents the agent's current position, while actions correspond to step sizes within $[-1, 1]$.
The reward function is the negative distance to the target state 0. 
Based on this reward function, the ground truth Q-function is calculated and depicted in \Reffig{fig:toy_task}(b).
To evaluate the performance of different methods, we generate an \emph{expert} dataset and a \emph{medium} dataset, each containing \numprint{500000} transitions.
The expert dataset is constructed by perturbing the optimal action derived from the ground truth Q-value with small noise $\epsilon\sim\mathcal{U} [-0.05, 0.05]$, while the medium dataset is generated by adding larger noise $\epsilon\sim\mathcal{U} [-0.5, 0.5]$.
The score network in this toy example is implemented as a 4-layer MLP with Mish activations and 256 hidden units per layer. 

\begin{figure}[t]
    \centering
    \includegraphics[width=0.6\textwidth]{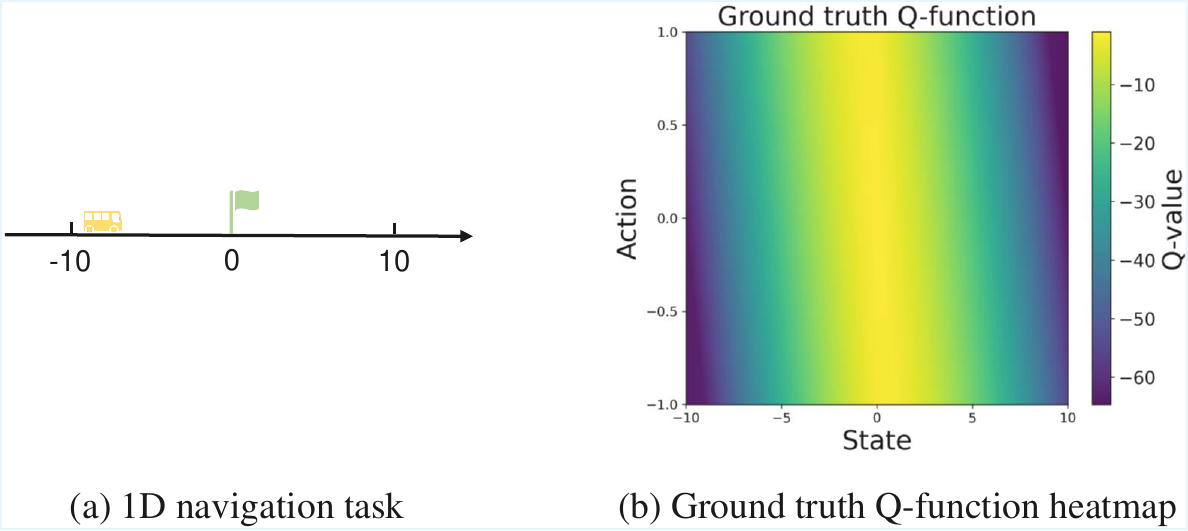} 
    \vspace{-5pt}
    \caption{Toy environment and ground truth Q-function heatmap visualization.}
    \label{fig:toy_task}
\end{figure}

For the model ensemble method, we employ 5 independently trained neural networks with identical architectures to quantify predictive uncertainty. 
Each model is a 3-layer MLP with ReLU activations and 128 hidden units. 
All models are trained in a supervised manner for 100 epochs using the Adam optimizer with a fixed learning rate of 1e-3.
During inference, the ensemble estimates epistemic uncertainty by computing the normalized variance across model predictions.

For the MC dropout framework, we adopt a Q-network architecture consisting of 3-layer MLP with 256 hidden units and ReLU activations. 
Dropout layers with a fixed probability of 0.1 are incorporated to introduce stochasticity during inference.
This configuration enables the model to approximate Bayesian inference by maintaining dropout activation during both training and evaluation phases.
The Q-network undergoes supervised training for \numprint{1000} epochs using the Adam optimizer with a consistent learning rate of 1e-3. 
For uncertainty quantification, we performs 20 stochastic forward passes per state-action pair with dropout enabled, computing the epistemic uncertainty as the normalized variance across these Monte Carlo samples.


For the VAE-based method, we adopt a conditional VAE (CVAE) architecture to model the behavior policy distribution and quantify out-of-distribution actions using reconstruction error.
The decoder reconstructs the original action through a single output head. 
The model is trained for \numprint{1000} epochs with the Adam optimizer at a learning rate of 1e-3.
During inference, the reconstruction error is computed for state-action pairs by comparing the reconstructed action to the original input action.

\subsection{Experimental details on D4RL benchmarks}

For all MuJoCo locomotion tasks, we pretrain the diffusion models for both the behavior policy and state distribution for \numprint{100000} gradient steps using the Adam optimizer with a learning rate of 3e-4 and a batchsize of 1024.
The dynamics models are also pretrained for \numprint{100000} gradient steps with the same learning rate and batch size.
Our algorithm is then trained for 2 million gradient steps to ensure convergence, with policy evaluation performed every \numprint{20000} gradient steps.
Results are reported as the average normalized scores over 40 random rollouts, comprising 4 independently trained models and 10 evaluation trajectories per model across all tasks.
All experiments are conducted on four NVIDIA GeForce RTX 3090 GPUs, with each experiment taking approximately 30 hours to complete, including both training and evaluation.

\section{Additional Experimental Results}\label{app:additional_results}

\subsection{OOD Detection Performance on D4RL Benchmarks}

\begin{figure}[t]
    \centering
    \begin{subfigure}[b]{1\linewidth}
        \centering
        \includegraphics[width=\linewidth]{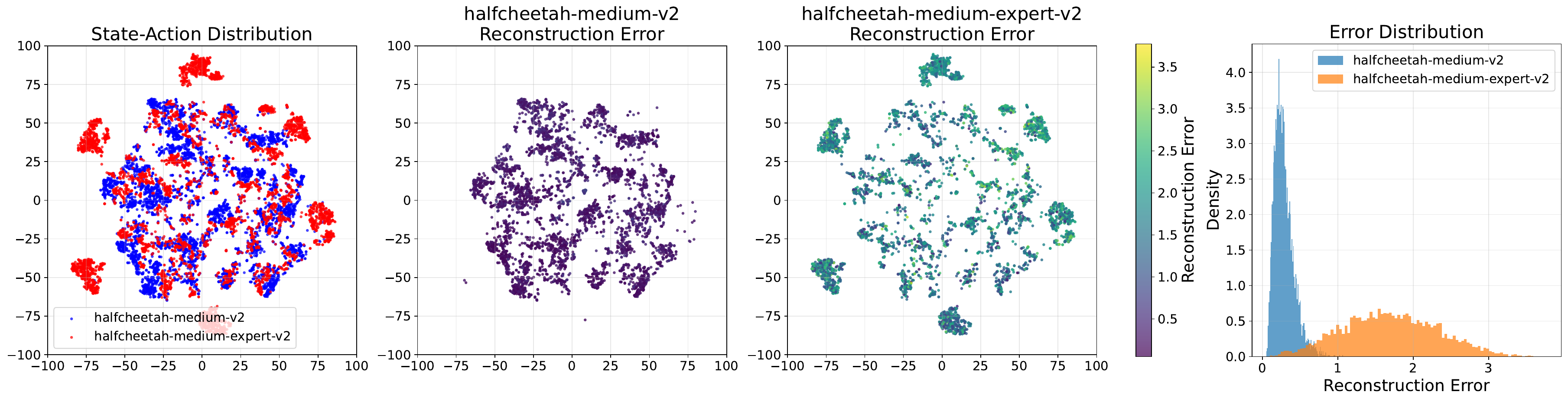}
        \caption{halfcheetah-medium-v2 (ID) vs. halfcheetah-medium-expert-v2 (OOD).}
        \label{fig:action_hcm_vs_hcme}
    \end{subfigure}
    
    \vspace{10pt}

    \begin{subfigure}[b]{1\linewidth}
        \centering
        \includegraphics[width=\linewidth]{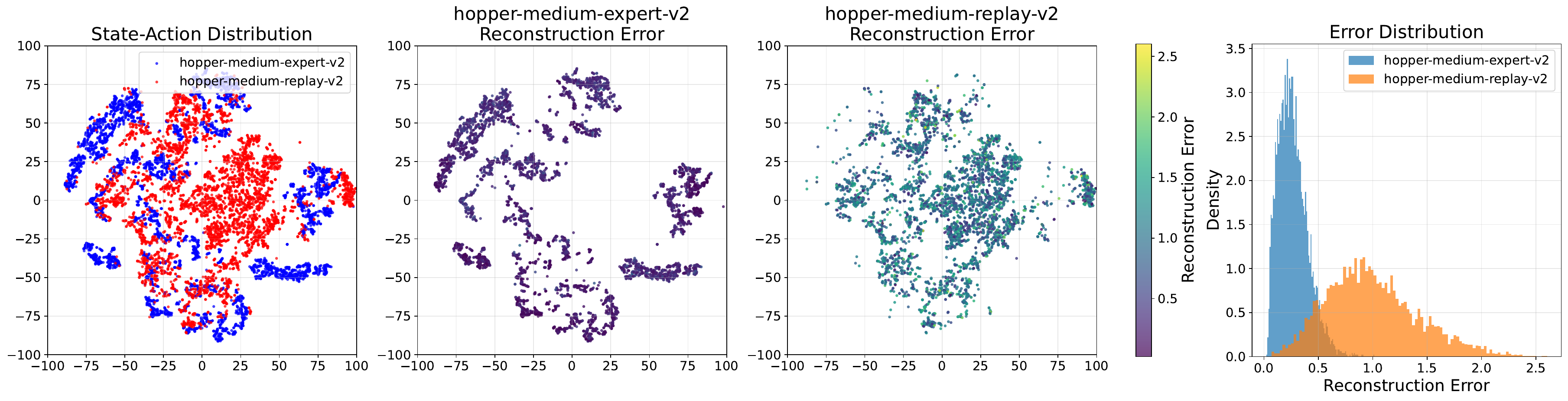}
        \caption{hopper-medium-expert-v2 (ID) vs hopper-medium-replay-v2 (OOD).}
        \label{fig:action_hpme_vs_hpmr}
    \end{subfigure}
    
    \vspace{10pt}

    \begin{subfigure}[b]{1\linewidth}
        \centering
        \includegraphics[width=\linewidth]{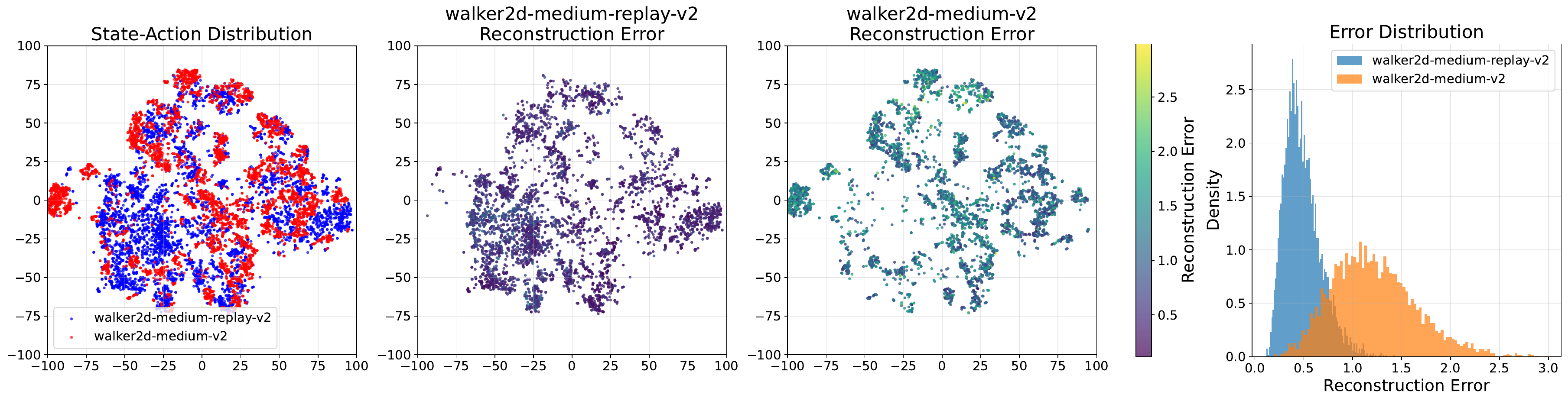}
        \caption{walker2d-medium-replay-v2 (ID) vs walker2d-medium-v2 (OOD).}
        \label{fig:action_wlmr_vs_wlm}
    \end{subfigure}

    \caption{Diffusion-based reconstruction error distribution across datasets. Diffusion models were trained exclusively on in-distribution (ID) data. From left to right: t-SNE embedding of the state-action distributions; reconstruction errors of ID samples; reconstruction errors of OOD samples; and density plots of error distributions for both ID and OOD samples. The color bar indicates the magnitude of reconstruction error in the second and third columns.}
    \label{fig:action_recon_error}
\end{figure}

To evaluate the ability of our diffusion-based models to distinguish OOD samples, we conduct experiments on the D4RL benchmarks, designating certain datasets as in-distribution (ID) and others as OOD. 
Specifically, we pretrained diffusion models on the ID datasets, and evaluated their performance on the OOD datasets drawn from the same environment. 
For each dataset, we randomly sample \numprint{5000} state-action pairs to ensure a balanced comparison. 
The reconstruction error distributions for the actions are visualized via color-mapped scatter plots and histograms in \Reffig{fig:action_recon_error}.

Across all environments, OOD datasets consistently exhibit significantly larger reconstruction errors compared to their ID counterparts. 
This pronounced discrepancy is visually evident in both the color-mapped scatter plots and the histogram plots. 
In the scatter plots, ID samples are consistently associated with low reconstruction errors, whereas OOD samples display markedly high error values. 
Similarly, the histogram plots reveal a distinct shift in the error distributions between ID and OOD samples, with OOD data showing a heavier tail toward higher error values. 
These results strongly suggest that diffusion-based reconstruction error serves as a robust and effective indicator for OOD detection in this setting.

\begin{figure}[t]
    \centering
    \begin{subfigure}[b]{1\linewidth}
        \centering
        \includegraphics[width=\linewidth]{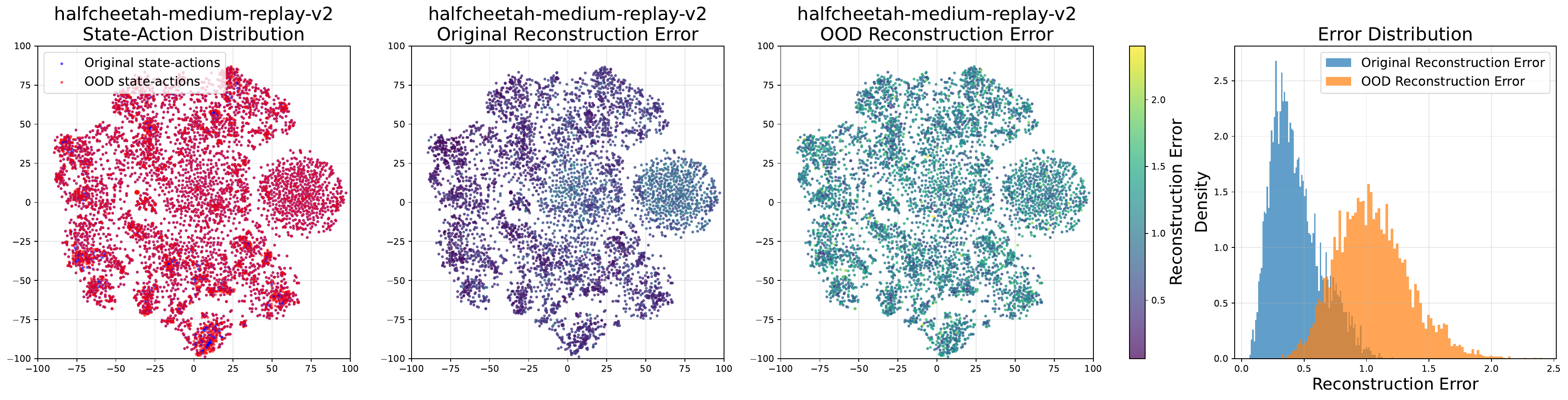}
        \caption{Noise scale = 0.5.}
        \label{fig:noise_scale0.5}
    \end{subfigure}
    
    \vspace{10pt}

    \begin{subfigure}[b]{1\linewidth}
        \centering
        \includegraphics[width=\linewidth]{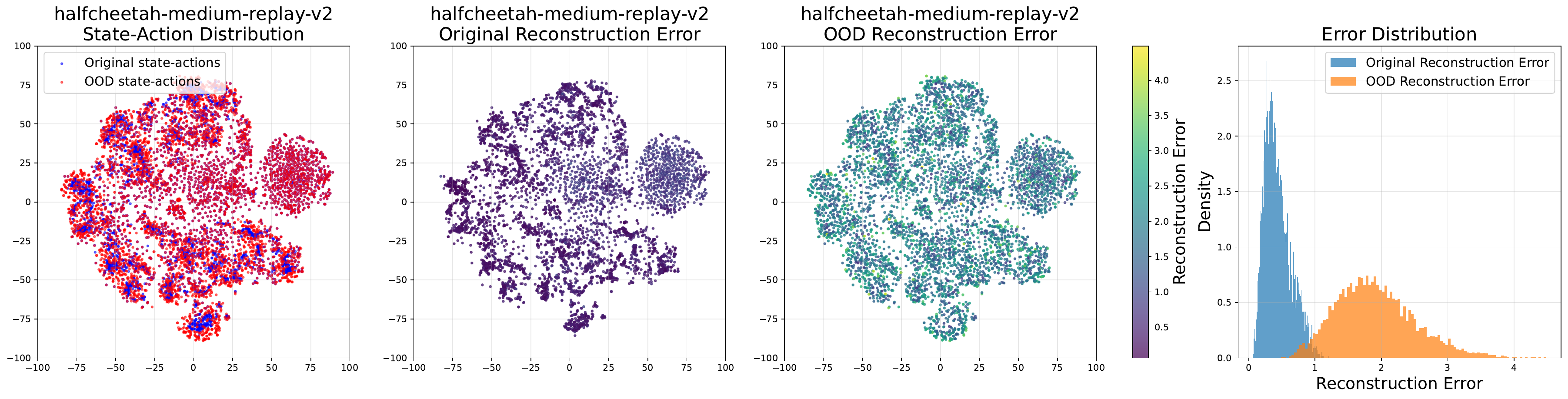}
        \caption{Noise scale = 1.0.}
        \label{fig:noise_scale1.0}
    \end{subfigure}
    
    \vspace{10pt}

    \begin{subfigure}[b]{1\linewidth}
        \centering
        \includegraphics[width=\linewidth]{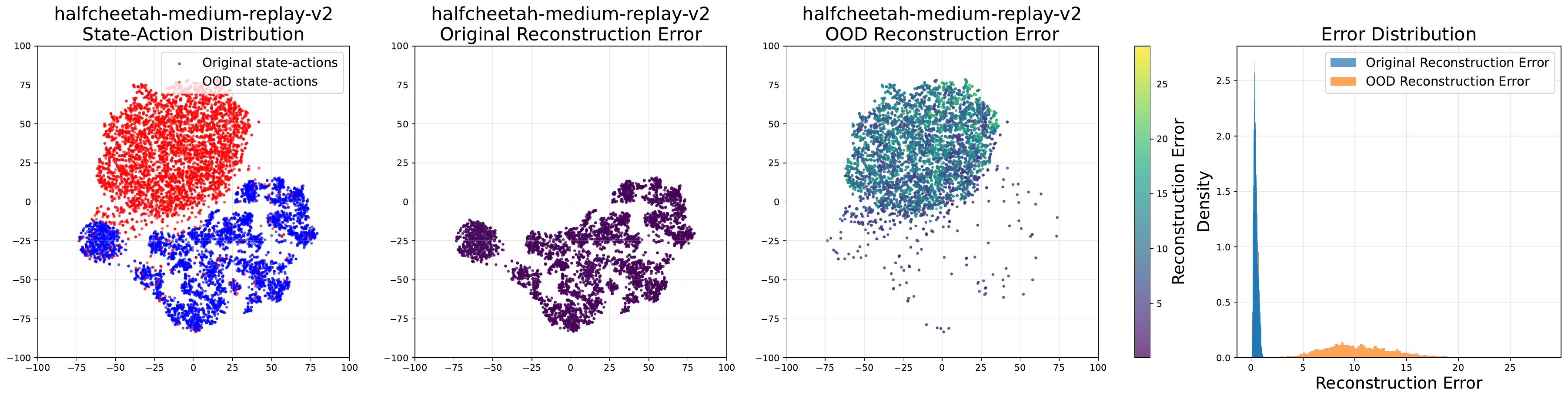}
        \caption{Noise scale = 5.0.}
        \label{fig:noise_scale5.0}
    \end{subfigure}
    
    \caption{Diffusion-based reconstruction error distributions on original ID datasets and synthetic OOD datasets.}
    \label{fig:ood_detection_d4rl}
\end{figure}

\begin{table}[t]
    \centering
    \caption{OOD detection metrics on synthetic OOD datasets.}
    \vspace{-10pt}
    \label{tab:ood_detection_d4rl}
    \setlength{\tabcolsep}{5pt}
    \begin{tabular}{l*{9}{c}} 
    \toprule
    \textbf{Noise Scale} & \textbf{TP} & \textbf{TN} & \textbf{FP} & \textbf{FN} & \textbf{Accuracy} & \textbf{Precision} & \textbf{Recall} & \textbf{F1-Score} & \textbf{AUROC} \\
    \midrule
    0.5 & 2910 & 4957 & 43 & 2090 & 0.7867 & 0.9854 & 0.5820 & 0.7318 & 0.9637 \\
    1.0 & 4832 & 4957 & 43 & 168 & 0.9789 & 0.9912 & 0.9664 & 0.9876 & 0.9980 \\
    5.0 & 5000 & 4957 & 43 & 0 & 0.9957 & 0.9915 & 1.0000 & 0.9957 & 1.0000 \\
    \bottomrule
    \end{tabular}
\end{table}

To provide a more comprehensive quantitative analysis, we construct synthetic OOD datasets as follows. 
We first sample \numprint{5000} state-action pairs from the original D4RL dataset, and for each pair, we generate a corresponding OOD sample by perturbing the action with standard Gaussian noise using noise scales of 0.5, 1.0, and 5.0, respectively.
We evaluate the OOD detection capability of our diffusion-based reconstruction error on these datasets, using the 99-th percentile of reconstruction errors computed from ID samples as the detection threshold.
Based on this threshold, we report the counts of true positives (TP), true negatives (TN), false positives (FP), and false negatives (FN), together with standard classification metrics including precision, recall, F1-score, and AUROC.
These results are summarized in \Reftab{tab:ood_detection_d4rl}, and \Reffig{fig:ood_detection_d4rl} presents the empirical distributions and histograms of reconstruction errors for both ID and OOD samples under different noise scales, while the corresponding ROC curves are shown in \Reffig{fig:ood_detection_roc}.

The results show that diffusion-based reconstruction error is highly effective for OOD action detection across different levels of perturbation. 
When the noise scale is relatively small, the method achieves high precision but moderate recall, indicating that mildly perturbed OOD actions are more difficult to detect. 
As the noise scale increases, both recall and F1-score improve substantially, reaching nearly perfect detection performance at large perturbations. 
The AUROC also increases consistently and reaches 1.0 for the largest noise setting, demonstrating that the reconstruction error provides a reliable and discriminative signal for distinguishing ID and OOD actions under challenging distribution shifts.

\begin{figure}[t]
    \centering
    \begin{subfigure}[b]{0.32\linewidth}
        \centering
        \includegraphics[width=\linewidth]{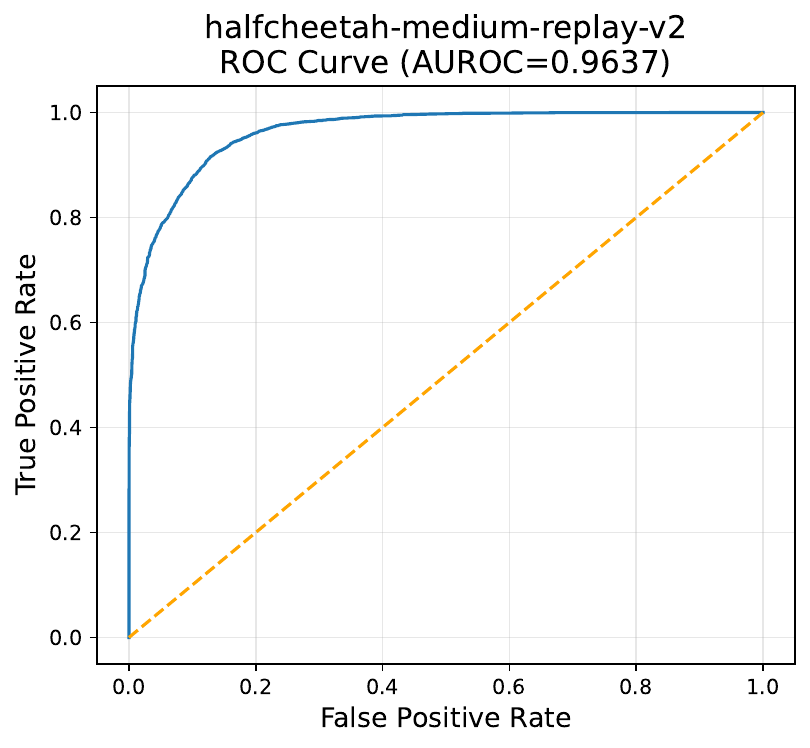}
        \caption{Noise scale = 0.5.}
        \label{fig:roc_a}
    \end{subfigure}
    \hfill
    \begin{subfigure}[b]{0.32\linewidth}
        \centering
        \includegraphics[width=\linewidth]{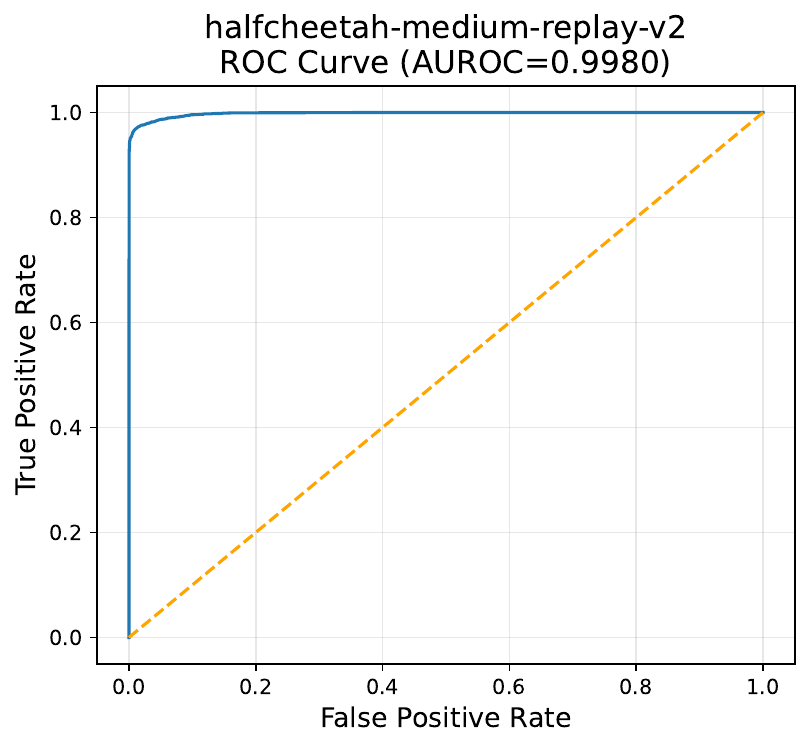}
        \caption{Noise scale = 1.0.}
        \label{fig:roc_b}
    \end{subfigure}
    \hfill
    \begin{subfigure}[b]{0.32\linewidth}
        \centering
        \includegraphics[width=\linewidth]{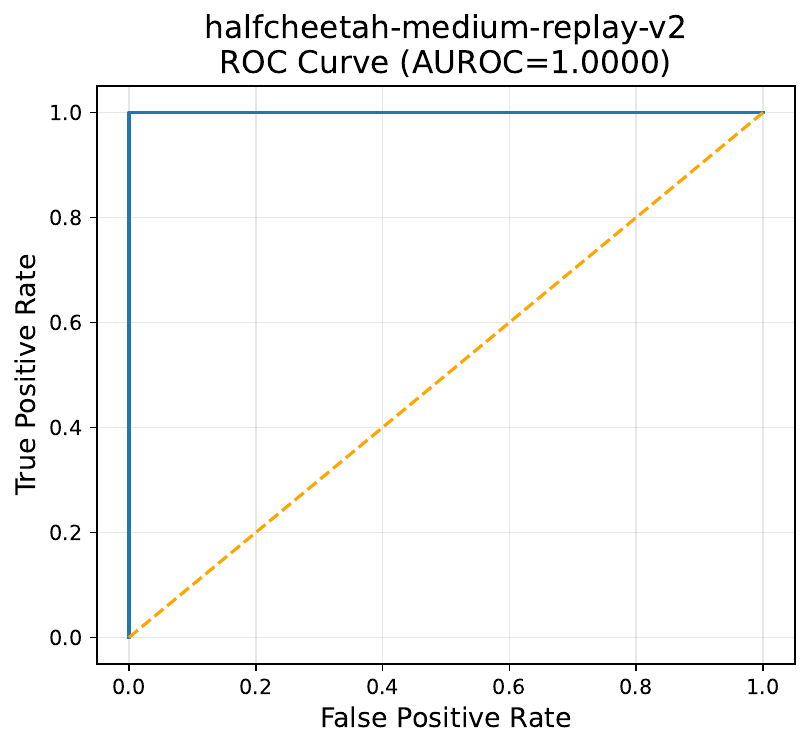}
        \caption{Noise scale = 5.0.}
        \label{fig:roc_c}
    \end{subfigure}

    \caption{ROC curves for diffusion-based OOD detection under different noise scales.}
    \label{fig:ood_detection_roc}
\end{figure}

\subsection{Validation on OOD Detection Benchamrks} \label{app:ood_detection_benchmarks}

\begin{table}[t]
    \centering
    \setlength{\tabcolsep}{3pt}
    \caption{Validation on OOD detection benchmarks.}
    \vspace{-10pt}
    \label{tab:ood_benchmark}
    \begin{tabular}{l*{9}{c}}
    \toprule
    \multirow{2}{*}{\textbf{Method}} & 
    \multicolumn{3}{c}{\textbf{KDDCUP}} & 
    \multicolumn{3}{c}{\textbf{KDDCUP-Rev}} & 
    \multicolumn{3}{c}{\textbf{Arrhythmia}} \\
    \cmidrule(lr){2-4} \cmidrule(lr){5-7} \cmidrule(lr){8-10}
    & Precision & Recall & \( F_1 \) & Precision & Recall & \( F_1 \) & Precision & Recall & \( F_1 \)\\
    \midrule
    OC-SVM & 0.7457 & 0.8523 & 0.7954 & 0.7148 & \textbf{0.9940} & 0.8316 & 0.5397 & 0.4082 & 0.4581 \\
    DCN & 0.7696 & 0.7829 & 0.7762 & 0.2875 & 0.2895 & 0.2885 & 0.3758 & 0.3907 & 0.3815 \\
    DSEBM-r & 0.1972 & 0.2001 & 0.1987 & 0.2036 & 0.2036 & 0.2036 & 0.1515 & 0.1513 & 0.1510 \\
    DAGMM & 0.9297 & 0.9442 & 0.9369 & 0.9370 & 0.9390 & 0.9380 & 0.4909 & 0.5078 & 0.4983 \\
    GOAD & - & - & 0.9840 & - & - & \textbf{0.9890} & - & - & 0.5200 \\
    Ours & \textbf{0.9862} & \textbf{0.9937} & \textbf{0.9899} & \textbf{0.9476} & 0.9144 & 0.9307 & \textbf{0.9545} & \textbf{0.9545} & \textbf{0.9545}\\
    \bottomrule
    \end{tabular}
\end{table}

To further investigate the effectiveness and generalizability of our proposed diffusion-based OOD detection mechanism beyond the reinforcement learning domain, we conducted additional experiments on three widely used anomaly detection benchmarks: \textit{KDDCUP}, \textit{KDDCUP-Rev}, and \textit{Arrhythmia}.
Following the procedure described in the main paper, we compute the OOD score of each sample using the single-step denoising reconstruction error produced by a trained diffusion model. 
We compare our approach against several state-of-the-art deep learning methods, including OC-SVM~\citep{chen2001one}, DCN~\citep{jin2021pessimism}, DSEBM-r~\citep{zhai2016deep}, DAGMM~\citep{zong2018deep}, and GOAD~\citep{bergman2020classification}.
Our experimental setup follows GOAD, and the baseline results are taken directly from the respective original publications. 

\Reftab{tab:ood_benchmark} summarizes the precision, recall, and F1-scores across all benchmarks. 
The results demonstrate that our diffusion-based approach consistently achieves high detection accuracy and outperforms or matches existing baselines across all datasets. 
This robust performance further validates the diffusion model's superior capability in modeling the in-distribution data manifold and confirms the reliability of using the reconstruction error as a general indicator for OOD detection, even across varying data distributions and task contexts.

\subsection{Quantitive Analysis between Diffusion-based Reconstruction Error and Negative Log-Likelihood (NLL)} \label{app:recon_error_vs_nll}

To further examine whether diffusion-based reconstruction error serves as a meaningful proxy for likelihood estimation, we conduct quantitative correlation analyses on both synthetic and D4RL datasets.
Scatter plots illustrating the relationship between diffusion reconstruction error and negative log-likelihood (NLL) are shown in \Reffig{fig:diffusion_nll_correlation}.

We first validate the relationship in a Gaussian mixture setting, where the ground-truth density is analytically available. 
Specifically, we construct a four-component symmetric Gaussian mixture and uniformly sample \numprint{10000} points. 
Using a diffusion model trained on this distribution, we compute the reconstruction error for each sample and compare it against the true NLL derived from the underlying GMM. 
The result indicates a very strong Pearson correlation ($\rho = 0.9718$), confirming that the diffusion-based reconstruction error is highly consistent with the true likelihood when the underlying distribution is accurately modeled.

We further evaluate this relationship on the halfcheetah-medium-replay-v2 dataset, where the true behavior policy density is not directly accessible. 
To approximate the behavior support likelihood, we train a CVAE on the dataset as a reference model and compute its NLL as a surrogate likelihood estimator.
The diffusion reconstruction error again exhibits a strong positive correlation with the CVAE-based NLL ($\rho = 0.7848$), indicating that reconstruction error retains a substantial degree of statistical consistency with likelihood even in high-dimensional continuous control environments. 
However, since CVAEs are known to struggle with accurately modeling multi-modal behavior distributions, the resulting NLL values from the reference model may introduce estimation bias and should therefore be interpreted as only a rough approximation for the true likelihood.

\begin{figure}[t]
    \centering
    \begin{subfigure}[b]{0.48\linewidth}
        \centering
        \includegraphics[width=\linewidth]{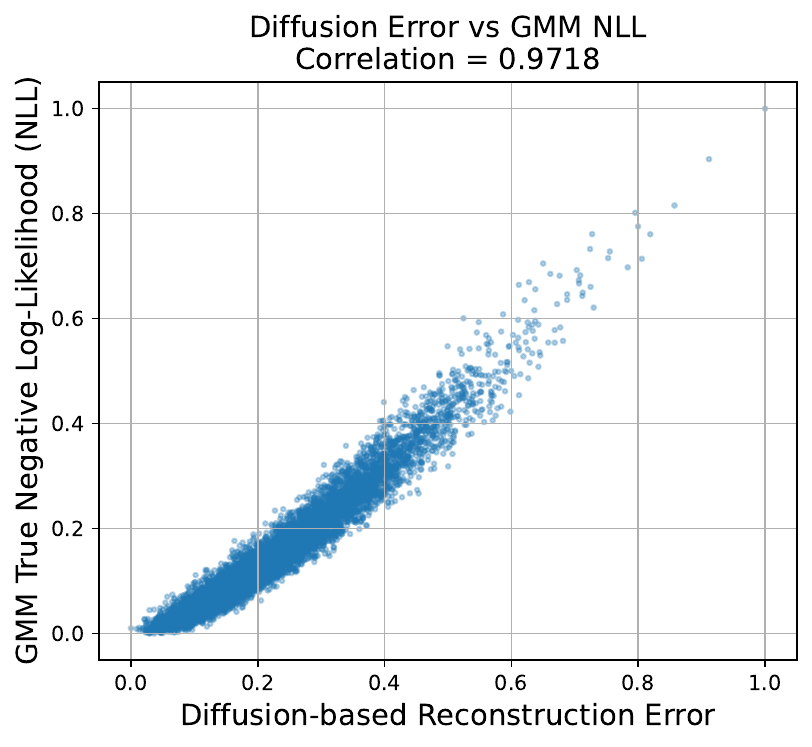}
        \caption{Diffusion-based reconstruction error vs. true NLL on GMM dataset.}
        \label{fig:diffusion_gmm_correlation}
    \end{subfigure}
    \hfill
    \begin{subfigure}[b]{0.48\linewidth}
        \centering
        \includegraphics[width=\linewidth]{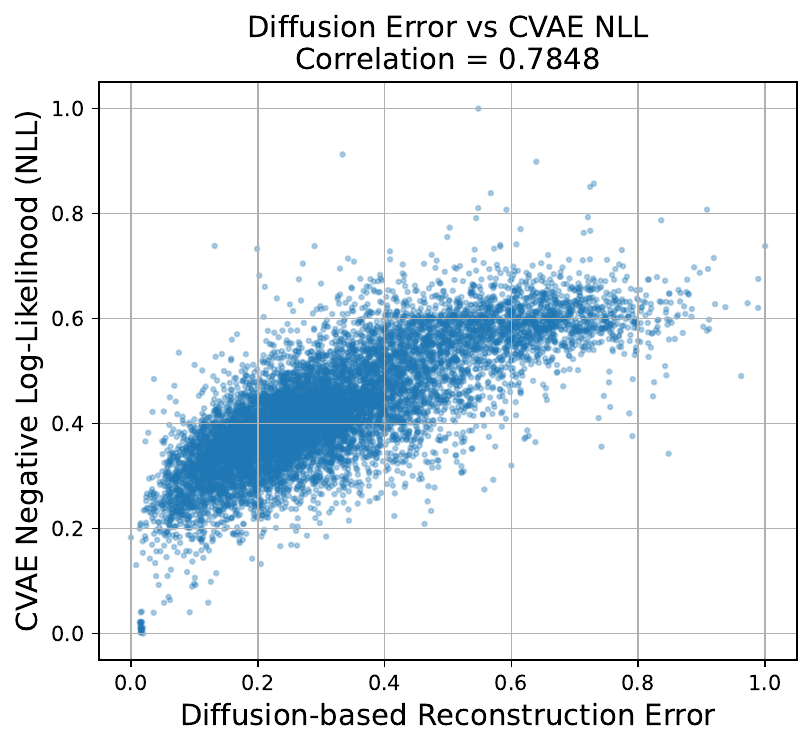}
        \caption{Diffusion-based reconstruction error vs. CVAE-based NLL on D4RL dataset.}
        \label{fig:diffusion_cvaenll_correlation}
    \end{subfigure}
    \hfill
    \caption{Correlation analysis between diffusion-based reconstruction error and negative log-likelihood (NLL).}
    \label{fig:diffusion_nll_correlation}
\end{figure}

\subsection{Action Type Proportions During Policy Optimization} \label{app:proportion_of_actions}
To gain a deeper insight into how the action distribution induced by the learned policy evolves over time, we track the proportions ID, beneficial OOD, and detrimental OOD actions throughout the training process. 
At each training iteration, the statistics are calculated over a sampled batch of size 256. 
We present the results for three halfcheetah tasks in \Reffig{fig:action_proportion}.

\begin{figure}[t]
    \centering
    \includegraphics[width=1\linewidth]{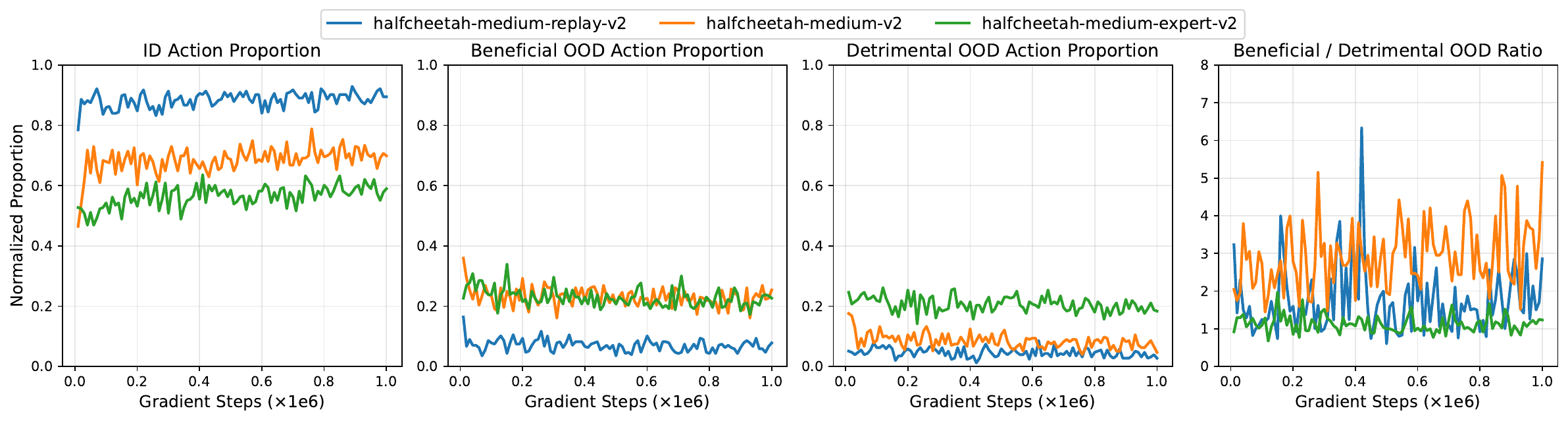}
    \caption{Proportions of different action types during policy optimization.}
    \label{fig:action_proportion}
\end{figure}

Across all datasets, the proportion of ID actions consistently increases as training progresses, accompanied by a corresponding decline in the overall proportion of OOD actions.
This trend suggests that the learned policy progressively aligns more closely with the behavioral support during optimization.
However, the relative magnitudes of the ID proportions vary considerably across the three datasets.
Specifically, the medium-replay dataset exhibits the largest ID action ratio, followed by the medium dataset, whereas the medium-expert dataset yields the lowest ID proportion.
We also visualize the ratio of beneficial to detrimental OOD actions in the rightmost column.
For both the medium-replay and medium datasets, the proportion of beneficial OOD actions substantially exceeds that of detrimental ones.
Conversely, on the medium-expert dataset, beneficial and detrimental OOD actions appear in almost equal proportion.

These differences can be attributed to the inherent characteristics of the datasets. 
The medium-expert dataset has a relatively narrow support concentrated around near-optimal trajectories. 
As a result, the learned policy more frequently generates actions that fall outside this narrow support, leading to a higher overall OOD proportion.
Despite this, since the dataset already contains expert demonstrations, the potential performance gain from extrapolation is limited, leading to only a comparable proportion of beneficial and detrimental OOD actions.
In contrast, the medium-replay and medium datasets exhibit more diverse distributions of generally suboptimal behaviors. 
This broader support enables the learned policy to benefit from moderate extrapolation, where slight deviations outside the data manifold can lead to meaningful performance improvements, which is consistent with our empirical results.

\subsection{Visualization of Q-value distribution} \label{app:q_value_distribution}

\begin{figure}[t]
    \centering
    \includegraphics[width=1\textwidth]{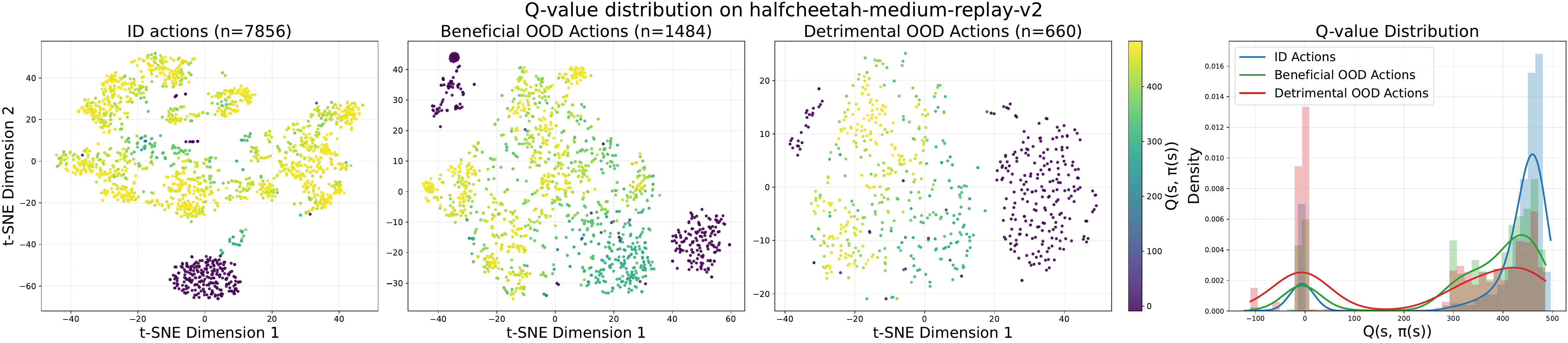} 
    \caption{Q-value distributions for different action types.}
    \label{fig:q_value_distribution}
\end{figure}

To investigate whether beneficial OOD actions indeed lead the policy toward higher-value regions, we visualize the learned Q-value landscape.
Specifically, we randomly sample \numprint{10000} states from the offline dataset.
For each evaluation state, we generate an action using the learned policy and categorize it as an ID action, a beneficial OOD action, or a detrimental OOD action based on the diffusion reconstruction error. 
We then apply t-SNE to embed the corresponding state-action pairs into a two-dimensional space, where each point is colored according to its Q-value estimate. 
In addition, we plot the Q-value distributions for the three categories to enable a direct statistical comparison.

As illustrated in \Reffig{fig:q_value_distribution}, beneficial OOD actions exhibit a clearly right-shifted Q-value distribution, indicating that the actions identified as beneficial by our method correspond to regions where the critic consistently predicts higher returns. 
In contrast, detrimental OOD actions predominantly occupy the low-value region, with their Q-value distribution concentrated near the lower tail. 
The critic assigns persistently low values to these actions, implying that they are unlikely to yield performance gains and should therefore be suppressed during policy improvement.


\subsection{Additional Sensitivity Analysis} \label{app:additional_sensitivity_analysis}

\subsubsection{Dynamics Model Error} \label{app:dynamics_model_error}

To evaluate how the accuracy of the learned dynamics model influences the performance of OOD action classification, we conduct an additional ablation study.
Specifically, we pretrain the dynamics model for 100k gradient steps and save the intermediate checkpoints at 10k and 20k steps. 
These early-stage models exhibit substantially higher prediction error compared to the final checkpoint, providing a controlled mechanism to examine the impact of model inaccuracies. 
In the subsequent experiments, we replace the fully trained dynamics model with the selected checkpoint while keeping all other components unchanged.
We evaluate these variants on two halfcheetah datasets, with the corresponding training curves illustrated in \Reffig{fig:ablation_on_dynamics}.

Our results indicate that employing dynamics models derived from the early checkpoints consistently deteriorates policy performance relative to the fully trained model. 
Furthermore, on the halfcheetah-medium-replay-v2 dataset, we observe that when the dynamics model is poorly trained, the performance may fall below that of the \textit{DOSER w/o AC and VC} variant introduced in Section~\ref{sec:components_ablation}, which intentionally excludes dynamics modeling. 
This finding highlights that if the dynamics model fails to produce reliable next-state predictions, the resulting misclassification of OOD actions can be more detrimental to overall performance than omitting OOD action classification entirely.

In addition, the observed performance gap between the early-stage and fully trained dynamics models provides further evidence regarding the model's ability to generalize beyond the dataset support.
This indicates that the final checkpoint captures meaningful structural regularities of the environment rather than merely memorizing in-distribution transitions.
As a result, a well-trained dynamics model can provide sufficiently reliable predictions for moderate OOD actions, which is crucial for the selective regularization mechanism of DOSER.

\begin{figure}[t]
    \centering
    \includegraphics[width=\linewidth]{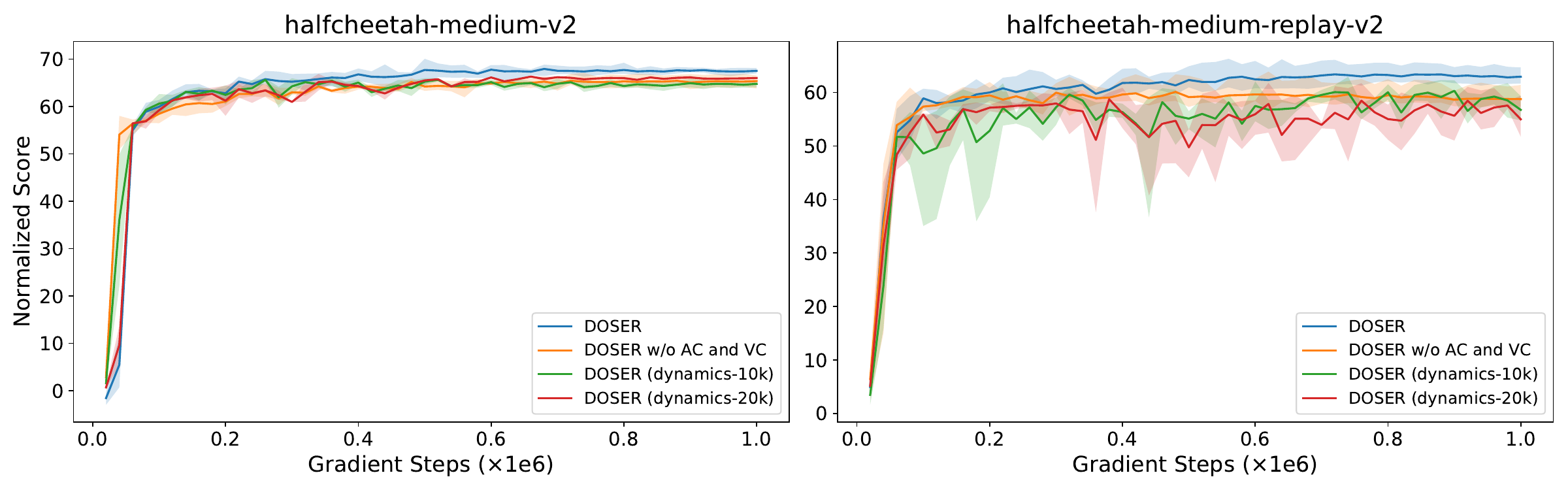}
    \caption{Sensitivity analysis of the dynamics model error.}
    \label{fig:ablation_on_dynamics}
\end{figure}

\subsubsection{The number of critic networks} \label{app:num_of_critics}

In our main experiments, we employ four critic networks for Q-function learning. 
This design choice follows the implementation of SVR, upon which our training pipeline is partially built. 
The use of multiple critics has been shown to reduce overestimation bias and stabilize value learning.
To examine whether this choice confers any unintended advantage, we perform an additional ablation in which DOSER is trained with only two critic networks while keeping all other components and hyperparameters fixed. 
We evaluate both settings on the halfcheetah-medium-v2 and halfcheetah-medium-replay-v2 datasets, the corresponding learning curves are presented in \Reffig{fig:ablation_on_critics}.

Empirically, we observe that using two critics achieves comparable final performance to the four-critic setting across both tasks, with only a slight difference within an acceptable range.
This indicates that DOSER does not rely on the increased critic ensemble size to obtain its performance gains. 
While additional critics can enhance robustness during training, the core algorithmic contributions of DOSER remain effective under the standard two-critic setup commonly used in offline RL.

\begin{figure}
    \centering
    \includegraphics[width=\linewidth]{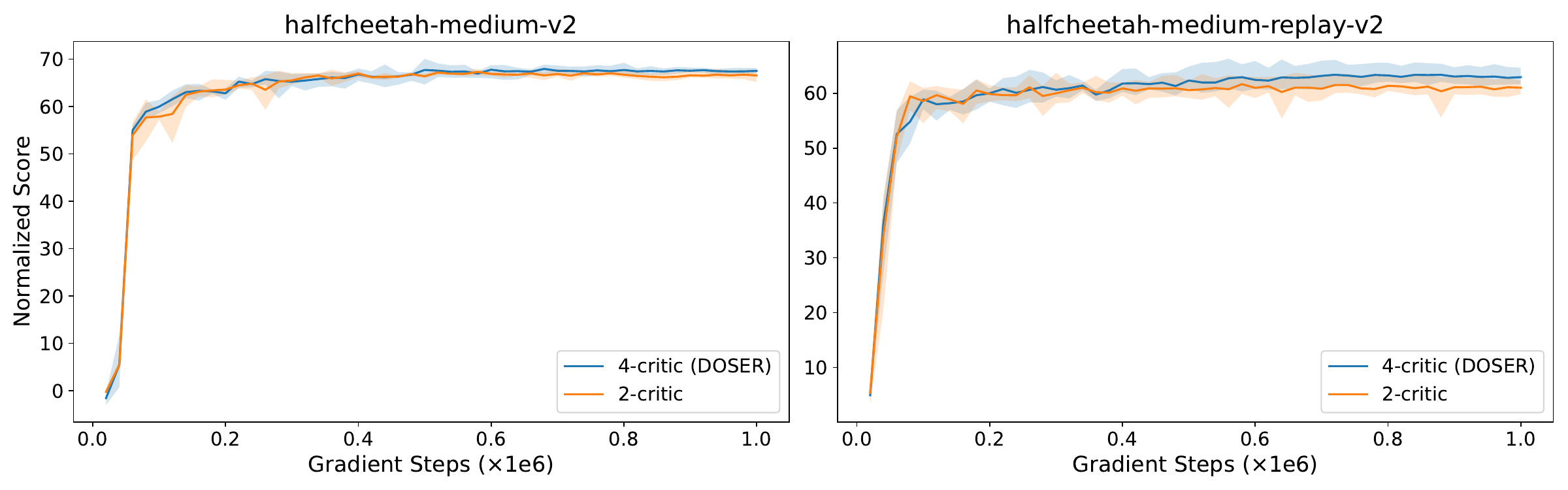}
    \caption{Sensitivity analysis of the number of critic networks.}
    \label{fig:ablation_on_critics}
\end{figure}

\subsubsection{Compensation target weight $\eta$} \label{app:value_of_eta}

We set the default value of the compensation target weight $\eta$ to 0.9 in all main experiments.
This hyperparameter controls the weight of the target Q-value of beneficial OOD actions, a smaller $\eta$ reduces the extent of compensation and makes DOSER more conservative.
To evaluate the sensitivity of DOSER to this hyperparameter, we conduct an ablation study by varying $\eta \in \{0.8, 1.0\}$ while keeping all other components unchanged. 

As shown in \Reffig{fig:ablation_on_eta}, DOSER maintains consistently stable performance across this range.
However, when setting $\eta = 1.0$, we observe a slight degradation in performance on halfcheetah-medium-replay-v2. 
This is primarily due to mild value overestimation introduced by fully adopting the target Q-values of beneficial OOD actions without discounting. 
Overall, the default choice $\eta=0.9$ effectively mitigates such overestimation while still enabling meaningful policy improvement.

\begin{figure}
    \centering
    \includegraphics[width=\linewidth]{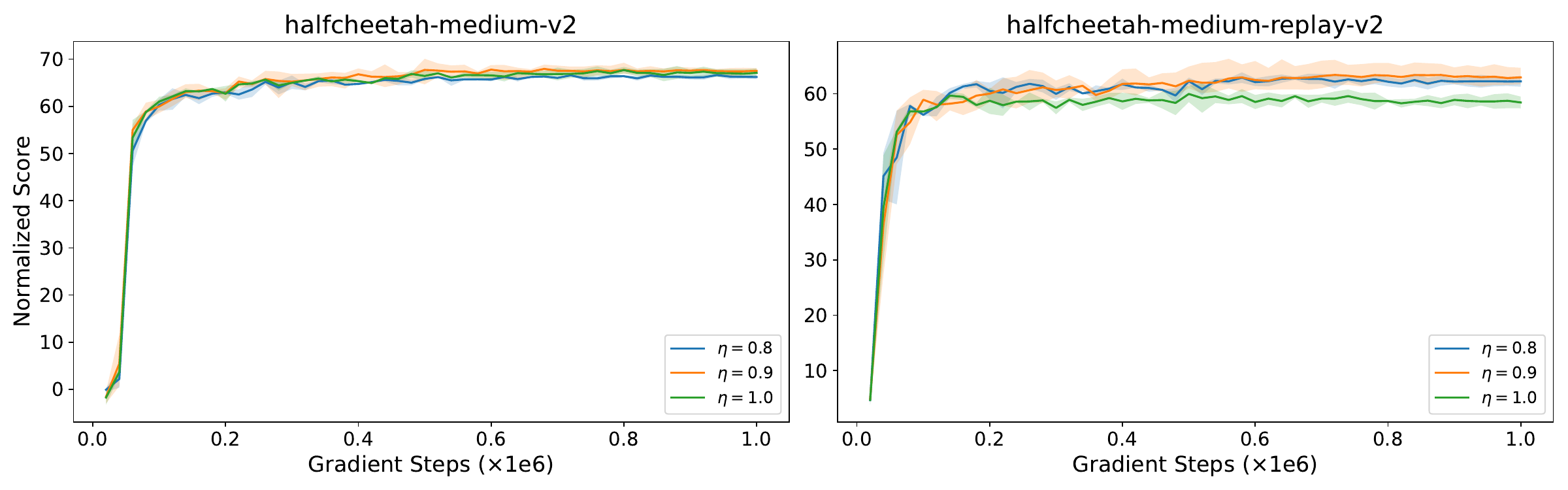}
    \caption{Sensitivity analysis of the compensation target weight $\eta$.}
    \label{fig:ablation_on_eta}
\end{figure}

\subsubsection{The number of sampled In-distribution actions $N$} \label{app:value_of_N}

In the OOD action classification stage, DOSER estimates the optimal in-distribution action by sampling $N$ candidate actions from the offline dataset and selecting the one with the highest Q-value as the reference.
To assess the robustness of DOSER to the chioce of $N$, we conduct experiments on the halfcheetah tasks with $N \in \{5, 10, 20 \}$.

The results in \Reffig{fig:ablation_on_samples} indicate that DOSER maintains strong performance across different values of $N$.
A larger $N$ provides a more accurate approximation of the optimal ID action but comes with increased computational cost, whereas a small $N$ may introduce randomness in the estimation.
Since the optimal ID Q-value is used only to construct an optimistic Q-target that guides beneficial OOD actions toward higher-value regions, DOSER does not rely heavily on the precise accuracy of this estimate.
Therefore, we choose $N=10$ as a reasonable trade-off between computational efficiency and estimation accuracy in our main experiments.

\begin{figure}
    \centering
    \includegraphics[width=\linewidth]{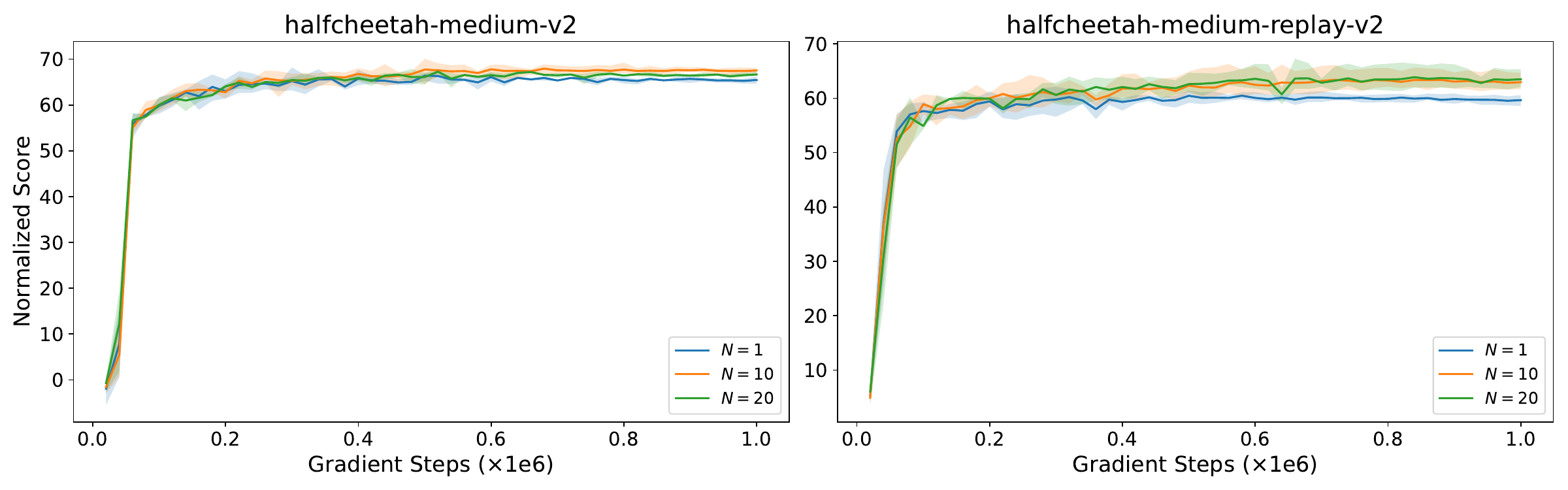}
    \caption{Sensitivity analysis of the number of sampled in-distribution actions $N$.}
    \label{fig:ablation_on_samples}
\end{figure}

\subsubsection{Q-value Lower Bound $Q_{\min}$} \label{app:value_of_qmin}

DOSER employs a lower bound $Q_{\min}$ when penalizing detrimental OOD actions. 
In our main experiments, this value is not treated as a tunable hyperparameter. 
Instead, it is derived directly from the environment dynamics as $Q_{\min} = \frac{R_{\min}}{1 - \gamma}$, which corresponds to the standard minimum achievable return under the given discount factor $\gamma$. 
For the halfcheetah environment, setting $\gamma$ to $0.99$ yields $Q_{\min} = -366$.
To further examine the impact of this parameter, we conduct an ablation study in which $Q_{\min}$ is set to $0$ while keeping all other components unchanged. 
This alternative setting corresponds to a less conservative penalty on potentially detrimental OOD actions.
We evaluate this variant on the halfcheetah tasks, with the resulting learning curves reported in \Reffig{fig:ablation_on_qmin}.

The results show that replacing the original value of $Q_{\min}$ with $0$ leads to performance degradation across both datasets. 
This outcome can be attributed to the fact that a higher lower bound reduces the penalization applied to detrimental OOD actions, thereby weakening the mechanism designed to mitigate value overestimation.
Nevertheless, even with this suboptimal setting, the overall performance remains competitive with existing offline RL baselines. 

\begin{figure}[t]
    \centering
    \includegraphics[width=\linewidth]{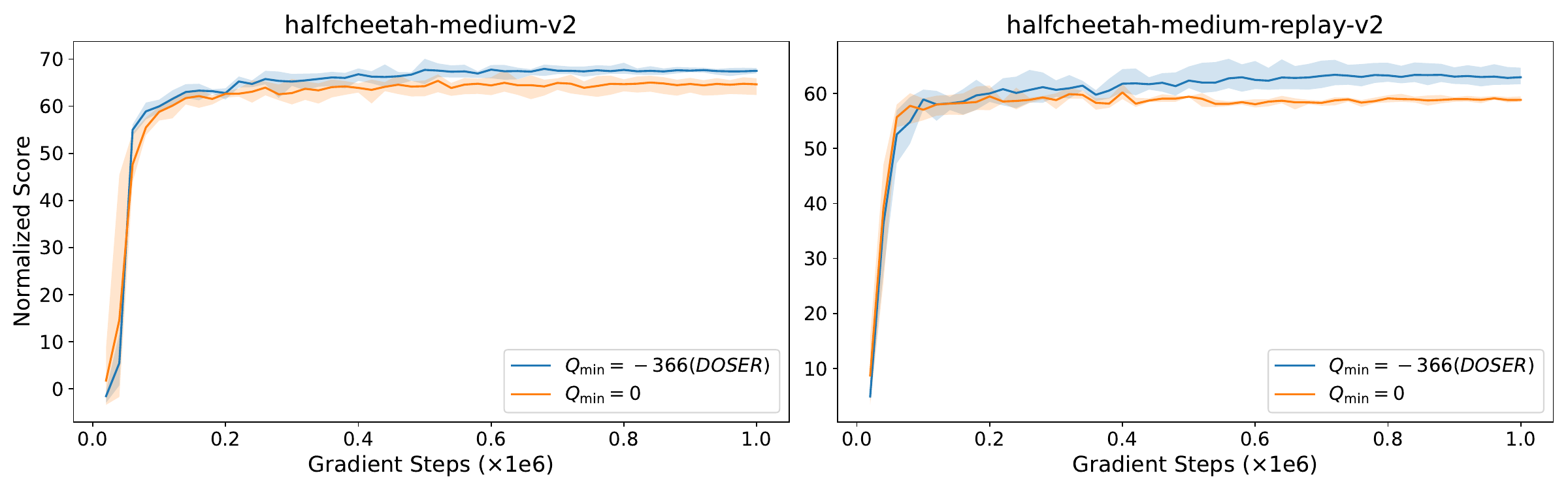}
    \caption{Sensitivity analysis of the value of $Q_{\min}$.}
    \label{fig:ablation_on_qmin}
\end{figure}

\subsection{Ensemble-guided Gating Mechanism} \label{app:ensemble_gating}

\begin{figure}[t]
    \centering
    \includegraphics[width=\linewidth]{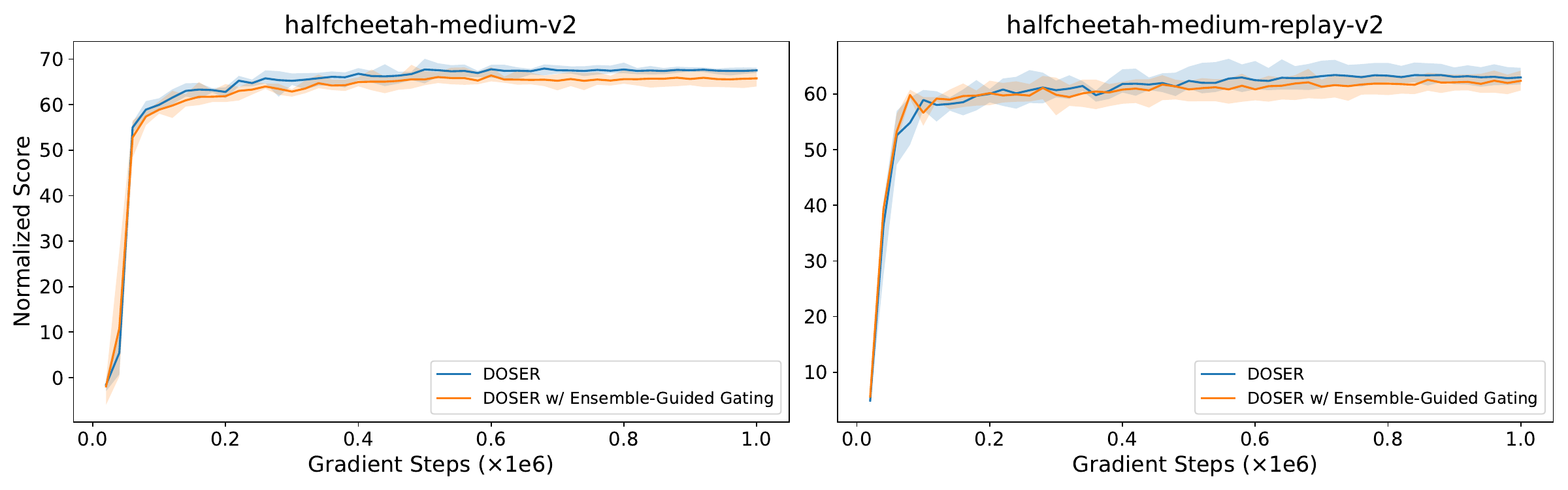}
    \caption{Comparison of DOSER with and without ensemble-guided gating.}
    \label{fig:ensemble_gating}
\end{figure}

\begin{figure}[t]
    \centering
    \includegraphics[width=\linewidth]{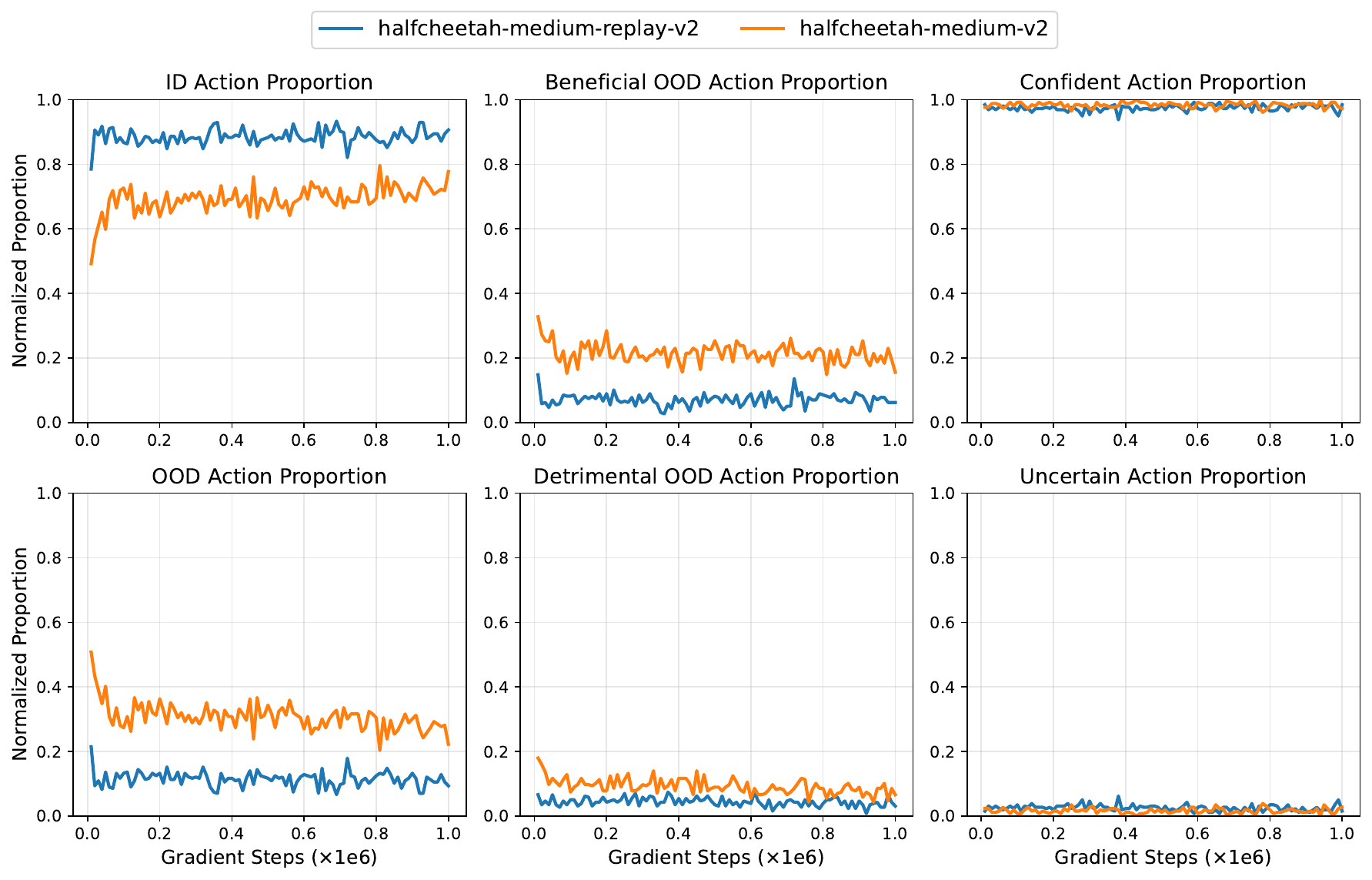}
    \caption{Proportions of different action types with ensemble-guided gating.}
    \label{fig:ensemble_proportion}
\end{figure}

We further introduce an ensemble-guided uncertainty gating mechanism on top of the learned dynamics model, which is designed to prevent unreliable next-state predictions from influencing the classification of OOD actions.
We construct an ensemble of $K=5$ independently initialized dynamics models, each trained on the original offline dataset.
For any state-action pair $(\vs, \va)$, the ensemble produces multiple next-state predictions $\{ \hat\vs'_1, \hat\vs'_2, \dots, \hat\vs'_K \}$.
We calculate the prediction variance across ensemble members as a measure of epistemic uncertainty:
\begin{equation}
    \mathrm{Var}(\hat\vs') = \frac{1}{K} \sum_{k=1}^{K} \left\| \hat\vs'_k - \bar\vs' \right\|^2, 
    \quad \text{where} \quad \bar\vs' = \frac{1}{K} \sum_{k=1}^{K} \hat\vs'_k
\end{equation}

To determine whether a predicted next state is reliable, we estimate the empirical distribution of these variances on the offline dataset and use the 99-th percentile as a reliability threshold $\tau_{\mathrm{var}}$.
Only when the prediction variance for an OOD action falls below this threshold do we trust the predicted next state and apply the value-based beneficial/detrimental classification; otherwise, the action is conservatively categorized as detrimental.

However, \Reffig{fig:ensemble_gating} demonstrates that incorporating this ensemble-guided gating mechanism into DOSER brings no noticeable performance improvement in the halfcheetah environments.
We further analyze the action type proportions during training (\Reffig{fig:ensemble_proportion}), defining confident actions as those with prediction variance below the reliability threshold, and uncertain actions as those filtered out by the gate.
The results indicate that fewer than 5\% of actions exceed the threshold and are consequently filtered out.

This result again suggests that the pretrained dynamics model already generalizes reasonably well to the moderately OOD regions.
It is also possible that the chosen ensemble size of 5 is insufficient to fully capture epistemic uncertainty, and larger or more expressive ensembles might provide stronger gating effects.
We leave the exploration of more sophisticated uncertainty quantification methods to future work.

\subsection{Additional Experiment Results on D4RL Benchamrk} \label{app:additional_d4rl_results}

To further validate DOSER's performance across a wider range of dataset qualities, we conduct additional experiments on the Gym-MuJoCo expert and random datasets, as shown in \Reftab{tab:additional_d4rl_results}.
Across the expert datasets, DOSER achieves competitive performance relative to prior offline RL methods. 
More notably, on the random datasets, where the behavior data is highly suboptimal, DOSER exhibits stronger performance, indicating its robustness under poor-quality offline data.

\begin{table}[t]
    \centering
    \scriptsize
    \setlength{\tabcolsep}{5pt}
    \caption{Additional performance comparison on Gym-MuJoCo expert and random datasets. We report the mean and standard deviation over 4 seeds for DOSER.}
    \vspace{-10pt}
    \label{tab:additional_d4rl_results}
    \begin{tabular}{l*{12}{c}}
    \toprule
    \textbf{Dataset} & \textbf{BC} & \textbf{BCQ} & \textbf{BEAR} & \textbf{DT} & \textbf{AWAC} & \textbf{OneStep}
    & \textbf{TD3+BC} & \textbf{CQL} & \textbf{IQL} & \textbf{DMG} & \textbf{DOSER (Ours)} \\
    \midrule
    halfcheetah-e    & \textbf{92.9} & 89.9 & \textbf{92.7} & 87.7 & 81.7 & 88.2 & \textbf{96.7} & \textbf{96.3} & \textbf{95.0} & \textbf{95.9} & \textbf{95.4 $\pm$ 0.6} \\
    hopper-e         & \textbf{110.9} & \textbf{109.0} & 54.6 & 94.2 & \textbf{109.5} & \textbf{106.9} & \textbf{107.8} & 96.5 & \textbf{109.4} & \textbf{111.5} & \textbf{111.6 $\pm$ 0.5} \\
    walker2d-e       & 107.7 & 106.3 & 106.6 & 108.3 & \textbf{110.1} & \textbf{110.7} & \textbf{110.2} & 108.5 & \textbf{109.9} & \textbf{114.7} & \textbf{111.2 $\pm$ 0.3} \\
    halfcheetah-r    & 2.6 & 2.2 & 2.3 & 2.2 & 6.1 & 2.3 & 11.0 & 17.5 & 13.1 & 28.8 & \textbf{32.8 $\pm$ 1.5} \\
    hopper-r         & 4.1 & 7.8 & 3.9 & 5.4 & 9.2 & 5.6 & 8.5 & 7.9 & 7.9 & 20.4 & \textbf{31.2 $\pm$ 0.1} \\
    walker2d-r       & 1.2 & 4.9 & \textbf{12.8} & 2.2 & 0.2 & 6.9 & 1.6 & 5.1 & 5.4 & 4.8 & 3.5 $\pm$ 2.3 \\
    \midrule
    \textbf{Average} & 53.2 & 53.4 & 45.5 & 50.0 & 52.8 & 53.4 & 56.0 & 55.3 & 56.8 & \textbf{62.7} & \textbf{64.9} \\
    \bottomrule
    \end{tabular}
\end{table}

\subsection{Learning Curves} \label{app:learning_curves}
Learning curves on D4RL tasks are provided in \Reffig{fig:learning_curves}, \Reffig{fig:adroit_learning_curves}, and \Reffig{fig:expert_random_learning_curves}.
The curves are averaged over 4 random seeds, with the shaded area representing the standard deviation across seeds.

\begin{figure}[t]
    \centering
    \includegraphics[width=1\textwidth]{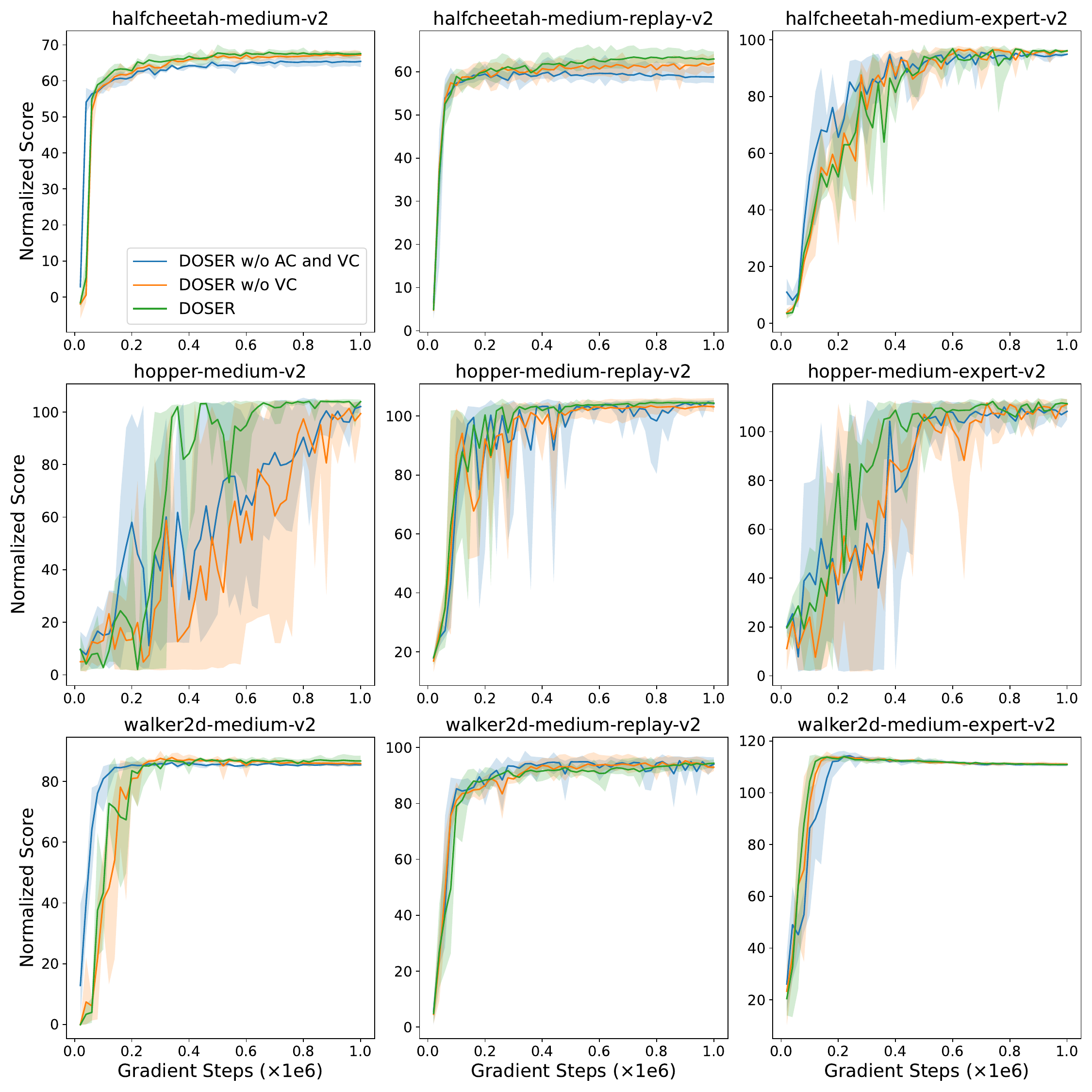} 
    \caption{Learning curves of the component ablation study on Gym-MuJoCo tasks.}
    \label{fig:learning_curves}
\end{figure}

\begin{figure}[h]
    \centering
    \includegraphics[width=0.7\textwidth]{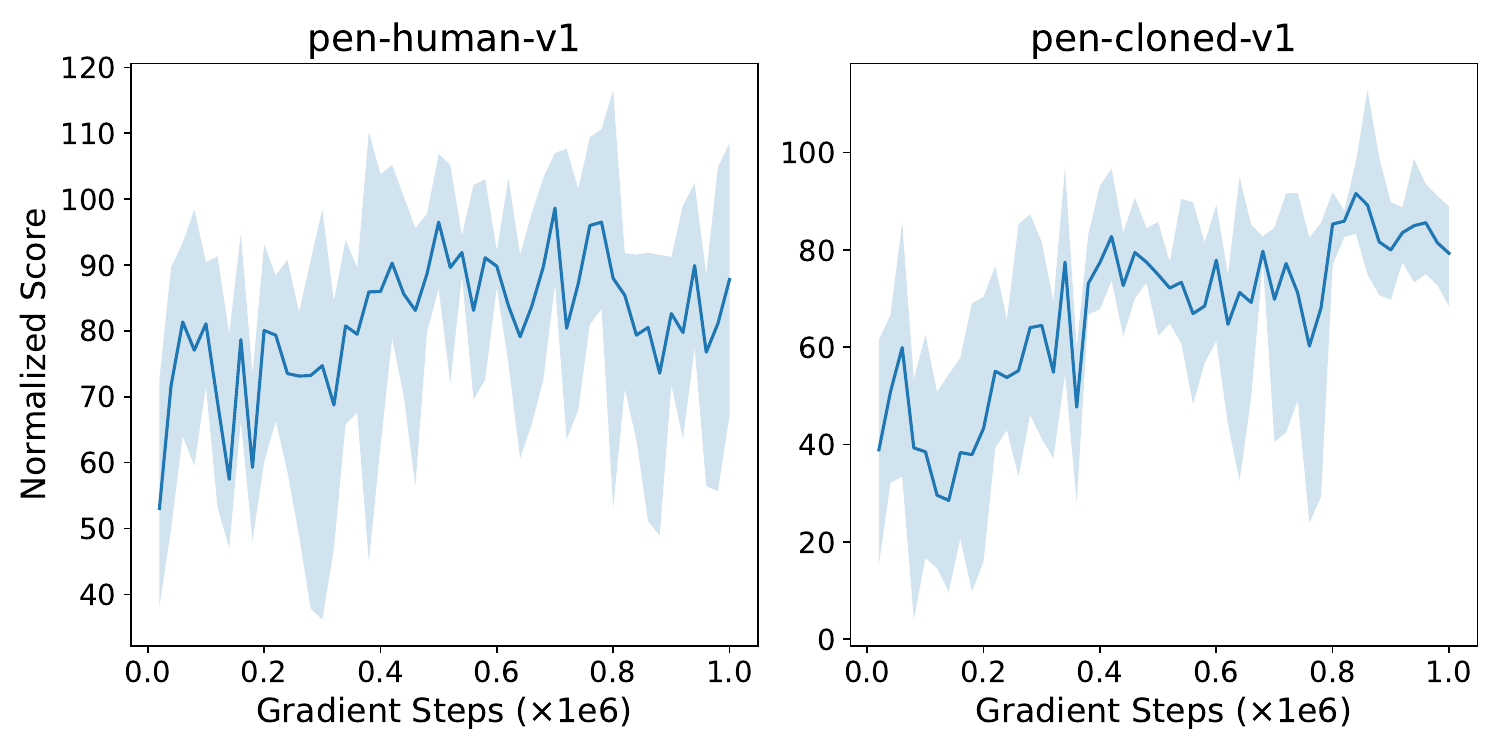} 
    \caption{Learning curves on Adroit tasks.}
    \label{fig:adroit_learning_curves}
\end{figure}

\begin{figure}[t]
    \centering
    \includegraphics[width=1\textwidth]{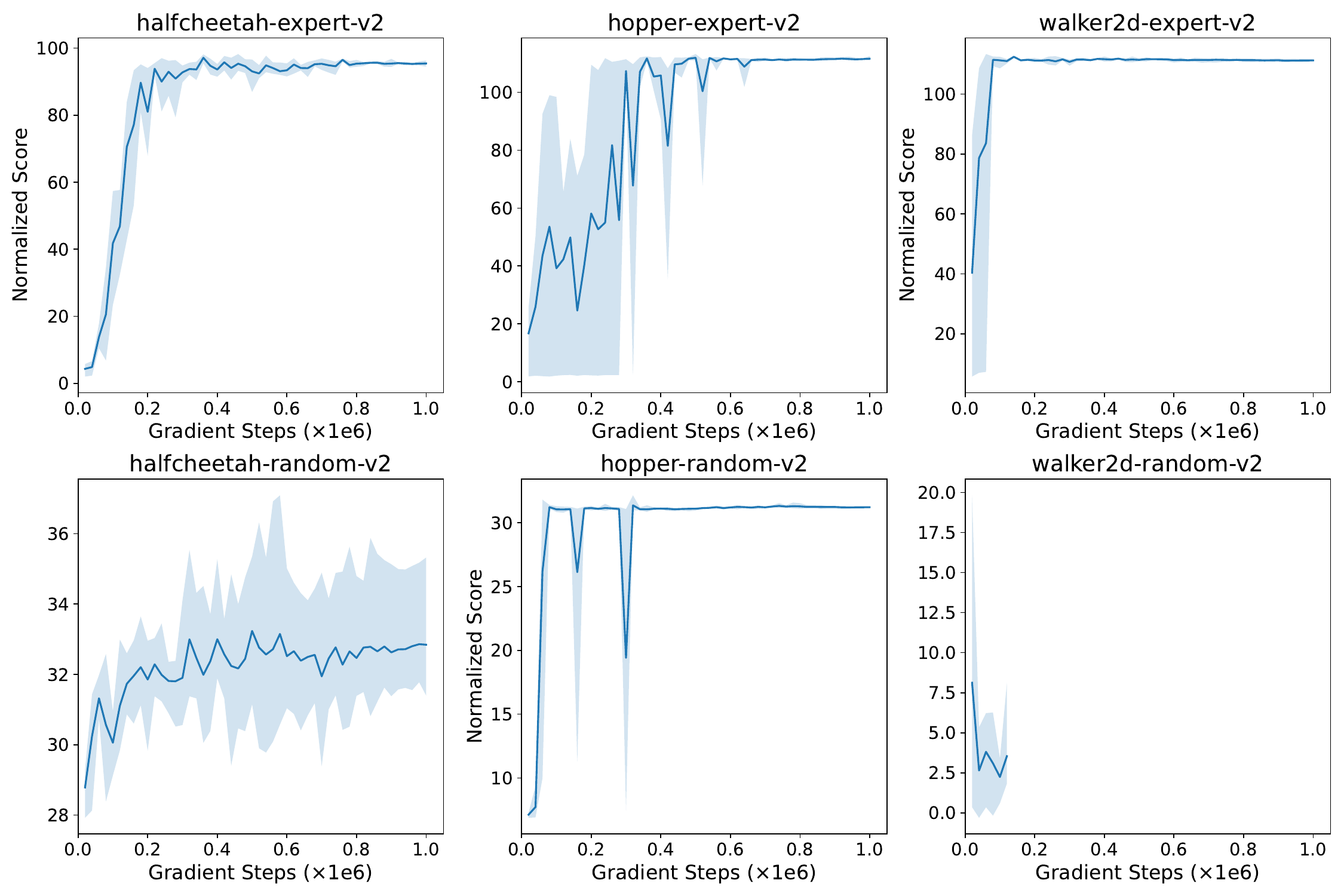} 
    \caption{Learning curves on Gym-MuJoCo expert and random tasks.}
    \label{fig:expert_random_learning_curves}
\end{figure}

\section{The Use of Large Language Models (LLMs)}
We acknowledge the assistance of GPT-5 in proofreading and polishing the manuscript. The authors bear full responsibility for the content and presentation of this paper.

\end{document}